\documentclass[11pt]{article}

\usepackage[margin=1in]{geometry}
\usepackage{amsthm}

\usepackage{amsfonts,amsmath,amssymb}
\usepackage{aliascnt}
\usepackage{mathtools,bm,mathrsfs,mleftright}
\usepackage{xcolor,enumitem,autobreak}
\usepackage{graphicx}
\allowdisplaybreaks[4]
\usepackage{multirow}
\usepackage{array,booktabs,tabularx}
\usepackage[numbers,sort&compress]{natbib}
\usepackage[colorlinks=true,linkcolor=blue,citecolor=blue,urlcolor=blue]{hyperref}
\usepackage{cleveref}

\usepackage{silence}

\theoremstyle{plain}
\newtheorem{theorem}{Theorem}[section]

\newaliascnt{lemma}{theorem}
\newtheorem{lemma}[lemma]{Lemma}
\aliascntresetthe{lemma}

\newaliascnt{proposition}{theorem}
\newtheorem{proposition}[proposition]{Proposition}
\aliascntresetthe{proposition}

\newaliascnt{corollary}{theorem}
\newtheorem{corollary}[corollary]{Corollary}
\aliascntresetthe{corollary}

\theoremstyle{definition}
\newaliascnt{definition}{theorem}
\newtheorem{definition}[definition]{Definition}
\aliascntresetthe{definition}

\newtheorem{assumption}{Assumption}

\theoremstyle{remark}
\newaliascnt{remark}{theorem}
\newtheorem{remark}[remark]{Remark}
\aliascntresetthe{remark}

\newtheorem{example}{Example}[section]

\crefname{theorem}{Theorem}{Theorems}
\Crefname{theorem}{Theorem}{Theorems}
\crefname{lemma}{Lemma}{Lemmas}
\Crefname{lemma}{Lemma}{Lemmas}
\crefname{corollary}{Corollary}{Corollaries}
\Crefname{corollary}{Corollary}{Corollaries}
\crefname{proposition}{Proposition}{Propositions}
\Crefname{proposition}{Proposition}{Propositions}
\crefname{definition}{Definition}{Definitions}
\Crefname{definition}{Definition}{Definitions}
\crefname{remark}{Remark}{Remarks}
\Crefname{remark}{Remark}{Remarks}
\crefname{example}{Example}{Examples}
\Crefname{example}{Example}{Examples}
\crefname{assumption}{Assumption}{Assumptions}
\Crefname{assumption}{Assumption}{Assumptions}
\crefname{condition}{Condition}{Conditions}
\Crefname{condition}{Condition}{Conditions}
\crefname{section}{Section}{Sections}
\Crefname{section}{Section}{Sections}
\crefname{subsection}{Subsection}{Subsections}
\Crefname{subsection}{Subsection}{Subsections}
\crefname{subsubsection}{Subsection}{Subsections}
\Crefname{subsubsection}{Subsection}{Subsections}
\crefname{figure}{Figure}{Figures}
\Crefname{figure}{Figure}{Figures}
\crefname{table}{Table}{Tables}
\Crefname{table}{Table}{Tables}

\crefformat{equation}{(#2#1#3)}
\crefrangeformat{equation}{(#3#1#4)--(#5#2#6)}
\crefmultiformat{equation}{(#2#1#3)}{ and (#2#1#3)}{, (#2#1#3)}{, and (#2#1#3)}

\newcommand{\R}{\mathbb{R}}
\newcommand{\E}{\mathbb{E}}

\newcommand{\bbP}{\mathbb{P}}

\newcommand{\T}{{\top}}
\renewcommand{\Pr}{\bbP}

\DeclareMathOperator*{\supp}{Supp}

\DeclareMathOperator{\Tr}{Tr}

\newcommand{\mca}{\mathcal}

\newcommand{\xk}[1]{\left(#1\right)}
\newcommand{\zk}[1]{\left[#1\right]}
\newcommand{\dk}[1]{\left\{#1\right\}}

\providecommand{\ang}[1]{\left\langle{#1}\right\rangle}

\providecommand{\abs}[1]{\left\lvert{#1}\right\rvert}
\providecommand{\norm}[1]{\left\lVert{#1}\right\rVert}

\providecommand{\dd}{~\mathrm{d}}

\providecommand{\bibcommenthead}{}

\hypersetup{colorlinks=true,linkcolor=blue,citecolor=blue,urlcolor=blue}

\begin{document}

\title{A Van Trees Lower Bound for Fully Interactive Differentially Private Federated Learning}

\author{
T. Tony Cai\thanks{Department of Statistics and Data Science, The Wharton School, University of Pennsylvania, Philadelphia, PA, USA. Email: \texttt{tcai@wharton.upenn.edu}.}
\and
Yicheng Li\thanks{KLATASDS-MOE, School of Statistics, East China Normal University, Shanghai, China. Email: \texttt{ycli@sfs.ecnu.edu.cn}.}
}

\date{July 8, 2026}

\maketitle

\begin{abstract}
Federated differentially private protocols can communicate over many adaptive rounds and reuse each client's local samples.
Existing lower bound arguments for federated DP are often restricted to noninteractive protocols or fresh batch decompositions, so the fundamental information-theoretic limit of estimation under fully interactive protocols remains unknown.
We establish a federated van Trees inequality for parameter estimation under squared \(\ell_2\) loss from any complete public transcript satisfying a clientwise zCDP constraint at the sample level.
A scalar trace form covers homogeneous experiments, while a matrix form preserves directional Fisher geometry in heterogeneous experiments where different clients are informative in different subspaces.
Together with existing upper bounds for the corresponding problems, these results identify the minimax rates for various statistical problems including mean estimation, linear regression, nonparametric regression, and functional mean estimation over the full class of interactive public-transcript protocols.
For these problems, arbitrary public interaction and repeated sample reuse do not improve the rate over simpler restricted protocols.
The key technical ingredient in our paper is a contraction inequality for the Fisher information in the transcript:
each client's contribution is bounded both by the Fisher information in its local experiment and by its total privacy budget.
\end{abstract}

\noindent\textbf{Keywords:} differential privacy, federated learning, van Trees inequality, lower bounds, Fisher information

\medskip

\section{Introduction}
\label{sec:introduction}

Differential privacy (DP)~\citep{dwork2006_CalibratingNoise,dwork2014_AlgorithmicFoundations} is a widely adopted rigorous mathematical framework for quantifying the privacy guarantees of randomized algorithms.
A common setting is the \emph{central} DP model, where a trusted curator has access to the entire dataset and produces a differentially private output.
In many modern applications, however, data are distributed across multiple clients, institutions, or devices, and the raw local datasets cannot be directly pooled because of privacy, legal, ownership, communication, or infrastructure constraints.
This setting is often referred to as \emph{federated} DP or differentially private federated learning.
A fundamental question in this area is to understand the tradeoffs~\citep{cai2021_CostPrivacy} between privacy and utility: how much accuracy must be sacrificed to achieve a prescribed level of privacy?

Federated learning has become an important paradigm for collaborative statistical learning from decentralized data.
Examples arise in biomedical studies across hospitals, mobile and edge-device learning, financial and commercial data analysis across organizations, multi-institution scientific studies, and privacy-sensitive public-sector applications.
In these problems, the goal is not merely to protect a centrally pooled dataset, but to enable learning across many data holders while respecting client-level constraints, heterogeneous sample sizes, different privacy requirements, and limited communication.
These features make federated learning both practically attractive and statistically distinct from classical centralized analysis.

From a theoretical and methodological perspective, federated learning introduces several new phenomena.
First, information is distributed unevenly across clients: some clients may have larger samples, stronger privacy budgets, better design distributions, or more relevant data for the target task.
Second, learning protocols can be interactive: the server may adapt later queries to earlier public messages, and each client may reuse its local samples across many rounds.
Third, the privacy constraint is naturally imposed clientwise, often on the complete public transcript rather than on a single release.
These features create statistical questions that do not appear in the same form under central DP or local DP.
In particular, one must understand how privacy budgets, sample sizes, client heterogeneity, interaction, and sample reuse jointly determine the amount of statistical information that can be extracted.

A line of work has been devoted to establishing lower bounds for the minimax risk of central DP estimation.
Early minimax formulations and testing reductions were followed by private analogues of Le Cam's method, Fano's inequality, and Assouad's lemma~\citep{barber2014_PrivacyStatistical,acharya2020_DifferentiallyPrivate}.
For high-dimensional and approximate DP problems, fingerprinting and tracing arguments provide another powerful route~\citep{bun2014_FingerprintingCodes,dwork2015_RobustTraceability,kamath2022_NewLower,peter2024_SmoothLower}, and the score attack framework turns these ideas into a general likelihood score method~\citep{cai2021_CostPrivacy,cai2023_ScoreAttack}.
More recently, Fisher information contraction and private van Trees inequalities have yielded smooth Bayesian lower bound tools under zCDP~\citep{cai2026_MinimaxAdaptive}.
These results are all focused on the central DP model.

In federated learning, the distributed and interactive nature of the protocol creates additional challenges for lower bound theory.
Recent work has studied lower bounds for federated DP in several concrete statistical problems~\citep{cai2024_FederatedNonparametric,li2024_FederatedTransfer,cai2024_OptimalFederated,auddy2024_MinimaxAdaptiveTransfer,xue2024_OptimalEstimation,hung2025_OptimalCox,cai2025_CostAdaptation}.
These results are sharp for their target problems, but their lower bound arguments are tailored to specific models and typically restricted to noninteractive releases, one-pass or chained sequential transcripts, or fresh-batch decompositions that separate communication rounds.
Such arguments do not rule out the possibility that a fully interactive protocol, with arbitrary public transcript interactions and repeated reuse of the same local samples, can achieve better performance than the restricted protocols covered by existing lower bounds.

Therefore, a central question in the theory of federated DP estimation is to determine the information-theoretic limits of estimation under the most general class of public-transcript protocols.
In this class, each client message may depend on the client's entire local dataset and on all previously public messages, the same local dataset may be reused across arbitrary adaptive rounds, and each client is charged only by its total sample-level privacy budget for the complete transcript.
To our knowledge, establishing lower bounds for this broad class of fully interactive federated DP protocols has remained an important open problem.

\subsection{Our contributions}
\label{subsec:our-contributions}

In this paper, we answer the preceding question by developing a federated van Trees lower bound for estimating finite-dimensional parameters under squared \(\ell_2\) loss from complete public transcripts.
In the formal model, the privacy constraint is imposed on the complete ordered public transcript.
This transcript includes public query choices, schedules, participation decisions, stopping rules, and public randomness, and the analysis does not require a round-by-round privacy allocation or a fresh-sample decomposition.
Equivalently, all public information that can affect later communication is part of the transcript.
This model excludes hidden trusted server state, secure aggregation or cross-client aggregation not recorded in the public transcript, and unobserved shared private randomness that can affect future public messages.

We also develop a matrix extension that retains the directional Fisher geometry of heterogeneous client experiments.
A trace bound summarizes the transcript by total Fisher information and is sufficient in homogeneous or isotropic settings, but it can lose which parameter directions are observed by which clients.
The matrix form instead decomposes the transcript Fisher information into clientwise positive semidefinite contributions, each constrained by both the client's raw Fisher information and a whitened trace budget determined by its zCDP parameter.
This directional information is needed in heterogeneous federated problems, such as missing coordinates, anisotropic random designs, and target domain prediction, where weakly covered or target-relevant directions rather than total information determine the risk.

Our van Trees inequalities are directly applicable once the Fisher information of the local data generating distributions and the prior is available.
The scalar or matrix inequality then yields a minimax risk lower bound without analyzing a particular interaction pattern.
For the problems considered below, this gives lower bounds over the full class of interactive protocols with public transcripts.
Together with existing upper bounds, these lower bounds identify the minimax rates and show that simpler one-round, noninteractive, or sample-splitting protocols remain rate optimal even after the algorithm class is enlarged to arbitrary public-transcript interaction.
\Cref{tab:known-rates-full-transcript} summarizes the resulting rates, the matching constructions, and the scope of the earlier lower bounds.
As special cases of federated DP, our results also recover known lower bounds under central or local DP.

\begin{table}[t]
  \centering
  \small
  \setlength{\tabcolsep}{3pt}
  \renewcommand{\arraystretch}{1.18}
  \begin{tabularx}{\textwidth}{@{}
    >{\raggedright\arraybackslash}p{0.28\textwidth}|
    >{\raggedright\arraybackslash}X|
    >{\raggedright\arraybackslash}p{0.17\textwidth}
    >{\raggedright\arraybackslash}p{0.17\textwidth}
    @{}}
    \toprule
    Problem
    & Minimax rate
    & Matching construction
    & Earlier lower-bound scope
    \\
    \midrule
    Gaussian mean estimation
    & \(\displaystyle
      \frac{d}{mn}
      +
      \frac{d^2}{\rho m n^2}\)
    & private weighted mean
    & non-interactive
    \\
    Linear regression
    & \(\displaystyle
      \frac{d}{mn}
      +
      \frac{d^2}{\rho m n^2}\)
    & private weighted GD
    & sample splitting
    \\
    Target domain prediction with \(q\) relevant clients, \(r=1\)
    & \(\displaystyle
      \frac{1}{qn}
      +
      \frac{1}{q\rho n^2}\)
    & \citep{li2024_FederatedTransfer}
    & non-interactive
    \\
    Target domain prediction with \(q\) relevant clients, \(r=d\)
    & \(\displaystyle
      \frac{d}{qn}
      +
      \frac{d^2}{q\rho n^2}\)
    & \citep{li2024_FederatedTransfer}
    & sample splitting
    \\
    Nonparametric regression over \(H^\alpha\) with uniform design on \([0,1]\)
    & \(\displaystyle
      (mn)^{-\frac{2\alpha}{2\alpha+1}}
      +
      (\rho m n^2)^{-\frac{\alpha}{\alpha+1}}\)
    & \citep{cai2024_OptimalFederated}
    & non-interactive
    \\
    Functional mean estimation, independent design
    & \(\displaystyle
      \frac{1}{mn}
      +
      (mkn)^{-\frac{2\alpha}{2\alpha+1}}
      +
      (\rho m k n^2)^{-\frac{\alpha}{\alpha+1}}
      +
      \frac{1}{\rho m n^2}\)
    & \citep{cai2024_OptimalFederatedFunctional,xue2024_OptimalEstimation}
    & sample splitting
    \\
    Functional mean estimation, common design
    & \(\displaystyle
      \frac{1}{mn}
      +
      k^{-2\alpha}
      +
      (\rho m n^2)^{-\frac{2\alpha}{2\alpha+1}}\)
    & \citep{cai2024_OptimalFederatedFunctional}
    & non-interactive
    \\
    \bottomrule
  \end{tabularx}
  \caption{
    Minimax rates obtained by combining the full public transcript lower
    bounds proved here with the matching constructions or references.
    Rates are shown in the homogeneous case \(n_l=n\) and \(\rho_l=\rho\).
    Upper bounds in the literature are translated to the zCDP scale with logarithmic factors ignored.
    Here \(m\) is the number of clients, \(n\) is the sample size per client,
    \(d\) is the parameter dimension, \(q\) is the number of relevant clients,
    \(\alpha\) is the smoothness parameter, and \(k\) is the number of measurements per curve.
  }
  \label{tab:known-rates-full-transcript}
\end{table}
 
\subsection{Related work}
\label{subsec:related-work}

\paragraph{Lower bounds in central DP}
The central DP lower bound literature adapts several classical minimax techniques to privacy constrained estimation.
Private versions of Le Cam, Fano, and Assouad inequalities give general tools
based on testing reductions for central DP statistical estimation~\citep{acharya2020_DifferentiallyPrivate}.
These reductions are conceptually close to classical packing and hypercube arguments, and they are especially useful when the private difficulty can be encoded through pairwise or coordinatewise testing instances.

Another major line is based on fingerprinting, tracing, and score attacks.
Fingerprinting codes were first used to prove lower bounds for approximate DP query release and empirical risk problems~\citep{bun2014_FingerprintingCodes,bassily2014_DifferentiallyPrivate}, and robust tracing arguments later removed some distributional restrictions~\citep{dwork2015_RobustTraceability}.
For statistical estimation, generalized fingerprinting lemmas have produced sharp private lower bounds for Gaussian covariance and other exponential family problems~\citep{kamath2022_NewLower,peter2024_SmoothLower}.
The score attack of \citet{cai2023_ScoreAttack} abstracts this approach through likelihood scores, recovering and extending sharp lower bounds for generalized linear models, ranking, sparse models, and nonparametric regression.
The earlier ``cost of privacy'' analysis of \citet{cai2021_CostPrivacy} used related tracing ideas for mean estimation and linear regression.
More recently, a Fisher information contraction was developed in \citet{cai2026_MinimaxAdaptive} to give a general van Trees lower bound for central DP estimation, which is especially well suited to dependent priors and matrix parameters.
All of these results are central DP results: the statistic is a single private output of a pooled sample, so they do not address clientwise privacy budgets or adaptive public transcripts.

\paragraph{Lower bounds in local DP}
The local DP model is more restrictive than central DP: each observation, user, or client must privatize its own message before it is observed by any curator.
The minimax theory of local privacy was initiated by \citet{duchi2013_LocalPrivacy,duchi2014_LocalPrivacy,duchi2018_MinimaxOptimal}, who developed private data processing inequalities and sharp minimax rates for locally private estimation.

Subsequent work has refined the information theoretic and interactive aspects of local privacy.
\citet{acharya2023_UnifiedLower} give unified lower bounds for interactive high-dimensional estimation under information constraints, a framework that includes local privacy and communication constraints.
Fisher information viewpoints have also been developed for distributed estimation under blackboard protocols and for local DP channels~\citep{barnes2019_FisherInformation,barnes2020_FisherInformation,chen2024_LqLower}.
Recent work at the user level under local DP allows each user to hold multiple observations and protects the entire local collection, establishing tight distribution estimation rates and broader phase transition lower bounds~\citep{acharya2022_DiscreteDistribution,kent2024_RateOptimality}.
Other work studies testing and communication limited local privacy~\citep{pensia2025_SimpleBinary}, nonparametric density and functional estimation under local privacy~\citep{butucea2020_LocalDifferential,butucea2022_InteractiveNoninteractive,butucea2025_NonparametricSpectral,kroll2024_NonparametricSpectral}, and high-dimensional regression under local privacy~\citep{zhu2023_ImprovedAnalysis}.

However, the techniques used in local DP lower bounds are not directly applicable to the central or federated DP setting here.
If applied naively to the central DP setting, these approaches often lose a factor of \(n\) in the effective sample size through a rough conversion by group privacy.

\paragraph{Lower bounds in federated DP}
A growing literature studies statistical and optimization lower bounds under federated or distributed DP constraints.
For convex federated learning without a trusted server, \citet{lowy2024_PrivateFederated} establish nearly tight excess risk bounds under inter-silo record level privacy and compare this trust model with local, shuffle, and central DP.
\citet{zhang2024_DifferentiallyPrivate} study how server trust affects private federated estimation and inference, while related distributed learning work considers robustness and privacy tradeoffs under distributed mechanisms~\citep{allouah2023_PrivacyrobustnessutilityTrilemma}.
These works clarify how the privacy model and trust assumptions affect achievable risk, but their lower bound tools are tailored to convex optimization or specific inferential tasks.

In nonparametric and transfer learning problems, recent papers have shown that heterogeneous client sample sizes and privacy budgets change both the optimal aggregation rule and the minimax rate.
Results include federated nonparametric regression and testing under heterogeneous distributed DP~\citep{cai2024_OptimalFederated,cai2024_FederatedNonparametric}, federated transfer learning and nonparametric classification under distributed DP~\citep{li2024_FederatedTransfer,auddy2024_MinimaxAdaptiveTransfer}, adaptive federated density estimation~\citep{cai2025_CostAdaptation}, private distributed functional data analysis~\citep{xue2024_OptimalEstimation}, federated Cox regression and cumulative hazard estimation~\citep{hung2025_OptimalCox}, and federated PCA/spiked covariance estimation~\citep{li2024_FederatedPCA}.

The present work is closest in spirit to the van Trees lower bound arguments in heterogeneous federated nonparametric and matrix estimation~\citep{cai2024_OptimalFederated,li2024_FederatedPCA,xue2024_OptimalEstimation}.
However, existing proofs either work with noninteractive local releases, use one-pass, chained sequential, or fresh batch decompositions to separate communication rounds, or exploit problem specific reductions.
The distinction here is the scope of the lower bound: each client is charged only its total sample-level zCDP budget, arbitrary public transcript interactions are allowed, and the same local dataset may be reused across adaptive rounds.

\paragraph{Notation}
For nonnegative quantities \(a\) and \(b\), we write \(a\lesssim b\) if
\(a\le Cb\) for a constant \(C\) independent of the main asymptotic parameters,
such as dimensions, sample sizes, privacy budgets, and the privacy mechanism.
We write \(a\gtrsim b\) if \(b\lesssim a\).
The implicit constants may depend on fixed regularity constants and on fixed
upper bounds imposed in the statement under consideration.
For symmetric matrices \(A\) and \(B\), \(A\preceq B\) means that \(B-A\) is
positive semidefinite, and \(A\succeq B\) is defined analogously.
For a matrix \(J\), \(J^\dagger\) denotes its Moore--Penrose pseudoinverse.
Denote by $P_A$ the orthogonal projection onto the range of a matrix \(A\).
 \section{Federated Learning Setup}
\label{sec:setting}

Differential privacy requires that the output distribution of a randomized algorithm be stable under the modification of a single sample.
Throughout the paper, adjacency is imposed at the sample level within a client: two local datasets \(X^{(l)}\) and \(\widetilde X^{(l)}\) are adjacent if they differ in exactly one observation and have the same size.
Thus the privacy unit is one sample held by a client, not the whole client dataset.
In this paper, we mainly work with zero-concentrated differential privacy (zCDP)~\citep{dwork2016_ConcentratedDifferential,bun2016_ConcentratedDifferential}, which is particularly convenient for interactive protocols.
The conversion between zCDP and the more standard \((\epsilon,\delta)\)-DP is
well understood~\citep{dwork2016_ConcentratedDifferential,bun2016_ConcentratedDifferential}:
\(\rho\)-zCDP implies
\((\rho+2\sqrt{\rho\log(1/\delta)},\delta)\)-DP for every \(\delta>0\),
and pure \(\epsilon\)-DP implies \(\epsilon^2/2\)-zCDP.
We denote by $D_\alpha(P\|Q)$ the Rényi divergence of order \(\alpha\) between distributions \(P\) and \(Q\) given by
\( D_\alpha(P\|Q)
=
\frac{1}{\alpha-1}\log\int \left(\frac{\dd P}{\dd Q}\right)^\alpha \dd Q \),
and use it similarly for random variables.

Let \(\mca D\) denote a data domain, \(\mca H\) an auxiliary input space, and \(\mca Y\) an output space.
We first give the definition of zCDP for a single mechanism~\citep{dwork2016_ConcentratedDifferential,bun2016_ConcentratedDifferential}, which is also often considered in the central DP model.

\begin{definition}[\(\rho\)-zCDP]
  \label{def:zcdp}
  A randomized mechanism \(M:\mca D\times\mca H\to\mca Y\) is called
  \(\rho\)-zero-concentrated differentially private if, for every fixed pair of
  adjacent datasets \(X,\tilde{X}\), every auxiliary input \(h\in\mca H\),
  \[
    D_\alpha\left(M(X;h)\middle\|M(\tilde{X};h)\right)
    \le
    \rho\alpha,\quad \forall \alpha > 1.
  \]
\end{definition}

\begin{figure}[!t]
  \centering
  \includegraphics[width=0.98\textwidth]{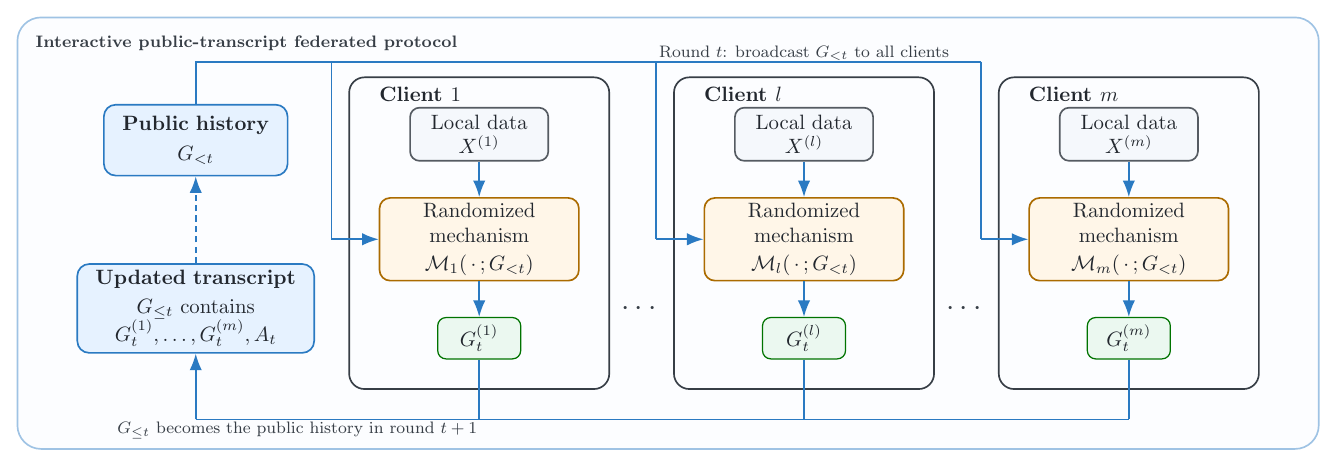}
  \caption{
    One round of an interactive federated protocol with a public transcript.
    At round \(t\), each client receives the public history \(G_{<t}\),
    applies a local randomized algorithm to its local data \(X^{(l)}\), and
    contributes \(G_t^{(l)}\); the public environment then appends the
    public record \(A_t\) at the end of the round to form the updated transcript
    \(G_{\le t}\).
  }
  \label{fig:fed-interaction-round}
\end{figure}

In federated learning, the data are distributed across multiple clients, and the algorithm may consist of multiple rounds of interaction.
Formally, suppose there are \(m\) clients, and client \(l\) holds a local dataset of size $n_{l}$:
\[
  X^{(l)}=(X_1^{(l)},\ldots,X_{n_l}^{(l)}),
  \qquad l=1,\ldots,m.
\]
The federation is coordinated by an untrusted public server.
We use the term public environment for the public coordination layer consisting
of the server, public randomization, and public post-processing records.
Each client must protect its raw data from the public environment as well as
from other clients.
Thus, each client releases only a privatized \emph{transcript} that can depend on the raw data, its own local private history, and other public information.
The transcripts are then public and can be accessed by the public server and all clients.

The learning algorithm is an interactive protocol between the clients and the
server and may be fully adaptive through the public transcript.
At a high level, the protocol starts from an initial public record, each round
broadcasts the current public history to the clients, the clients release local
public messages, and the public environment appends a public record at the end
of the round before the next round begins.
\Cref{fig:fed-interaction-round} illustrates one such round.

Write \(G_0\) for the initial public record.
Client \(l\) may also maintain a private local history \(H_t^{(l)}\),
initialized at \(H_0^{(l)}\), which is not released as part of the public
transcript.
At communication round \( t \), after observing the public history
\(G_{<t}\), client \(l\) produces a public local message and updates its
private history through a \(\theta\)-independent randomized algorithm
\begin{equation}
  \xk{G_t^{(l)},H_t^{(l)}}
  =
  \mca M_l
  \left(
    X^{(l)};G_{<t},H_{t-1}^{(l)}
  \right).
\end{equation}
After the local public messages of round \(t\) are produced, the public
environment may generate a public post-processing record \(A_t\) through a
\(\theta\)-independent public randomized algorithm
\begin{equation}
  A_t
  =
  \mca A
  \left(
    G_{<t},G_t^{(1)},\ldots,G_t^{(m)}
  \right).
\end{equation}
The public history is the collection of all transcripts
\[
  G_t=(G_t^{(1)},\ldots,G_t^{(m)},A_t),
  \qquad
  G_{<t}=G_{\le t-1} = (G_0,G_1,\ldots,G_{t-1}).
\]
Let $T$ be the total rounds of communication.
The complete ordered transcript is \( G=G_{\le T} \), and a federated learning estimator is a function \(\hat{\theta}(G)\).

We now introduce the definition of a zCDP federated protocol, which is the main privacy definition in this paper.

\begin{definition}[zCDP federated protocol]
  \label{def:full-transcript-zcdp}
  An interactive federated protocol with a public transcript is called a
  \(\bm\rho\)-zCDP federated protocol, with \(\bm\rho=(\rho_1,\ldots,\rho_m)\),
  if the complete public transcript \(G\) is $\rho_l$-zCDP as a mechanism of client \(l\)'s data \(X^{(l)}\) conditional on the other clients' data \(X^{(-l)}\).
  Namely, for every client \(l\in[m]\), every fixed value of the other clients' data \(X^{(-l)}\), every adjacent pair
  \(X^{(l)},\widetilde X^{(l)}\),
  \begin{equation}
    D_{\alpha}\xk{
    G(X^{(l)},X^{(-l)}), G(\widetilde{X}^{(l)},X^{(-l)})
    }
    \le \alpha \rho_l,\quad \forall \alpha > 1.
  \end{equation}
\end{definition}

We highlight that the privacy unit in \cref{def:full-transcript-zcdp} is one sample within a client, not the entire client dataset.
Client-level privacy, where two adjacent datasets may differ in all records held by one client, is a different and generally stronger requirement, which falls into the present sample-level formulation as the special case \(n_l=1\).

The following conventions clarify the scope of the transcript model.

\emph{Complete transcript scope.}
The privacy guarantee is imposed on the complete public transcript \(G\), not
on any single message in isolation.
Thus the definition allows arbitrary adaptivity and sample reuse across rounds.
The unreleased local states \(H_t^{(l)}\) themselves are not required to satisfy
a privacy guarantee; privacy is imposed only on what becomes public.
Inactive clients are represented by deterministic null messages.
If a nominal communication round has an internal order of local transmissions,
we refine the indexing so that \(t\) ranges over the resulting ordered public
transmissions.

\emph{Public records.}
The public records contain all public control information used by the protocol.
The initial record \(G_0\) may include initial public randomness and any initial
public query, schedule, or broadcast information.
The record \(A_t\) appended after round \(t\) contains the public computation
produced after that round, including server broadcasts, query choices, scheduling and
participation decisions, stopping indicators, public randomness, and any other
public information used in later rounds.
If the protocol uses random stopping times, data dependent schedules, or
adaptive participation rules, those decisions are part of \(G\).

\emph{Protocol kernels.}
The local algorithms \(\mca M_l\) depend only on client \(l\)'s local dataset,
its own private history, and the public history.
The public algorithm \(\mca A\) depends only on already public information.
These randomized algorithms do not explicitly depend on the unknown parameter
\(\theta\), which enters only through the sampling distribution of the
raw data.
If a protocol uses explicitly time indexed local or public algorithms, the round
counter and all time indexed instructions are included in the public history
\(G_{<t}\).
Different clients' private randomizations and local state updates are
independent conditional on the full data and public history.
No unobserved shared private randomness, hidden secure aggregation, hidden
cross client aggregation, or trusted server private state may affect future
communication unless it is included in the public transcript.

\emph{Nonexamples.}
The definition does not include a secure aggregation protocol whose hidden
partial sums or masks affect later communication without being recorded in
\(G\).
It also does not include a trusted server that stores a private state derived
from client messages and uses that state to choose later queries without adding
the state to the transcript.
Similarly, clients may not rely on shared private randomness that is unobserved
in \(G\) but influences future public messages.

Below, we give some examples of protocols satisfying this definition.
For simplicity, these examples take no unreleased local history, so
\(\mca M_l\) outputs only the public message \(G_t^{(l)}\).

\begin{example}[One pass sequential protocol]
  \label{ex:one-pass-sequential}
  Consider a one pass sequential protocol in which clients \(1,\ldots,m\) speak once in order.
  At step \(l\), client \(l\) releases
  \[
    G_l^{(l)}=\mca M_l(X^{(l)};G_{<l}),\qquad l=1,\ldots,m,
  \]
  while inactive clients release deterministic null messages and the
  public record \(A_l\) at the end of the round may be empty.
  Suppose that, for every realized public history \(g_{<l}\), the local
  mechanism \(\mca M_l(\cdot;g_{<l})\) is \(\rho_l\)-zCDP as a mechanism of
  \(X^{(l)}\).
  Then the complete transcript \(G_{\le m}\) is a \(\bm\rho\)-zCDP federated protocol.

\end{example}

\begin{example}[Roundwise protocol]
  \label{ex:roundwise-protocol}
  Consider a roundwise zCDP protocol in which
  the local mechanism \(\mca M_l(\cdot;g_{<t})\) at round \(t\) is
  \(\rho_{l,t}\)-zCDP as a mechanism of \(X^{(l)}\), conditional on every
  realized public history \(g_{<t}\).
  Then, by adaptive composition of zCDP, the complete transcript is a \(\bm\rho\)-zCDP federated protocol with \(\rho_l=\sum_{t=1}^T\rho_{l,t}\).
\end{example}

\begin{example}[General sequential protocol]
  \label{ex:ordered-sequential-protocol}
  Consider a general sequential protocol.
  Let \(a_t\in[m]\) be the active client at transmission \(t\), and write
  \[
    G_t^{(a_t)}=\mca M_{a_t}(X^{(a_t)};G_{<t}),\qquad t=1,\ldots,T,
  \]
  with deterministic null messages from inactive clients and with any
  public post-processing at the end of the round included in \(A_t\).
  Suppose that the local mechanism \(\mca M_{a_t}(\cdot;g_{<t})\) is
  \(\rho_{a_t,t}\)-zCDP as a mechanism of \(X^{(a_t)}\), conditional on every
  realized public history \(g_{<t}\).
  This is a special case of \cref{ex:roundwise-protocol}: in each transmission,
  all inactive clients send deterministic null messages, equivalently \(\rho_{l,t}=0\) for \(l\ne a_t\).
  Hence the complete ordered transcript is a \(\bm\rho\)-zCDP federated protocol with
  \( \rho_l=\sum_{t:a_t=l}\rho_{l,t}\).
\end{example}
 \section{General Federated van Trees Lower Bounds}
\label{sec:general-van-trees}

This section presents the paper's main lower bound theorems for fully
interactive zCDP federated protocols.
We begin with a scalar trace form for homogeneous client experiments,
which is the simplest version and the one used for most rate calculations.
We then give a matrix extension for heterogeneous experiments, where different
clients may be informative in different parameter directions.
For presentation, we defer the regularity conditions to
\cref{ass:van-trees-regularity}, including the standard differentiability,
integrability, boundary, and transcript regularity conditions needed to apply
the classical van Trees inequality.

\subsection{Scalar trace bound for homogeneous clients}
\label{subsec:trace-fed-vantrees}

We start with the homogeneous experiment, where all clients sample from the
same parametric model but may have different sample sizes and privacy budgets.

Client \(l\) holds independent local observations
\begin{equation}
  \label{eq:homogeneous-independent-client-model}
  X_i^{(l)}\stackrel{\mathrm{i.i.d.}}{\sim}P_\theta,
  \qquad
  i=1,\ldots,n_l,\quad l=1,\ldots,m,
  \qquad
  \theta\in\Theta\subseteq\R^p,
\end{equation}
and the local datasets are independent across clients.
Let \(p_\theta\) be a density of \(P_\theta\) with respect to a
\(\theta\)-independent dominating measure.
Define
\[
  s(x;\theta)=\nabla_\theta\log p_\theta(x),
  \qquad
  I_x(\theta)=\E_\theta[s(X;\theta)s(X;\theta)^\T]
\]
to be the score and Fisher information matrix of one observation.
For comparison with the heterogeneous notation used below, this is the case
\(s_{li}=s\), with
\[
  S_l(\theta)=\sum_{i=1}^{n_l}s(X_i^{(l)};\theta),
  \qquad
  J_l(\theta)=n_l I_x(\theta).
\]
The complete public transcript is denoted by \(G\), and all estimators in this
section are functions of \(G\).
The notation \(\E_\pi\E_\theta\) means that \(\theta\sim\pi\) is drawn first
and the data and transcript are then generated under \(P_\theta\).

For a clientwise zCDP budget \(\bm\rho=(\rho_1,\ldots,\rho_m)\), write
\[
  \kappa_l=e^{2\rho_l}-1,\qquad l=1,\ldots,m.
\]
The first ingredient is an information contraction inequality for the complete
public transcript.
It is the only point at which the interactive protocol and the clientwise zCDP
constraint enter the scalar lower bound.

\begin{proposition}[Full transcript trace information contraction]
  \label{prop:trace-transcript-information-contraction}
  Under model \cref{eq:homogeneous-independent-client-model} and
  \cref{ass:van-trees-regularity},
  let \(G\) be the complete public transcript of a \(\bm\rho\)-zCDP federated protocol.
  Then, for every fixed \(\theta\),
  \begin{equation}
    \label{eq:trace-transcript-information-contraction}
    \Tr I_G(\theta)
    \le
    \sum_{l=1}^m
    \left[
      \kappa_l n_l^2\norm{I_x(\theta)}_{\mathrm{op}}
      \wedge
      n_l\Tr I_x(\theta)
    \right].
  \end{equation}
\end{proposition}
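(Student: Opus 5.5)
We first express the transcript score through the clients' scores, then bound each client's contribution in two ways: by its raw Fisher information, and by its privacy budget.

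\emph{Step 1: score decomposition.} Write the joint law of \((X,G)\) as the product of the data density \(\prod_{l,i}p_\theta(X_i^{(l)})\) and the \(\theta\)-free protocol kernels. Under \cref{ass:van-trees-regularity}, differentiation under the integral gives
\[
  s_G(\theta)=\nabla_\theta\log p_\theta(G)=\E_\theta[S(\theta)\mid G],
  \qquad S(\theta)=\sum_{l=1}^m S_l(\theta).
\]
Hence
\[
  \Tr I_G(\theta)=\sum_{l=1}^m\E_\theta\bigl\langle s_G,\,\E_\theta[S_l\mid G]\bigr\rangle .
\]
Set \(V_l=\E_\theta[S_l\mid G]\). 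The cross terms are controlled by orthogonality. Given \(G\), the clients' data are conditionally independent. This follows from the product structure: the kernels factor across clients given the public history, and \(A_t\) depends only on public information. Each \(S_l\) is also centered, so \(\E_\theta\ang{V_l,V_k}=0\) for \(l\neq k\). We verify this by a martingale argument over rounds: the increments of \(\E[S_l\mid G_{\le t}]\) and \(\E[S_k\mid G_{\le t}]\) are conditionally uncorrelated given \(G_{<t}\), because within a round the messages \(G_t^{(l)}\) are generated independently given the data and the history, and \(A_t\) is public post-processing. If this orthogonality is delicate, we fall back on the Cauchy--Schwarz inequality
\[
  \Tr I_G=\E\Bigl\|\sum_l V_l\Bigr\|^2 .
\]
The goal is
\[
  \Tr I_G(\theta)=\sum_{l=1}^m\E_\theta\norm{V_l}^2 ,
\]
which reduces the claim to a per-client bound \(\E\norm{V_l}^2\le \kappa_l n_l^2\norm{I_x}_{\mathrm{op}}\wedge n_l\Tr I_x\).

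\emph{Step 2: data-processing bound.} By Jensen's inequality for conditional expectation,
\[
  \E\norm{V_l}^2\le\E\norm{S_l}^2=n_l\Tr I_x(\theta),
\]
since the scores of the \(n_l\) i.i.d.\ observations are independent and centered.

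\emph{Step 3: privacy bound (main obstacle).} Fix a unit direction \(u\) and write
\[
  \ang{u,V_l}=\sum_{i=1}^{n_l}\E\bigl[\ang{u,s(X_i^{(l)})}\mid G\bigr].
\]
The approach follows the central-DP contraction of \citet{cai2026_MinimaxAdaptive}: replace \(X_i^{(l)}\) by an independent copy \(\widetilde X_i^{(l)}\), and hold \(X^{(-l)}\) fixed, which \cref{def:full-transcript-zcdp} allows. Let \(\widetilde G\) be the transcript run on the adjacent dataset; it is independent of \(X_i^{(l)}\), so \(\E[\ang{u,s(X_i^{(l)})}\mid\widetilde G]=0\). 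The difference between the laws of \(G\) and \(\widetilde G\) is controlled through the \(\chi^2\) divergence. Since \(D_2\le 2\rho_l\),
\[
  \chi^2\bigl(G\,\|\,\widetilde G\bigr)\le e^{D_2}-1\le \kappa_l .
\]
Combining this with Cauchy--Schwarz yields, for any square-integrable \(f(G)\),
\[
  \E\bigl[f(G)\ang{u,s(X_i^{(l)})}\bigr]\le\sqrt{\kappa_l\,\E f^2\;u^\T I_x u}.
\]
Taking \(f=\ang{u,V_l}\) and summing over \(i\) gives
\[
  \E\ang{u,V_l}^2\le n_l\sqrt{\kappa_l\,\E\ang{u,V_l}^2\;\norm{I_x}_{\mathrm{op}}},
\]
that is, \(\E\ang{u,V_l}^2\le\kappa_l n_l^2\norm{I_x}_{\mathrm{op}}\).

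Passing from a fixed direction to the trace is the delicate part. Summing over an orthonormal basis would cost a factor \(p\). Instead we take
\[
  f=\frac{\ang{V_l,\,s(X_i^{(l)})}}{\norm{V_l}}
\]
directionally, or equivalently apply the divergence bound to the vector-valued function \(V_l\) via the variational form of \(\chi^2\):
\[
  \E\ang{V_l,s_i}\le\sqrt{\kappa_l\,\E\norm{V_l}^2\,\sup_{\norm{u}=1}u^\T I_x u}.
\]
This works because \(\sup_{\norm{w}\le1}\E\ang{w,s_i}^2=\norm{I_x}_{\mathrm{op}}\) after conditioning on \(G\). This is the step we expect to be hardest. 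It needs the coupling conditional on the other clients and on the private histories to respect \cref{def:full-transcript-zcdp}, and, more importantly, it needs the operator-norm bound on the vector-valued function to be justified rather than merely asserted. Combining Steps 2 and 3 with Step 1 then yields \cref{eq:trace-transcript-information-contraction}.
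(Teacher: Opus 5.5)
Your route is the paper's route. You get exact orthogonality of the client projections $V_l=\E_\theta[S_l\mid G]$ from posterior factorization (\cref{lem:posterior-factorization,cor:matrix-score-decomposition}), use $L^2$ contraction for the raw term, and use the independent-copy $\chi^2$ argument of \citet{cai2026_MinimaxAdaptive} for the privacy term. There are two loose ends.

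First, the Cauchy--Schwarz ``fallback'' in Step~1 does not work as a fallback. $\E\norm{\sum_l V_l}_2^2\le m\sum_l\E\norm{V_l}_2^2$ costs a factor $m$, so orthogonality is essential. Your primary argument does supply it: conditional independence gives $V_lV_k^\T=\E[S_lS_k^\T\mid G]$, hence $\E[V_lV_k^\T]=\E[S_l]\,\E[S_k]^\T=0$ for $l\ne k$.

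Second, the step you flag as hardest closes in one line, and it needs no normalized or directional test function.
\begin{itemize}
\item Condition only on the data $(X^{(l)},\widetilde X_i^{(l)},X^{(-l)})$, not on private histories. The definition integrates the histories out, and they are not privatized.
\item Generate $G$ and $\widetilde G$ with independent internal randomness. The bound $\chi^2\le\kappa_l$ holds for these conditional laws; marginally $G$ and $\widetilde G$ have the same law, so it must be read conditionally.
\item Write $s_i=s(X_i^{(l)};\theta)$ and let $V_l(\cdot)$ be the fixed function $g\mapsto\E_\theta[S_l\mid G=g]$. Apply $\abs{\E_Ph-\E_Qh}^2\le\chi^2(P\|Q)\,\E_Qh^2$ to the scalar $h(g)=\ang{V_l(g),s_i}$, then average and use Jensen.
\end{itemize}
Note that $\E\,h(\widetilde G)=0$. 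Also, $\widetilde G$ is independent of $X_i^{(l)}$ and has the same marginal as $G$. This is where your phrase ``after conditioning on $G$'' goes wrong: conditioning on $G$ is exactly what correlates $s_i$ with $V_l$, whereas the factorization happens under $\widetilde G$. Together these give
\[
  \bigl(\E\ang{V_l(G),s_i}\bigr)^2
  \le
  \kappa_l\,\E\bigl[V_l(G)^\T I_x(\theta)V_l(G)\bigr]
  \le
  \kappa_l\norm{I_x(\theta)}_{\mathrm{op}}\,\E\norm{V_l}_2^2 .
\]
Now sum over $i$, using $\sum_i\E\ang{V_l,s_i}=\E\ang{V_l,S_l}=\E\norm{V_l}_2^2$. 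This gives $\E\norm{V_l}_2^2\le n_l\sqrt{\kappa_l\norm{I_x(\theta)}_{\mathrm{op}}\,\E\norm{V_l}_2^2}$, which is the claim. Your fixed-$u$ bound is then superfluous.

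The only difference from the paper is minor. The paper first passes to the finer $\E_\theta[S_l\mid G,X^{(-l)}]$ by Jensen and invokes the single-client lemma conditionally on $X^{(-l)}$. You use $V_l(G)$ directly as the test function instead. Both are valid, since privacy is needed only for the law of $G$ given the data.
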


The two terms in each clientwise summand have different origins.
The term \(n_l\Tr I_x(\theta)\) is the ordinary Fisher information available in
client \(l\)'s raw sample.
The term \(\kappa_l n_l^2\norm{I_x(\theta)}_{\mathrm{op}}\) is the privacy
upper bound on the Fisher information that can pass through the complete
public transcript under the client \(l\) zCDP budget at the sample level.
The proposition charges the single complete transcript budget \(\rho_l\).
It is therefore compatible with arbitrary adaptivity and repeated reuse of the
same local observations across public rounds.

Combining this contraction with the multivariate van Trees inequality gives the
main scalar lower bound.

\begin{theorem}[Scalar federated van Trees inequality]
  \label{thm:general-fed-dp-vantrees}
  Under model \cref{eq:homogeneous-independent-client-model} and
  \cref{ass:van-trees-regularity}, every estimator
  \(\hat\theta=\hat\theta(G)\) based on a \(\bm\rho\)-zCDP federated protocol
  satisfies
  \[
    \E_\pi\E_\theta\norm{\hat\theta-\theta}_2^2
    \ge
    \frac{p^2}{
      \int_\Theta
      \sum_{l=1}^m
      \left[
        \kappa_l n_l^2\norm{I_x(\theta)}_{\mathrm{op}}
        \wedge
        n_l\Tr I_x(\theta)
      \right]
      \dd\pi(\theta)
      +
      \Tr J_\pi
    }.
  \]
\end{theorem}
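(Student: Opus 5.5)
The plan is to combine the classical multivariate van Trees inequality, in its trace form, with \cref{prop:trace-transcript-information-contraction}, which we take as given. The only model-specific input is the Fisher information of the transcript \(G\). Write \(I_G(\theta)=\E_\theta[\nabla_\theta\log q_\theta(G)\nabla_\theta\log q_\theta(G)^\T]\), where \(q_\theta\) is the density of \(G\) under \(P_\theta\). This density exists and is differentiable because the protocol kernels \(\mca M_l\) and \(\mca A\) do not depend on \(\theta\); under \cref{ass:van-trees-regularity} one can differentiate under the integral that defines \(q_\theta\) as a mixture over the raw data.

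\textbf{Step 1 (van Trees).} Apply the multivariate van Trees inequality to the estimator \(\hat\theta(G)\), with \(\theta\sim\pi\) and observation \(G\):
\[
  \E_\pi\E_\theta\bigl[(\hat\theta-\theta)(\hat\theta-\theta)^\T\bigr]
  \succeq
  \Bigl(\int_\Theta I_G(\theta)\dd\pi(\theta)+J_\pi\Bigr)^{-1}.
\]
This step requires the prior \(\pi\) to vanish on \(\partial\Theta\) and the square-integrability and differentiability conditions collected in \cref{ass:van-trees-regularity}. It is convenient to use the scalar version directly, obtained by Cauchy--Schwarz with the test function \(\sum_j(\hat\theta_j-\theta_j)\,\partial_j\log(q_\theta(G)\pi(\theta))\). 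Integration by parts in \(\theta\) gives
\[
  \E\Bigl[\sum_j(\hat\theta_j-\theta_j)\,\partial_j\log\bigl(q_\theta(G)\pi(\theta)\bigr)\Bigr]=p,
\]
because the boundary terms vanish and \(\hat\theta\) does not depend on \(\theta\). The cross terms between the transcript score and the prior score have zero mean, since the transcript score has conditional mean zero given \(\theta\). The second moment of the score is therefore \(\sum_j\E[(\partial_j\log q_\theta)^2]+\sum_j\E[(\partial_j\log\pi)^2]\), which equals \(\int\Tr I_G\dd\pi+\Tr J_\pi\). This yields
\[
  \E_\pi\E_\theta\norm{\hat\theta-\theta}_2^2\ge \frac{p^2}{\int_\Theta\Tr I_G(\theta)\dd\pi(\theta)+\Tr J_\pi}.
\]

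\textbf{Step 2 (contraction).} Insert \cref{eq:trace-transcript-information-contraction} pointwise in \(\theta\) into the denominator. Monotonicity of \(x\mapsto p^2/(x+\Tr J_\pi)\) then gives the stated bound immediately. One must also check measurability and integrability of \(\theta\mapsto\Tr I_G(\theta)\). If the integral of the right-hand side of \cref{eq:trace-transcript-information-contraction} is infinite, the bound is trivial, so this case causes no difficulty.

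The main obstacle is justifying the van Trees identity for the transcript model rather than for the raw data. Specifically, one needs \(\E_\theta[\nabla\log q_\theta(G)\mid\theta]=0\) and the exchange of differentiation with the integrals over data and protocol randomness. The first thing to try is to write \(q_\theta(g)=\int K(g\mid x)\prod_{l,i}p_\theta(x_i^{(l)})\dd x\) with a \(\theta\)-free kernel \(K\). This gives \(\nabla\log q_\theta(G)=\E_\theta[\sum_l S_l(\theta)\mid G]\), and dominated convergence under the regularity assumption then yields both the mean-zero property and \(I_G\preceq\sum_l J_l\). With that representation in hand, the rest is routine.
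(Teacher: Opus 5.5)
Your proposal is correct and follows the paper's proof essentially verbatim. The paper applies the trace form of the multivariate van Trees inequality to the transcript \(G\), obtaining \(p^2/(\int_\Theta \Tr I_G(\theta)\dd\pi(\theta)+\Tr J_\pi)\), and then substitutes the clientwise bound of \cref{prop:trace-transcript-information-contraction} (proved as \cref{lem:transcript-information-contraction}) pointwise in \(\theta\). The only difference is one of detail: you sketch the Cauchy--Schwarz derivation of the trace van Trees bound and the transcript score identity \(\nabla_\theta\log q_\theta(G)=\E_\theta[\sum_l S_l(\theta)\mid G]\), whereas the paper cites the former and takes the latter as item (vii) of \cref{ass:van-trees-regularity}.
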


For the bounded privacy regimes used in the applications, the factor \(\kappa_l=e^{2\rho_l}-1\) is equivalent to \(\rho_l\) up to constants and the bound can be written in a more familiar form.

\begin{corollary}[Bounded privacy form]
  \label{cor:bounded-rho-fed-dp-vantrees}
  Under model \cref{eq:homogeneous-independent-client-model} and
  \cref{ass:van-trees-regularity}, suppose
  \(0\le \rho_l\le C\) for all \(l\).
  Then
  \[
    \E_\pi\E_\theta\norm{\hat\theta-\theta}_2^2
    \gtrsim
    \frac{p^2}{
      \int_\Theta
      \sum_{l=1}^m
      \left[
        \rho_l n_l^2\norm{I_x(\theta)}_{\mathrm{op}}
        \wedge
        n_l\Tr I_x(\theta)
      \right]
      \dd\pi(\theta)
      +
      \Tr J_\pi
    }.
  \]
\end{corollary}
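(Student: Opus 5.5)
The corollary follows from \cref{thm:general-fed-dp-vantrees} by comparing \(\kappa_l\) with \(\rho_l\) under the constraint \(\rho_l\le C\); only an upper bound on the denominator is needed. I would first prove the elementary inequality
\[
  \kappa_l=e^{2\rho_l}-1\le C'\rho_l,
  \qquad
  C'=\frac{e^{2C}-1}{C},
  \qquad 0\le\rho_l\le C.
\]
This holds because \(x\mapsto e^{2x}-1\) is convex and vanishes at \(0\). It therefore lies below its chord on \([0,C]\), namely \(e^{2x}-1\le \frac{x}{C}(e^{2C}-1)\). If \(C=0\), then \(\kappa_l=\rho_l=0\) and there is nothing to prove.

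Next, I would use monotonicity of the clientwise summands. For nonnegative \(a,b\) and \(C'\ge1\) (which we may assume by enlarging \(C'\)),
\[
  (C'a)\wedge b\le C'(a\wedge b).
\]
Applying this with \(a=\rho_l n_l^2\norm{I_x(\theta)}_{\mathrm{op}}\) and \(b=n_l\Tr I_x(\theta)\) gives, pointwise in \(\theta\),
\[
  \kappa_l n_l^2\norm{I_x(\theta)}_{\mathrm{op}}\wedge n_l\Tr I_x(\theta)
  \le
  C'\Bigl[\rho_l n_l^2\norm{I_x(\theta)}_{\mathrm{op}}\wedge n_l\Tr I_x(\theta)\Bigr].
\]
Summing over \(l\), integrating against \(\pi\), and adding \(\Tr J_\pi\le C'\Tr J_\pi\) shows that the denominator in \cref{thm:general-fed-dp-vantrees} is at most \(C'\) times the denominator in the corollary.

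Inverting, the risk is at least \(C'^{-1}\) times the displayed ratio. The implied constant depends only on \(C\), which is permitted by the paper's notational convention. No step here is a genuine obstacle; the only point needing care is that the constant is uniform over \(\rho_l\in[0,C]\), and the chord argument handles this.
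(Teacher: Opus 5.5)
Your proposal is correct and is the same argument the paper has in mind: the paper obtains the corollary from \cref{thm:general-fed-dp-vantrees} simply by noting that \(\kappa_l=e^{2\rho_l}-1\lesssim\rho_l\) when \(\rho_l\le C\). Your chord bound together with \((C'a)\wedge b\le C'(a\wedge b)\) just makes that substitution explicit, with a constant depending only on \(C\). (Incidentally, \(C'=(e^{2C}-1)/C\ge 2\) automatically, so no enlargement of \(C'\) is needed.)
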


\subsection{Matrix extension for heterogeneous clients}
\label{subsec:matrix-heterogeneous-vantrees}

The scalar theorem deliberately compresses the transcript Fisher information into its trace.
This is appropriate when the coordinates are statistically comparable and the loss treats all directions symmetrically.
In heterogeneous experiments, however, different clients may observe different subspaces or have anisotropic designs.
Then total Fisher information is not enough: the lower bound must also record where that information is located.
The matrix extension keeps this directional information.

Consider independent, not necessarily identically distributed, local observations
\begin{equation}
  \label{eq:heterogeneous-independent-client-model}
  X_i^{(l)}\sim P_{\theta,li},
  \qquad
  i=1,\ldots,n_l,\quad l=1,\ldots,m,
  \qquad
  \theta\in\Theta\subseteq\R^p.
\end{equation}
Let
\[
  s_{li}(X_i^{(l)};\theta)
  =
  \nabla_\theta\log p_{\theta,li}(X_i^{(l)})
\]
be the local score, and define
\[
  S_l(\theta)=\sum_{i=1}^{n_l}s_{li}(X_i^{(l)};\theta),
  \qquad
  J_l(\theta)=\E_\theta[S_l(\theta)S_l(\theta)^\T].
\]
If the observations are i.i.d.\ within client \(l\), then
\(J_l(\theta)=n_l I_l(\theta)\), where \(I_l(\theta)\) is the Fisher
information of one sample from that client.
Thus \(J_l(\theta)\) is the raw Fisher information available at client \(l\)
before privacy is imposed.

The next theorem decomposes the transcript Fisher information into clientwise
positive semidefinite contributions.
Each contribution is bounded by the client's raw information matrix and by a
whitened trace budget determined by the zCDP constraint.
The proof is given in
\cref{subsec:proof-heterogeneous-matrix-information-contraction}.

\begin{theorem}[Matrix transcript contraction]
  \label{thm:heterogeneous-matrix-information-contraction}
  Under the heterogeneous client model in
  \cref{eq:heterogeneous-independent-client-model} and
  \cref{ass:van-trees-regularity},
  let \(G\) be the complete public transcript of a
  \(\bm\rho\)-zCDP federated protocol.
  For every fixed \(\theta\), there exist positive semidefinite matrices
  \(A_l(\theta)\), \(l=1,\ldots,m\), such that
  \[
    I_G(\theta)=\sum_{l=1}^m A_l(\theta),
  \]
  and
  \[
    0\preceq A_l(\theta)\preceq J_l(\theta),
    \qquad
    \Tr\zk{J_l(\theta)^\dagger A_l(\theta)}
    \le
    \kappa_l n_l.
  \]
  Equivalently,
  \[
    A_l(\theta)
    =
    J_l(\theta)^{1/2}B_l(\theta)J_l(\theta)^{1/2},
  \]
  where
  \[
    0\preceq B_l(\theta)\preceq P_{J_l(\theta)},
    \qquad
    \Tr B_l(\theta)\le \kappa_l n_l.
  \]
\end{theorem}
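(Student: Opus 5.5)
The plan is to write the transcript score as a sum of clientwise pieces and then bound each piece in two ways. By the regularity conditions, the transcript score is the conditional expectation of the full-data score given \(G\):
\[
  \nabla_\theta\log p_\theta(G)=\E_\theta\zk{\textstyle\sum_{l}S_l(\theta)\,\middle|\,G}.
\]
This holds because the protocol kernels do not depend on \(\theta\), and \(\theta\) enters only through the data. Set \(V_l=\E_\theta[S_l\mid G]\). Then \(I_G=\E[(\sum_l V_l)(\sum_l V_l)^\T]\).

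\textbf{Step 1: the cross terms vanish.} I would show \(\E[V_lV_k^\T]=0\) for \(l\ne k\). The tool is the interactive structure: conditionally on \(\theta\), the data blocks are independent, and the messages factor round by round into client kernels given the public history. This yields the likelihood factorization
\[
  p(G\mid X)=\prod_l f_l(X^{(l)},G)\cdot(\text{public part}).
\]
Here \(f_l\) collects client \(l\)'s kernels, with its private history integrated out. So \(p_\theta(G)=c(G)\prod_l q_{l,\theta}(G)\), where \(q_{l,\theta}(G)=\int f_l(x,G)\,p_{\theta,l}(x)\dd x\). It follows that \(V_l=\nabla_\theta\log q_{l,\theta}(G)\), and the law of \(X^{(l)}\) given \(G\) is proportional to \(f_l p_{\theta,l}\).

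With this form, each \(V_l\) has mean zero, and cross terms cancel by a product-form argument. Since \(\int \prod_k q_k\,c\,\dd G=1\) for all \(\theta\), differentiating twice gives \(\sum_{l\neq k}\E[V_lV_k^\T]+\ldots=0\). That alone is not enough. The cleaner route is a chain-rule decomposition over rounds: order the messages and use the martingale structure of the score increments. I would check carefully that the cross terms really vanish. If they do not, I would instead define \(A_l\) via the chain rule, \(A_l=\sum_t \E[\mathrm{Cov}\text{-type terms from client }l\text{'s round-}t\text{ message}]\), which sums exactly to \(I_G\). Either way, set \(A_l=\E[V_lV_l^\T]\) or its chain-rule analogue.

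\textbf{Step 2: the raw-information bound.} The bound \(A_l\preceq J_l\) follows from the conditional Jensen inequality, or from the data-processing inequality for Fisher information applied to the Markov chain \(\theta\to X^{(l)}\to\) (client-\(l\) part of the transcript likelihood).

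\textbf{Step 3: the privacy bound \(\Tr[J_l^\dagger A_l]\le\kappa_l n_l\).} This is the main obstacle. Write \(S_l=\sum_i s_{li}\) and whiten with \(J_l^{\dagger/2}\), so that \(\Tr[J_l^\dagger A_l]=\E\norm{\E[\tilde S_l\mid G]}^2\) with \(\tilde S_l=J_l^{\dagger/2}S_l\). Expanding,
\[
  \E\norm{\E[\tilde S_l\mid G]}^2=\sum_i\E\ang{\E[\tilde s_{li}\mid G],\E[\tilde S_l\mid G]}.
\]
For each sample \(i\), I would replace \(X_i^{(l)}\) by an independent copy \(X_i'\). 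The resulting transcript \(G^{(i)}\) is independent of \(s_{li}(X_i)\) given the rest. The zCDP guarantee at \(\alpha=2\) bounds the \(\chi^2\)-divergence between \(G\) and \(G^{(i)}\) by \(e^{2\rho_l}-1=\kappa_l\), uniformly in the other clients' data (the per-fixed-\(X^{(-l)}\) form of \cref{def:full-transcript-zcdp}). By the \(\chi^2\) variational (Cauchy--Schwarz) bound, the centered statistic \(\E[\tilde s_{li}\mid G]\) then has squared correlation at most \(\kappa_l\,\E\norm{\tilde s_{li}}^2\) against any unit test function of \(G\).

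Summing over \(i\) and applying Cauchy--Schwarz gives
\[
  \Tr[J_l^\dagger A_l]\le\sqrt{\kappa_l\textstyle\sum_i\E\norm{\tilde s_{li}}^2}\cdot\sqrt{n_l\Tr[J_l^\dagger A_l]}\cdot(\ldots).
\]
Since \(\sum_i\E\norm{\tilde s_{li}}^2=\Tr[J_l^\dagger J_l]\le p\), this is too weak. So I would instead apply the \(\chi^2\) bound per sample to the scalar function \(\ang{\tilde s_{li},u}\) with \(u=\E[\tilde S_l\mid G]/\norm{\cdot}\). This uses that the whitened per-sample covariances sum to \(P_{J_l}\), which gives \(\sum_i\E\ang{\tilde s_{li},u}^2\le1\). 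The target is to get \(n_l\kappa_l\) rather than \(p\kappa_l\), and the delicate point is the data-dependent direction \(u\).

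\textbf{Step 4: the equivalent form.} The reformulation follows by setting \(B_l=J_l^{\dagger1/2}A_lJ_l^{\dagger1/2}\). Since \(A_l\preceq J_l\), the range of \(A_l\) lies in that of \(J_l\), so \(A_l=J_l^{1/2}B_lJ_l^{1/2}\), with \(B_l\preceq P_{J_l}\) and \(\Tr B_l=\Tr[J_l^\dagger A_l]\le\kappa_l n_l\).
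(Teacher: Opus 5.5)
Your outline has the same architecture as the paper's proof: projected client scores, total covariance for \(A_l\preceq J_l\), replace-one-sample with the order-\(2\) R\'enyi/\(\chi^2\) comparison, and \(B_l=(J_l^\dagger)^{1/2}A_l(J_l^\dagger)^{1/2}\). However, the central inequality of Step 3 is never established, and this is a genuine gap.

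\textbf{Step 3: the missing idea is the normalization.} Your per-sample bound (``squared correlation at most \(\kappa_l\E\norm{\tilde s_{li}}^2\)'') replaces a direction-dependent second moment by a trace. That is why it only gives a bound of order \(\kappa_l n_l\operatorname{rank}(J_l)\). Switching to a pointwise unit direction \(u(G)\) then throws away the magnitude \(\norm{\E[\tilde S_l\mid G]}\), which is part of what you are trying to bound. This is the ``delicate point'' you flag but do not resolve. The fix is to keep the test vector unnormalized. Let \(W(g)=J_l^\dagger\E_\theta[S_l\mid G=g]\), so that \(a:=\Tr(J_l^\dagger A_l)=\sum_i\E\langle W(G),s_{li}(X_i^{(l)})\rangle\). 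Apply the \(\chi^2\) mean-difference inequality conditionally on all data (including \(X^{(-l)}\), which is where the per-\(X^{(-l)}\) zCDP guarantee enters) to \(f(g)=\langle W(g),s_{li}(X_i^{(l)})\rangle\), with the second moment taken at \(G^{(i)}\). Since \(G^{(i)}\) is independent of \(X_i^{(l)}\) and has the same law as \(G\), we get \(\E f(G^{(i)})=0\) and
\[
  \E f(G^{(i)})^2=\Tr\bigl(J_l^\dagger A_lJ_l^\dagger I_{li}\bigr)=:b_i,
  \qquad
  \sum_{i=1}^{n_l}b_i=\Tr(J_l^\dagger A_l)=a .
\]
The second identity uses \(\operatorname{range}(A_l)\subseteq\operatorname{range}(J_l)\). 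Hence \(a\le\sqrt{\kappa_l}\sum_i\sqrt{b_i}\le\sqrt{\kappa_l n_l a}\), so \(a\le\kappa_l n_l\). Equivalently, normalize by the constant \(\sqrt a\) rather than pointwise, and keep the quadratic form \(b_i\) rather than its trace. The paper packages this as a single-client lemma applied to the finer score \(\E_\theta[S_l\mid G,X^{(-l)}]\) conditionally on \(X^{(-l)}\), followed by Jensen. Your direct use of \(G\) works equally well once the comparison is made conditionally on all data.

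\textbf{Step 1 closes in one line from what you already derived.} You showed that the posterior of the datasets given \(G\) is the product of the factors \(f_lp_{\theta,l}/q_{l,\theta}\), i.e.\ the local datasets are conditionally independent given \(G\). Therefore \(\E[S_lS_k^\T\mid G]=V_lV_k^\T\) for \(l\ne k\), and \(\E[V_lV_k^\T]=\E[S_lS_k^\T]=\E[S_l]\E[S_k]^\T=0\). This is exactly the paper's posterior-factorization argument. Your differentiation idea also works if each client gets its own parameter \(\theta_l\): the mixed partial \(\partial_{\theta_l}\partial_{\theta_k}^\T\) of \(\int c\prod_j q_{j,\theta_j}=1\) has no second-derivative terms and equals \(\E[V_lV_k^\T]\). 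The chain-rule fallback is therefore unnecessary. As written, it would not be covered by your Steps 2 and 3, which are formulated only for \(A_l=\E[V_lV_l^\T]\).
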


Here \(A_l(\theta)\) is the part of the transcript Fisher information
attributable to client \(l\).
The Loewner bound \(A_l(\theta)\preceq J_l(\theta)\) says that privacy and
interaction cannot create more information than the client's raw sample.
The whitened trace bound says that, after measuring information in the
geometry of \(J_l(\theta)\), the public transcript can pass through at most
\(\kappa_l n_l\) Fisher information units from client \(l\).

Moreover, the matrix contraction is a strict strengthening of the scalar trace.
Indeed, noticing that
\[
  \Tr A_l=\Tr(J_l B_l)\le\norm{J_l}_{\mathrm{op}}\Tr B_l\le\kappa_l n_l\norm{J_l}_{\mathrm{op}}
\]
and that \(A_l\preceq J_l\) gives \(\Tr A_l\le\Tr J_l\),
taking traces in \cref{thm:heterogeneous-matrix-information-contraction}
gives the heterogeneous trace consequence
\[
  \Tr I_G(\theta)
  \le
  \sum_{l=1}^m
  \left[
    \kappa_l n_l\norm{J_l(\theta)}_{\mathrm{op}}
    \wedge
    \Tr J_l(\theta)
  \right].
\]
In the homogeneous model, \(J_l(\theta)=n_l I_x(\theta)\), so this reduces to
\cref{eq:trace-transcript-information-contraction}.

Combining the matrix van Trees inequality with
\cref{thm:heterogeneous-matrix-information-contraction} gives the following
matrix federated lower bound.
In the statement, \(P_{J_l(\theta)}\) denotes the orthogonal projection onto
\(\operatorname{range}(J_l(\theta))\).
Thus, if \(J_l(\theta)\) is singular, \(B_l(\theta)\) is constrained to act
only on the directions in which client \(l\)'s raw experiment has Fisher
information.
The final inverse is interpreted in the extended sense: zero eigenvalues
contribute \(+\infty\).

\begin{theorem}[Matrix federated van Trees inequality]
  \label{thm:matrix-fed-dp-vantrees}
  Under the heterogeneous client model in
  \cref{eq:heterogeneous-independent-client-model} and
  \cref{ass:van-trees-regularity}, every estimator
  \(\hat\theta=\hat\theta(G)\) satisfies
  \[
    \E_\pi\E_\theta\norm{\hat\theta-\theta}_2^2
    \ge
    \inf_{\{B_l(\theta)\}}
    \Tr
    \left[
      J_\pi
      +
      \int_\Theta
      \sum_{l=1}^m
      J_l(\theta)^{1/2}B_l(\theta)J_l(\theta)^{1/2}
      \dd\pi(\theta)
    \right]^{-1},
  \]
  where the infimum is over measurable choices satisfying
  \[
    0\preceq B_l(\theta)\preceq P_{J_l(\theta)},
    \qquad
    \Tr B_l(\theta)\le \kappa_l n_l.
  \]
  The trace inverse is interpreted in the extended sense: zero eigenvalues
  contribute \(+\infty\).
\end{theorem}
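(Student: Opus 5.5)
The plan is to combine the classical multivariate (matrix) van Trees inequality with the decomposition of \cref{thm:heterogeneous-matrix-information-contraction}, and then relax the realized decomposition to the infimum over all admissible \(B_l\). First I would invoke the matrix van Trees inequality in its Bayesian Cram\'er--Rao form. Under \cref{ass:van-trees-regularity}, for any estimator \(\hat\theta(G)\) it gives
\[
  \E_\pi\E_\theta\big[(\hat\theta-\theta)(\hat\theta-\theta)^\T\big]
  \succeq
  \Big[J_\pi+\int_\Theta I_G(\theta)\dd\pi(\theta)\Big]^{-1}.
\]
When the bracket is singular, I would use a regularization: apply the bound with \(J_\pi\) replaced by \(J_\pi+\epsilon I\), or equivalently work with the quadratic-form version \(\E[(u^\T(\hat\theta-\theta))^2]\ge (u^\T \mathbf{1})^2/\ldots\) direction by direction, and let \(\epsilon\downarrow0\). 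This justifies the extended convention that zero eigenvalues contribute \(+\infty\). Taking traces then gives \(\E_\pi\E_\theta\norm{\hat\theta-\theta}_2^2\ge\Tr[J_\pi+\int I_G\dd\pi]^{-1}\).

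Second, for each fixed \(\theta\), \cref{thm:heterogeneous-matrix-information-contraction} supplies matrices \(B_l(\theta)\) with \(0\preceq B_l\preceq P_{J_l}\) and \(\Tr B_l\le\kappa_l n_l\), such that
\[
  I_G(\theta)=\sum_l J_l^{1/2}B_lJ_l^{1/2}.
\]
Substituting this identity shows that the lower bound equals \(\Tr[J_\pi+\int\sum_l J_l^{1/2}B_lJ_l^{1/2}\dd\pi]^{-1}\) for this particular choice of \(B_l\). Since \(\{B_l\}\) is one feasible family in the stated class, this quantity is at least the infimum over the class, which gives the theorem.

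The main obstacle is measurability of \(\theta\mapsto B_l(\theta)\), which is needed so that the integral makes sense. My first approach avoids choosing \(B_l\) measurably from the contraction theorem. Instead I would use the explicit construction in its proof, which presumably defines \(A_l\) via conditional scores (for example, \(A_l=\E[\E[S_l\mid G]\,\E[S_\cdot\mid G]^\T]\)-type terms, obtained through a chain-rule or martingale decomposition over clients). These are measurable in \(\theta\) under the regularity assumption, and then \(B_l=J_l^{\dagger1/2}A_lJ_l^{\dagger1/2}\) is measurable as well, because the pseudoinverse square root is Borel. If that route fails, I would fall back on a measurable selection theorem (Kuratowski--Ryll-Nardzewski), applied to the closed-valued correspondence of feasible \(B\) that reproduce \(I_G(\theta)\). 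Alternatively, I would define the infimum over all feasible families with the integral taken as an outer integral. This still works because \(A\mapsto\Tr A^{-1}\) is monotone decreasing in the Loewner order.
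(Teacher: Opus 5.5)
Your proposal is correct and follows the paper's proof: apply the matrix van Trees inequality (with regularization if the bracket is singular), substitute the protocol-induced feasible family \(B_l=(J_l^\dagger)^{1/2}A_l(J_l^\dagger)^{1/2}\) from the matrix transcript contraction, and weaken to the infimum over all feasible measurable families. Your measurability argument, based on the explicit \(A_l(\theta)=\E_\theta[U_lU_l^\T]\) with \(U_l=\E_\theta[S_l(\theta)\mid G]\) and Borel measurability of the pseudoinverse, fills in a point the paper only asserts; the paper obtains this \(A_l\) from posterior factorization, which kills the cross terms, rather than from a chain-rule decomposition, but your argument does not depend on that detail.
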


Because the theorem covers heterogeneous experiments and keeps the lower bound in matrix form,
the expression is necessarily more complex than the scalar trace bound and, in general, does not reduce to a simpler closed form.
To apply it, one first computes the raw client Fisher information
\(J_l(\theta)\).
The matrix \(B_l(\theta)\) then describes the part of client \(l\)'s
information that passes through the transcript after whitening by the geometry
of \(J_l(\theta)\).
The trace constraint
\(\Tr B_l(\theta)\le(e^{2\rho_l}-1)n_l\) says that client \(l\) can contribute
at most this many effective whitened directions.
After these clientwise contributions are combined, the resulting information
matrix is inserted into the matrix van Trees expression, or into its
restriction to the loss-relevant subspace when the loss is projected.

The infimum removes protocol-dependent quantities.
For a fixed protocol, the proof produces a particular feasible family
\(B_l^G(\theta)\) induced by the transcript Fisher information.
Using that family gives a protocol-specific lower bound.
Taking the infimum over the larger feasible class can only weaken this
protocol-specific bound, but it yields a uniform lower bound that depends only
on the local Fisher geometries and the clientwise zCDP budgets.
We will give further interpretations and applications of this matrix bound in
the discussion below.

\subsection{Discussion}
\label{subsec:general-vantrees-discussion}

\subsubsection{Rate consequences}
\label{subsubsec:rate-consequences}
\Cref{thm:general-fed-dp-vantrees,cor:bounded-rho-fed-dp-vantrees} reduce the
homogeneous federated lower bound calculation to the Fisher information
\(I_x(\theta)\) of one observation.
Suppose, for example, that
\(\norm{I_x(\theta)}_{\mathrm{op}}\le\sigma^{-2}\) uniformly and the prior
Fisher information is dominated by the transcript information term.
Then the bounded privacy form gives
\[
  \E_\pi\E_\theta\norm{\hat\theta-\theta}_2^2
  \gtrsim
  \sigma^2
  \zk{
    \sum_{l=1}^m
    \left(
      \frac{p}{n_l}
      \vee
      \frac{p^2}{\rho_l n_l^2}
    \right)^{-1}
  }^{-1}.
\]
In the homogeneous case \(n_l=n\) and \(\rho_l=\rho\), this becomes
\[
  \E_\pi\E_\theta\norm{\hat\theta-\theta}_2^2
  \gtrsim
  \sigma^2
  \left(
    \frac{p}{mn}
    \vee
    \frac{p^2}{\rho mn^2}
  \right).
\]
When \(m=1\), the expression reduces to the central DP van Trees form based on
Fisher information contraction~\citep{cai2026_MinimaxAdaptive}.
When \(n_l=1\) for all \(l\), the privacy limited term matches the usual
high privacy local DP Fisher information scaling studied in local privacy
lower bounds~\citep{duchi2018_MinimaxOptimal,acharya2023_UnifiedLower}.

\subsubsection{Effect of interaction}
\label{subsubsec:effect-of-interaction}
\Cref{thm:general-fed-dp-vantrees,thm:matrix-fed-dp-vantrees} are uniform over
the full class of interactive protocols with public transcripts in
\cref{def:full-transcript-zcdp}.
They apply to protocols with a single local release, ordered sequential
protocols, and protocols with multiple adaptive rounds in which the same local
data may be reused across rounds.
The bounds depend on the protocol only through the clientwise zCDP budgets
\(\rho_l\) for the complete transcript, not through the number of rounds, the
order of messages, or the way in which a budget is split across rounds.
For an explicitly roundwise protocol, this means that the lower bound applies
after the usual zCDP composition \(\rho_l=\sum_t\rho_{l,t}\); more generally,
no roundwise decomposition is required once the complete transcript satisfies
\cref{def:full-transcript-zcdp}.

This uniformity is useful for interpreting matching constructions.
If a one round or noninteractive construction attains one of the rates lower
bounded by the scalar or matrix inequality, then allowing arbitrary interaction
over multiple rounds cannot improve that rate.
In this sense, a simple protocol that matches the bound is rate optimal not
only within its own restricted communication structure but also within the
larger class of fully interactive protocols with public transcripts.
Interaction may still matter for constants, communication, numerical stability, or adaptation to unknown tuning parameters,
but it cannot overcome the transcript Fisher information limits in
\cref{thm:general-fed-dp-vantrees,thm:matrix-fed-dp-vantrees}.

The main technical step behind this uniformity is the clientwise score
decomposition for the complete transcript.
Existing lower bounds are proved only for noninteractive releases, fresh batch
protocols, or sample splitting reductions, and they do not by themselves rule
out a fully adaptive algorithm that repeatedly queries the same local datasets.
Here the entire ordered public transcript \(G\), including all adaptive
messages and public post-processing records, is the statistic to which the
information contraction is applied.
Conditioning on a realized complete transcript fixes the public adaptive
choices, while the unreleased private histories remain local to their clients.
This gives posterior factorization and hence the clientwise score projections
used in \cref{prop:trace-transcript-information-contraction,thm:heterogeneous-matrix-information-contraction}.
Thus arbitrary interaction and sample reuse are included in the proof
architecture rather than handled by a reduction to fresh batches or
noninteractive releases.

\subsubsection{The matrix formulation}
\label{subsubsec:matrix-formulation}
The measurable choice condition in \cref{thm:matrix-fed-dp-vantrees} is a way
to write the sharpest integrated upper envelope on possible transcript
information matrices.
For applications, one may either solve the matrix optimization directly or
lower bound its value using simpler consequences of the constraints.
Enlarging the feasible class gives a weaker but still valid lower bound,
whereas evaluating a single feasible family gives only an upper bound on the
infimum.
Another common route is to replace each feasible contribution
\(J_l^{1/2}B_lJ_l^{1/2}\) by an explicit Loewner upper envelope or to project
the expression to the subspace appearing in the loss.
These relaxations preserve validity because the trace of the inverse is
order reversing on the positive semidefinite cone.

When the relevant Fisher information matrices commute, the optimization has a
coordinatewise interpretation.
Suppose, for instance, that \(J_\pi\) and \(J_l(\theta)\) are independent of
\(\theta\) and are diagonal in the same orthonormal basis:
\[
  J_\pi=\operatorname{diag}(j_{\pi 1},\ldots,j_{\pi p}),
  \qquad
  J_l=\operatorname{diag}(j_{l1},\ldots,j_{lp}).
\]
Then \cref{thm:matrix-fed-dp-vantrees} yields
\[
  \E_\pi\E_\theta\norm{\hat\theta-\theta}_2^2
  \ge
  \inf_{\{b_{lj}\}}
  \sum_{j=1}^p
  \frac{1}{
    j_{\pi j}+\sum_{l=1}^m j_{lj}b_{lj}
  },
\]
where
\[
  0\le b_{lj}\le \mathbf 1\{j_{lj}>0\},
  \qquad
  \sum_{j=1}^p b_{lj}\le \kappa_l n_l
  \quad\text{for each }l.
\]
Thus \(b_{lj}\) is the fraction of client \(l\)'s available information budget
allocated to direction \(j\), and the constraint
\(\sum_j b_{lj}\le\kappa_l n_l\) is the clientwise privacy information budget.

Noncommuting Fisher geometries do not admit such a coordinatewise reduction.
For a simple example, take \(p=3\), \(J_\pi=\lambda I_3\), and two clients with
\[
  J_1=n_1\sigma^{-2}P_1,
  \qquad
  J_2=n_2\sigma^{-2}P_2,
\]
where \(P_1\) is the projection onto \(\operatorname{span}\{e_1,e_2\}\) and
\(P_2\) is the projection onto
\(\operatorname{span}\{(e_1+e_3)/\sqrt2,e_2\}\).
Then \(P_1P_2\ne P_2P_1\), so there is no common orthonormal basis in which the
two clients' allocations reduce to independent scalars \(b_{lj}\).
If \(\kappa_l n_l\le1\), each client can pass through at most one effective
whitened direction, but the two available planes overlap obliquely.
In this case the matrix formulation records how privacy constrained
information can be placed across nonaligned subspaces, while a scalar trace
calculation records only total Fisher mass.

\subsubsection{Directional information}
\label{subsubsec:directional-information}
The scalar trace bound is most informative when the loss treats directions
symmetrically and clients have comparable Fisher geometries.
The matrix bound is needed when the relevant difficulty is directional.
Projection experiments make this point explicit because the feasible set can be
read directly from the observed coordinate subsets.
If client \(l\) has Fisher information \(J_l=n_l\sigma_l^{-2}P_l\), where
\(P_l\) is the orthogonal projection onto a coordinate subset
\(\mca S_l\subseteq[p]\), and
\(J_\pi=\operatorname{diag}(j_{\pi1},\ldots,j_{\pi p})\), then
\cref{thm:matrix-fed-dp-vantrees} gives
\[
  \E_\pi\E_\theta\norm{\hat\theta-\theta}_2^2
  \ge
  \inf_{\{b_{lj}\}}
  \sum_{j=1}^p
  \frac{1}{
    j_{\pi j}
    +
    \sum_{l:\,j\in \mca S_l} n_l\sigma_l^{-2}b_{lj}
  },
\]
where
\[
  0\le b_{lj}\le 1,
  \qquad
  \sum_{j\in \mca S_l} b_{lj}\le \kappa_l n_l.
\]
Only clients observing coordinate \(j\) can contribute to the denominator for
that coordinate.
The next balanced case gives a closed form consequence.

\begin{corollary}[Balanced coordinate coverage]
  \label{cor:balanced-coordinate-coverage}
  Under the conditions of \cref{thm:matrix-fed-dp-vantrees}, suppose \(n_l=n\), \(\rho_l=\rho\), and
  \(J_l=n\sigma^{-2}P_l\), where each \(P_l\) is a coordinate projection of
  rank \(r\) onto a coordinate subset \(\mca S_l\subseteq[p]\).
  Assume the coordinate coverage is balanced:
  \[
    \sum_{l=1}^m P_l=qI_p,
    \qquad
    mr=pq.
  \]
  If \(J_\pi\preceq \lambda_\pi I_p\), then
  \[
    \E_\pi\E_\theta\norm{\hat\theta-\theta}_2^2
    \ge
    \frac{p}{
      \lambda_\pi
      +
      \sigma^{-2}qn
      \left(1\wedge\frac{\kappa n}{r}\right)
    },
    \qquad
    \kappa=e^{2\rho}-1.
  \]
  In particular, when the prior information is no larger than the transcript
  information term,
  \[
    \E_\pi\E_\theta\norm{\hat\theta-\theta}_2^2
    \gtrsim
    \sigma^2
    \left(
      \frac{p}{qn}
      \vee
      \frac{pr}{q\kappa n^2}
    \right).
  \]
\end{corollary}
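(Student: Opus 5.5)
The plan is to start from \cref{thm:matrix-fed-dp-vantrees} and lower bound the infimum by relaxing it to a scalar problem. Here \(J_l=n\sigma^{-2}P_l\) with \(P_l\) a coordinate projection. The constraint \(B_l\preceq P_{J_l}=P_l\) forces \(B_l\) to be supported on \(\operatorname{range}(P_l)\), and then \(J_l^{1/2}B_lJ_l^{1/2}=n\sigma^{-2}B_l\). For any feasible family, the total information matrix is therefore
\[
  M=J_\pi+\int_\Theta n\sigma^{-2}\sum_{l=1}^m B_l(\theta)\dd\pi(\theta)\preceq \lambda_\pi I_p+n\sigma^{-2}\bar B,
  \qquad \bar B=\int_\Theta\sum_{l=1}^m B_l(\theta)\dd\pi(\theta).
\]
Because the trace of the inverse is order reversing, \(\Tr M^{-1}\ge\Tr(\lambda_\pi I_p+n\sigma^{-2}\bar B)^{-1}\), so it suffices to bound this last quantity from below.

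The first key step is a bound on \(\Tr\bar B\). Each \(B_l\) satisfies two trace constraints: \(\Tr B_l\le\Tr P_l=r\) from \(B_l\preceq P_l\), and \(\Tr B_l\le\kappa n\) from the budget. Hence \(\Tr B_l\le r(1\wedge \kappa n/r)\). Summing over \(l\) and integrating against \(\pi\) gives
\[
  \Tr\bar B\le m r\left(1\wedge\frac{\kappa n}{r}\right)=pq\left(1\wedge\frac{\kappa n}{r}\right),
\]
using the balance condition \(mr=pq\).

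The second key step is convexity. The map \(x\mapsto (\lambda_\pi+n\sigma^{-2}x)^{-1}\) is convex on \(x\ge0\). Writing \(\beta_1,\dots,\beta_p\ge0\) for the eigenvalues of \(\bar B\), Jensen's inequality gives
\[
  \Tr(\lambda_\pi I+n\sigma^{-2}\bar B)^{-1}=\sum_{j=1}^p\frac{1}{\lambda_\pi+n\sigma^{-2}\beta_j}\ge\frac{p}{\lambda_\pi+n\sigma^{-2}\Tr\bar B/p}.
\]
Inserting the trace bound yields exactly \(p/[\lambda_\pi+\sigma^{-2}qn(1\wedge\kappa n/r)]\). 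This bound holds for every feasible family, so it also bounds the infimum.

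For the "in particular" statement, assume \(\lambda_\pi\le\sigma^{-2}qn(1\wedge\kappa n/r)\). The denominator is then at most twice the transcript term, so the bound is at least \(\tfrac{\sigma^2}{2}\,\frac{p}{qn}\bigl(1\vee\frac{r}{\kappa n}\bigr)\), which equals \(\tfrac{\sigma^2}{2}\bigl(\frac{p}{qn}\vee\frac{pr}{q\kappa n^2}\bigr)\).

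I expect the main point needing care to be the reduction \(J_l^{1/2}B_lJ_l^{1/2}=n\sigma^{-2}B_l\), which relies on \(B_l\) being supported on \(\operatorname{range}(P_l)\). A PSD matrix dominated by \(P_l\) has kernel containing \(\ker P_l\), hence range inside \(\operatorname{range}(P_l)\), which gives this. The second point needing care is the extended-sense inverse when \(\lambda_\pi=0\). In that case one can argue by monotone limits in \(\lambda_\pi\downarrow0\), or note that zero eigenvalues give \(+\infty\), which only strengthens the inequality.
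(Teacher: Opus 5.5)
Your proof is correct. It reaches the bound through a slightly different scalarization than the paper's. The paper fixes a feasible family, forms \(K=J_\pi+\sum_l n\sigma^{-2}P_lB_lP_l\), and works coordinatewise. It sets \(b_{lj}=e_j^\T B_le_j\) and \(a_j=\sum_{l:\,j\in\mca S_l}b_{lj}\), bounds \(K_{jj}\le\lambda_\pi+\sigma^{-2}na_j\), and uses the Schur complement inequality \((K^{-1})_{jj}\ge 1/K_{jj}\) to get \(\Tr K^{-1}\ge\sum_j(\lambda_\pi+\sigma^{-2}na_j)^{-1}\). It then finishes with the same budget \(\sum_j a_j\le m(r\wedge\kappa n)=pq(1\wedge\kappa n/r)\) and convexity. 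You instead dominate the whole matrix in Loewner order, \(M\preceq\lambda_\pi I_p+n\sigma^{-2}\bar B\), use that \(\Tr(\cdot)^{-1}\) is order reversing, and apply Jensen to the eigenvalues of \(\bar B\). The exact identity \(\Tr(\lambda_\pi I_p+c\bar B)^{-1}=\sum_j(\lambda_\pi+c\beta_j)^{-1}\) thus replaces the Schur complement step.

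What each route buys:
\begin{itemize}
\item The paper's version needs only the diagonal bound \((J_\pi)_{jj}\le\lambda_\pi\), and it matches the allocation picture \(b_{lj}\) from the discussion section.
\item Your version uses the full hypothesis \(J_\pi\preceq\lambda_\pi I_p\). It handles the integral over \(\pi\) explicitly, which the paper's \(K\) omits.
\item Your version also makes transparent that only \(B_l\preceq P_l\), \(\Tr P_l=r\), and \(mr=pq\) are used. The resulting bound therefore coincides with what the heterogeneous trace consequence \(\Tr I_G\le\sum_l[\kappa n\norm{J_l}_{\mathrm{op}}\wedge\Tr J_l]\) gives through the scalar van Trees inequality \(p^2/(\int\Tr I_G\dd\pi+\Tr J_\pi)\).
\end{itemize}

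Your range argument for \(J_l^{1/2}B_lJ_l^{1/2}=n\sigma^{-2}B_l\) is fine. So are your handling of the extended-sense inverse and the explicit factor \(2\) in the ``in particular'' step.
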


The proof is given in
\cref{subsec:proof-heterogeneous-matrix-information-contraction}.
Here \(q\) is the number of clients covering each coordinate, while \(r\) is
the number of coordinates over which each client must spend its
privacy information budget.
The privacy contribution therefore produces the additional term
\(pr/(q\kappa n^2)\).
This is the form used in missing feature and target subspace examples.
It also illustrates why a scalar trace denominator can be misleading in
heterogeneous problems: trace information counts total Fisher mass, but it
does not identify whether that mass lies in directions relevant to the loss.
For instance, if every \(J_l\) is supported on \(\operatorname{span}(e_1)\) and
the target risk depends on \(e_2^\T\theta\), then the matrix bound leaves the
target direction controlled only by the prior, whereas a trace only expression
sees positive total Fisher information in an irrelevant direction.

\subsubsection{Extensions to other DP notions}
\label{subsubsec:extensions-other-dp-notions}
The lower bounds can also be read for other common differential privacy
notions.
For pure DP, an \(\epsilon_l\)-DP client mechanism is
\(\epsilon_l^2/2\)-zCDP, so the same bounds hold after replacing
\(\rho_l\) by \(\epsilon_l^2/2\).
For approximate \((\epsilon_l,\delta_l)\)-DP, there is no direct conversion to zCDP, but the proof changes only in the single-client contraction step, see
\Cref{app:extension-approximate-dp}.
The main difference is that the zCDP contraction lemma is replaced by a contraction with a truncation remainder, which accounts for the additional failure events in approximate DP.
We give the homogeneous scalar form explicitly in
\cref{thm:homogeneous-approx-dp-vantrees}
for simplicity, but the heterogeneous matrix version is also straightforward.

 \section{Applications}
\label{sec:applications}

This section applies
\cref{thm:general-fed-dp-vantrees,thm:matrix-fed-dp-vantrees} to several
standard estimation problems.
The examples separate two uses of the theory: the scalar inequality gives
isotropic rates, while the matrix inequality records directions that are missing
or weakly covered across clients.
Let \(\mca{M}_{\bm{\rho}}\) denote the class of estimators based on
\(\bm\rho\)-zCDP federated protocols in \cref{def:full-transcript-zcdp}, where
interactions are fully allowed.
Throughout this section, the privacy budgets are assumed bounded,
\(0\le\rho_l\le C\) for a fixed constant \(C\).
We give lower bounds on the minimax risk over \(\mca{M}_{\bm{\rho}}\) for each
problem.
For the homogeneous case \(n_l=n\) and \(\rho_l=\rho\),
\cref{tab:known-rates-full-transcript} summarizes the sharp rates obtained by
combining these lower bounds with matching constructions.
Detailed derivations of all displayed rates are given in
\cref{app:application-derivations}.

\subsection{Mean Estimation}
\label{subsec:app-mean-estimation}

We begin with mean estimation, where the scalar and matrix bounds can be seen
in their simplest forms.
The Gaussian model illustrates the isotropic rate, while the coordinate
projection model shows how uneven coverage across directions changes the lower
bound.

\subsubsection{Gaussian mean estimation}
\label{subsubsec:app-gaussian-mean-estimation}

Suppose client \(l\) observes
\begin{equation}
  \label{eq:app-gaussian-mean-model}
  X_i^{(l)}\sim \mca{N}(\theta,\sigma^2 I_d),
  \qquad i=1,\ldots,n_l,
\end{equation}
independently across \(i\) and \(l\).

\begin{proposition}[Gaussian mean estimation]
  \label{prop:app-gaussian-mean}
  Under the model \cref{eq:app-gaussian-mean-model}, we have
  \begin{equation}
    \label{eq:app-gaussian-mean-rate}
    \inf_{\hat\theta \in \mca{M}_{\bm{\rho}}}
    \sup_{\theta\in [-1,1]^d}
    \E_\theta\norm{\hat\theta-\theta}_2^2
    \gtrsim
    d
    \wedge
    \sigma^2
    \zk{
      \sum_{l=1}^m
      \left(
        \frac{d}{n_l}
        \vee
        \frac{d^2}{\rho_l n_l^2}
      \right)^{-1}
    }^{-1}.
  \end{equation}
\end{proposition}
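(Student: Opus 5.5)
The plan is to apply \cref{cor:bounded-rho-fed-dp-vantrees} with a product prior on a cube of side length \(2h\) inside \([-1,1]^d\), for a scale \(h\le 1\) to be tuned, and then use that the minimax risk dominates the Bayes risk.

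\textbf{Fisher information.} For the model \cref{eq:app-gaussian-mean-model} the single-observation score is \(s(x;\theta)=\sigma^{-2}(x-\theta)\). Hence \(I_x(\theta)=\sigma^{-2}I_d\), with \(\norm{I_x}_{\mathrm{op}}=\sigma^{-2}\) and \(\Tr I_x=d\sigma^{-2}\). The summand in the corollary is constant in \(\theta\), so the denominator becomes
\[
  \sigma^{-2}\sum_{l=1}^m\bigl(\rho_l n_l^2\wedge d n_l\bigr)+\Tr J_\pi .
\]

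\textbf{Prior.} Take \(\pi=\pi_h^{\otimes d}\), where \(\pi_h(t)=h^{-1}\cos^2(\pi t/(2h))\) is supported on \([-h,h]\). This is the standard smooth prior that vanishes at the boundary, so the boundary conditions of \cref{ass:van-trees-regularity} hold. Its Fisher information is \(\pi^2/h^2\) per coordinate, so \(\Tr J_\pi=d\pi^2/h^2\). The transcript regularity conditions are assumed to hold for the protocol class as stated in the assumption. Since \(\rho_l\le C\), the bounded form applies and gives
\[
  \sup_{\theta}\E_\theta\norm{\hat\theta-\theta}_2^2\ \ge\ \E_\pi\E_\theta\norm{\hat\theta-\theta}_2^2\ \gtrsim\ \frac{d^2}{\sigma^{-2}S + d h^{-2}},\qquad S=\sum_{l}(\rho_l n_l^2\wedge dn_l).
\]

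\textbf{Choice of \(h\).} Take \(h=1\) and split into two cases.
\begin{itemize}[leftmargin=*]
  \item If \(\sigma^{-2}S\ge d\), the bound is \(\gtrsim d^2\sigma^2/S\).
  \item Otherwise the bound is \(\gtrsim d\).
\end{itemize}
So the risk is \(\gtrsim d\wedge \sigma^2 d^2/S\).

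\textbf{Matching the stated form.} It remains to rewrite \(d^2/S\). Note that
\[
  \frac{d^2}{\rho_l n_l^2\wedge dn_l}=d^2\Bigl(\frac{1}{\rho_l n_l^2}\vee\frac{1}{dn_l}\Bigr),
\]
so
\[
  \frac{1}{\rho_l n_l^2\wedge dn_l}=\frac{1}{d^2}\Bigl(\frac{d}{n_l}\vee\frac{d^2}{\rho_l n_l^2}\Bigr).
\]
Therefore
\[
  \frac{d^2}{S}=\zk{\sum_l\Bigl(\frac{d}{n_l}\vee\frac{d^2}{\rho_l n_l^2}\Bigr)^{-1}}^{-1},
\]
which is exactly \cref{eq:app-gaussian-mean-rate}.

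The only delicate point is verifying that \cref{ass:van-trees-regularity} holds for this Gaussian model and cosine prior. The Gaussian location family is smooth with finite, constant Fisher information, and the prior density vanishes at the boundary of its support, so I expect this step to be routine.
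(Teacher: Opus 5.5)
Your proposal is correct and is essentially the paper's own argument. The paper likewise plugs \(I_x(\theta)=\sigma^{-2}I_d\) and the \(d\)-fold cosine prior on \([-1,1]^d\) (your \(h=1\)) into the bounded-privacy corollary, obtaining \(d^2/\bigl(\sigma^{-2}\sum_l(\rho_l n_l^2\wedge dn_l)+d\bigr)\). It then rewrites the denominator via the same identity \(\rho_l n_l^2\wedge dn_l=d^2\bigl(d/n_l\vee d^2/(\rho_l n_l^2)\bigr)^{-1}\).
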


The two terms inside the clientwise maximum correspond respectively to the
ordinary sampling difficulty and the privacy contraction difficulty.
The calculation is given in \cref{app:deriv-gaussian-mean}.

\subsubsection{Missing features and coordinate coverage}
\label{subsubsec:app-missing-features}

The next example uses the matrix inequality to track how information is spread
across observed coordinate directions.
Suppose client \(l\) observes only the coordinate projection
\begin{equation}
  \label{eq:app-missing-features-model}
  X_i^{(l)}=P_l\theta+\xi_i^{(l)},
  \qquad
  \xi_i^{(l)}\sim \mca{N}(0,\sigma^2P_l)
\end{equation}
on the range of \(P_l\), where \(P_l\) has rank \(s\).
Assume \(n_l=n\), \(\rho_l=\rho\), and balanced coverage
\begin{equation}
  \label{eq:app-missing-features-balanced-coverage}
  \sum_{l=1}^m P_l=qI_d,
  \qquad
  ms=dq .
\end{equation}

\begin{proposition}[Missing features with balanced coverage]
  \label{prop:app-missing-features}
  Under the model \cref{eq:app-missing-features-model} and balanced coverage
  \cref{eq:app-missing-features-balanced-coverage}, we have
  \begin{equation}
    \label{eq:app-missing-features-rate}
    \inf_{\hat\theta\in \mca{M}_{\bm{\rho}}}
    \sup_{\theta\in [-1,1]^d}
    \E_\theta\norm{\hat\theta-\theta}_2^2
    \gtrsim
    d
    \wedge
    \sigma^2
    \left(
      \frac{d}{qn}
      \vee
      \frac{ds}{q\rho n^2}
    \right).
  \end{equation}
\end{proposition}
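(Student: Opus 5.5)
The plan is to reduce \cref{prop:app-missing-features} to \cref{cor:balanced-coordinate-coverage}, with a product prior on the cube. Take \(\pi=\pi_0^{\otimes d}\) on \([-1,1]^d\), where \(\pi_0\) is a smooth density on \([-1,1]\) vanishing at the endpoints, for instance \(\pi_0(t)=\cos^2(\pi t/2)\). This prior has finite Fisher information \(j_0=\pi^2\), so \(J_\pi=j_0 I_d\). To introduce a scale, I would rescale the prior to \([-h,h]^d\) with \(h\in(0,1]\). Then \(J_\pi=(j_0/h^2)I_d\preceq\lambda_\pi I_d\) with \(\lambda_\pi=j_0h^{-2}\). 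The minimax risk dominates the Bayes risk, so it suffices to lower bound \(\E_\pi\E_\theta\norm{\hat\theta-\theta}_2^2\).

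Next I would verify the hypotheses of \cref{cor:balanced-coordinate-coverage}. Under \cref{eq:app-missing-features-model}, one observation from client \(l\) has density on \(\operatorname{range}(P_l)\) proportional to \(\exp(-\norm{x-P_l\theta}^2/(2\sigma^2))\). Its score is \(\sigma^{-2}P_l(x-P_l\theta)\), and its Fisher information is \(\sigma^{-2}P_l\). Hence \(J_l=n\sigma^{-2}P_l\), and \(P_l\) is a coordinate projection of rank \(r=s\). The balanced coverage \cref{eq:app-missing-features-balanced-coverage} is exactly the corollary's condition with \(r=s\). The regularity conditions in \cref{ass:van-trees-regularity} hold for this Gaussian location family together with the smooth compactly supported prior. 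The corollary then gives
\[
\E_\pi\E_\theta\norm{\hat\theta-\theta}_2^2\ge \frac{d}{j_0h^{-2}+\sigma^{-2}qn\left(1\wedge \kappa n/s\right)}.
\]
Since \(\rho\le C\), we have \(\kappa=e^{2\rho}-1\asymp\rho\).

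Finally I would choose \(h\) to balance the two terms in the denominator. Write \(D=\sigma^{-2}qn(1\wedge\rho n/s)\). If \(D\ge j_0\), take \(h^2=j_0/D\le 1\). The denominator is then \(2D\), and the bound is \(\gtrsim d/D=\sigma^2\bigl(\tfrac{d}{qn}\vee\tfrac{ds}{q\rho n^2}\bigr)\). If \(D<j_0\), take \(h=1\). The denominator is at most \(2j_0\), and the bound is \(\gtrsim d\). Combining the two cases yields \(d\wedge\sigma^2\bigl(\tfrac{d}{qn}\vee\tfrac{ds}{q\rho n^2}\bigr)\), which is the claimed \cref{eq:app-missing-features-rate}.

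The main point needing care is checking \cref{ass:van-trees-regularity}, in particular transcript regularity and differentiation under the integral, for a general protocol. Since that assumption is imposed uniformly in the paper's framework, I would only verify its model and prior parts: a Gaussian location family has smooth, square-integrable scores, and the \(\cos^2\) prior vanishes at the boundary. The rest is the routine balancing above.
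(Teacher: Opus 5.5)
Your proposal is correct and follows the paper's route: compute \(J_l=n\sigma^{-2}P_l\), apply \cref{cor:balanced-coordinate-coverage} with \(r=s\) and a product cosine prior on the cube, pass from the Bayes risk to the minimax risk, and use \(\kappa\asymp\rho\) for bounded \(\rho\). The rescaling to \([-h,h]^d\) is harmless but unnecessary: with \(h=1\) the bound \(d/(j_0+D)\) is already \(\asymp d\wedge (d/D)\), and this is exactly how the paper concludes.
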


Thus each coordinate receives information from \(q\) clients, while each
client spends its privacy information budget over \(s\) observed coordinates.
The derivation is in \cref{app:deriv-balanced-coordinate-mean}.

\subsection{Linear Regression}
\label{subsec:app-linear-regression}

\subsubsection{Homogeneous random design}
\label{subsubsec:app-homogeneous-linear-regression}

To connect mean estimation with regression, consider the random design linear
model
\begin{equation}
  \label{eq:app-homogeneous-linear-regression-model}
  Y_i^{(l)}
  =
  (Z_i^{(l)})^\T\theta+\xi_i^{(l)},
  \qquad
  \xi_i^{(l)}\sim \mca{N}(0,\sigma^2),
\end{equation}
where \(Z_i^{(l)}\) is independent of the noise and
\begin{equation}
  \label{eq:app-homogeneous-linear-regression-design}
  c_0 I_d\preceq \E Z_i^{(l)}(Z_i^{(l)})^\T\preceq c_1 I_d
\end{equation}
uniformly in \(l\), for fixed constants \(0<c_0\le c_1<\infty\).

\begin{proposition}[Homogeneous random design linear regression]
  \label{prop:app-homogeneous-linear-regression}
  Under the model \cref{eq:app-homogeneous-linear-regression-model} and design
  condition \cref{eq:app-homogeneous-linear-regression-design}, we have
  \begin{equation}
    \label{eq:app-homogeneous-linear-regression-rate}
    \inf_{\hat\theta\in \mca{M}_{\bm{\rho}}}
    \sup_{\theta\in [-1,1]^d}
    \E_\theta\norm{\hat\theta-\theta}_2^2
    \gtrsim
    d
    \wedge
    \sigma^2
    \zk{
      \sum_{l=1}^m
      \left(
        \frac{d}{n_l}
        \vee
        \frac{d^2}{\rho_l n_l^2}
      \right)^{-1}
    }^{-1}.
  \end{equation}
\end{proposition}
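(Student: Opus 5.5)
The plan is to reduce the minimax risk to a Bayes risk under a smooth product prior on \([-1,1]^d\) and apply \cref{cor:bounded-rho-fed-dp-vantrees}. The observation is a pair \(X=(Z,Y)\), where the design \(Z\) has a \(\theta\)-free distribution. Hence only the conditional Gaussian likelihood contributes to the score:
\[
  s(x;\theta)=\sigma^{-2}Z\,(Y-Z^\T\theta),
  \qquad
  I_x(\theta)=\sigma^{-2}\,\E ZZ^\T .
\]
By \cref{eq:app-homogeneous-linear-regression-design}, this gives \(\norm{I_x}_{\mathrm{op}}\le c_1\sigma^{-2}\) and \(\Tr I_x\le c_1 d\sigma^{-2}\). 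The Fisher information is uniform in \(\theta\) but may depend on \(l\), because the design second moment may vary across clients.

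The model is then not identically distributed across clients, so I will apply the heterogeneous trace consequence of \cref{thm:heterogeneous-matrix-information-contraction} with \(J_l=n_l\sigma^{-2}\E ZZ^\T\). Combined with the multivariate van Trees inequality, this gives the same denominator as in \cref{thm:general-fed-dp-vantrees}. Each clientwise summand satisfies
\[
  \kappa_l n_l\norm{J_l}_{\mathrm{op}}\wedge\Tr J_l
  \le
  c_1\sigma^{-2}\bigl(\kappa_l n_l^2\wedge n_l d\bigr).
\]
Since \(\rho_l\le C\), we have \(\kappa_l\le C'\rho_l\).

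For the prior, take \(\theta_j=h\,\vartheta_j\) with \(\vartheta_j\) i.i.d. from the cosine density \(\cos^2(\pi t/2)\) on \([-1,1]\), and a scale \(h\in(0,1]\) to be chosen. Then \(\Tr J_\pi=\pi^2 d/h^2\). Writing
\[
  \mathcal I=\sigma^{-2}\sum_l\bigl(\rho_l n_l^2\wedge n_l d\bigr),
\]
the bound becomes
\[
  \E\norm{\hat\theta-\theta}^2\gtrsim\frac{d^2}{c\,\mathcal I+\pi^2d/h^2}.
\]
Choose \(h^2=1\wedge(d/\mathcal I)\), which balances the two terms in the denominator. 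This gives
\[
  \E\norm{\hat\theta-\theta}^2\gtrsim d\wedge\frac{d^2}{\mathcal I}.
\]
Finally, the identity
\[
  \frac{d^2}{\mathcal I}
  =\sigma^2\Bigl[\sum_l\bigl(\tfrac{d}{n_l}\vee\tfrac{d^2}{\rho_l n_l^2}\bigr)^{-1}\Bigr]^{-1}
\]
holds because \(\bigl(\rho_l n_l^2\wedge n_l d\bigr)/d^2=\bigl(\tfrac{d}{n_l}\vee\tfrac{d^2}{\rho_ln_l^2}\bigr)^{-1}\). This yields \cref{eq:app-homogeneous-linear-regression-rate}, since the Bayes risk lower bounds the supremum over \([-1,1]^d\) because the prior is supported there.

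The main obstacle I expect is verifying \cref{ass:van-trees-regularity} for the regression model. The score involves \(Z\), which may be unbounded, so square integrability of the score and differentiation under the integral require \(\E\norm{Z}^2<\infty\); this is implied by the design condition. Two further checks are needed. First, the cosine prior must vanish at the boundary so that the boundary terms in van Trees disappear. Second, the randomness of \(Z\) must not interfere with the transcript regularity, which holds because \(Z\) is part of each private observation and its law is free of \(\theta\).
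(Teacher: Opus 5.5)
Your proposal is correct and follows the paper's own derivation. Both compute $I_{x,l}=\sigma^{-2}\E ZZ^\T$ and bound its operator norm and trace via $c_1$. Both combine the heterogeneous trace consequence of the matrix contraction with the scalar van Trees inequality under the product cosine prior on $[-1,1]^d$, then rewrite $d^2/\mathcal I$ in the stated harmonic form. Your rescaling $h$ is unnecessary but harmless: with $h=1$ the bound $d^2/(c\,\mathcal I+\pi^2 d)$ is already $\gtrsim d\wedge d^2/\mathcal I$, and shrinking $h$ only inflates $\Tr J_\pi$, costing a constant here.
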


The rate has the same form as Gaussian mean estimation because the
Fisher information of one observation is uniformly comparable to
\(\sigma^{-2}I_d\) in the well conditioned regime.
The derivation is in \cref{app:deriv-linear-regression}.

\subsubsection{Anisotropic and heterogeneous designs}
\label{subsubsec:app-heterogeneous-linear-regression}

For heterogeneous designs, the matrix form keeps the orientation of the local
Fisher information.
Suppose client \(l\) observes
\begin{equation}
  \label{eq:app-heterogeneous-linear-regression-model}
  Y_i^{(l)}
  =
  (Z_i^{(l)})^\T\theta+\xi_i^{(l)},
  \qquad
  \xi_i^{(l)}\sim \mca{N}(0,\sigma_l^2),
  \qquad
  \E Z_i^{(l)}(Z_i^{(l)})^\T=\Sigma_l .
\end{equation}

\begin{proposition}[Anisotropic and heterogeneous linear regression]
  \label{prop:app-heterogeneous-linear-regression}
  Under the model \cref{eq:app-heterogeneous-linear-regression-model}, we have
  \begin{equation}
    \label{eq:app-heterogeneous-linear-regression-bound}
    \inf_{\hat\theta\in \mca{M}_{\bm{\rho}}}
    \sup_{\theta\in [-1,1]^d}
    \E_\theta\norm{\hat\theta-\theta}_2^2
    \ge
    \inf_{\{B_l\}}
    \Tr
    \left[
      c I_d+
      \sum_{l=1}^m
      n_l\sigma_l^{-2}\Sigma_l^{1/2}B_l\Sigma_l^{1/2}
    \right]^{-1},
  \end{equation}
  where
  \begin{equation}
    \label{eq:app-heterogeneous-linear-regression-feasible}
    0\preceq B_l\preceq P_{\Sigma_l},
    \qquad
    \Tr B_l\le (e^{2\rho_l}-1)n_l .
  \end{equation}
\end{proposition}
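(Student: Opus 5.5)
The plan is to apply \cref{thm:matrix-fed-dp-vantrees} with a product prior on \([-1,1]^d\), after computing the raw client Fisher information \(J_l\) and the prior information \(J_\pi\).

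\emph{Prior.} Take \(\theta_j\) i.i.d.\ from the smooth density \(\pi_0(t)=\cos^2(\pi t/2)\) on \([-1,1]\). This density vanishes at the endpoints, so the boundary conditions of \cref{ass:van-trees-regularity} hold. Its Fisher information is the constant \(\pi^2\), so \(J_\pi=cI_d\) with \(c=\pi^2\).

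\emph{Raw client information.} In model \cref{eq:app-heterogeneous-linear-regression-model} the design law does not depend on \(\theta\). Hence the score of one pair \((Z,Y)\) is \(s=\sigma_l^{-2}Z(Y-Z^\T\theta)\), and its Fisher information is
\[
  I_l=\sigma_l^{-2}\E ZZ^\T=\sigma_l^{-2}\Sigma_l .
\]
Therefore \(J_l=n_l\sigma_l^{-2}\Sigma_l\), which does not depend on \(\theta\). Consequently
\[
  J_l^{1/2}=\sqrt{n_l}\,\sigma_l^{-1}\Sigma_l^{1/2},
  \qquad
  P_{J_l}=P_{\Sigma_l},
\]
and
\[
  J_l^{1/2}B_lJ_l^{1/2}=n_l\sigma_l^{-2}\Sigma_l^{1/2}B_l\Sigma_l^{1/2}.
\]
The feasible set in \cref{thm:matrix-fed-dp-vantrees} is then exactly \cref{eq:app-heterogeneous-linear-regression-feasible}, with \(\kappa_l=e^{2\rho_l}-1\).

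\emph{Assembling the bound.} \Cref{thm:matrix-fed-dp-vantrees} gives
\[
  \E_\pi\E_\theta\norm{\hat\theta-\theta}_2^2
  \ge
  \inf_{\{B_l(\theta)\}}\Tr\Bigl[cI_d+\int\sum_l J_l^{1/2}B_l(\theta)J_l^{1/2}\dd\pi(\theta)\Bigr]^{-1}.
\]
The theorem allows \(\theta\)-dependent measurable families \(B_l(\theta)\). The integrated matrix \(\bar B_l=\int B_l(\theta)\dd\pi(\theta)\) still satisfies \(0\preceq\bar B_l\preceq P_{\Sigma_l}\) and \(\Tr\bar B_l\le\kappa_l n_l\), because both constraints are convex and \(J_l\) is constant in \(\theta\). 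So the infimum over families equals the infimum over constant \(B_l\), which is the right-hand side of \cref{eq:app-heterogeneous-linear-regression-bound}. Finally, the supremum over \(\theta\in[-1,1]^d\) dominates the Bayes risk for any estimator in \(\mca M_{\bm\rho}\); taking the infimum over estimators gives the claim.

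\emph{Main obstacle.} The step needing the most care is checking \cref{ass:van-trees-regularity}. Differentiability and integrability of the Gaussian likelihood are standard, but they require \(\E\|Z\|^2<\infty\), which follows from the finite second moment \(\Sigma_l\). The transcript-regularity part is inherited from the general theorem, since the mechanism kernels do not depend on \(\theta\).
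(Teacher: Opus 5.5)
Your proposal is correct and follows the paper's own route. The paper also computes $J_l=n_l\sigma_l^{-2}\Sigma_l$, substitutes it into \cref{thm:matrix-fed-dp-vantrees} with the product cosine prior on $[-1,1]^d$ (so $J_\pi=cI_d$), and concludes by the minimax-to-Bayes reduction. Your convexity argument, which replaces $\theta$-dependent $B_l(\theta)$ by the averaged $\bar B_l$, makes explicit a step the paper leaves implicit.
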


Here \(c>0\) is a universal constant and \(P_{\Sigma_l}\) denotes the
orthogonal projection onto \(\operatorname{range}(\Sigma_l)\).
The transcript contributes no information in directions outside
\(\operatorname{span}\{\operatorname{range}(\Sigma_l):1\le l\le m\}\).
This is the directional obstruction that a scalar trace denominator does not
record.
The derivation is in \cref{app:deriv-heterogeneous-linear-regression}.

\subsubsection{Target domain prediction}
\label{subsubsec:app-target-domain-prediction}

The same matrix form also applies when prediction is evaluated only on a target
subspace.
In the heterogeneous linear model
\cref{eq:app-heterogeneous-linear-regression-model}, let \(P_0\) be the projection onto an
\(r\)-dimensional target subspace and consider
\begin{equation}
  \label{eq:app-target-domain-loss}
  L_0(\hat\theta,\theta)
  =
  (\hat\theta-\theta)^\T P_0(\hat\theta-\theta).
\end{equation}
Suppose exactly \(q\) clients are
target relevant with \(n_l=n\), \(\rho_l=\rho\), \(\sigma_l=\sigma\), and
\(\Sigma_l=P_0\), while all other clients satisfy \(P_0\Sigma_lP_0=0\):
\begin{equation}
  \label{eq:app-target-domain-relevance}
  \begin{gathered}
    \#\{l:\Sigma_l=P_0,\ n_l=n,\ \rho_l=\rho,\ \sigma_l=\sigma\}=q,\\
    P_0\Sigma_lP_0=0\quad\text{for all other clients.}
  \end{gathered}
\end{equation}

\begin{proposition}[Target domain prediction]
  \label{prop:app-target-domain-prediction}
  Under the model \cref{eq:app-heterogeneous-linear-regression-model}, target
  loss \cref{eq:app-target-domain-loss}, and relevance condition
  \cref{eq:app-target-domain-relevance}, we have
  \begin{equation}
    \label{eq:app-target-domain-prediction-rate}
    \inf_{\hat\theta\in \mca{M}_{\bm{\rho}}}
    \sup_{\theta\in B_2(0,1)}
    \E_\theta L_0(\hat\theta,\theta)
    \gtrsim
    1
    \wedge
    \sigma^2
    \left(
      \frac{r}{qn}
      \vee
      \frac{r^2}{q\rho n^2}
    \right).
  \end{equation}
\end{proposition}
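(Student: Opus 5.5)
The plan is to reduce the target-loss minimax risk to a Bayes risk under a prior supported on the target subspace, and then apply the matrix contraction of \cref{thm:heterogeneous-matrix-information-contraction} inside a projected van Trees inequality.

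\textbf{Prior.} Fix an orthonormal basis \(U\in\R^{d\times r}\) of \(\operatorname{range}(P_0)\) and write \(\theta=U\beta\). Take \(\beta\) with i.i.d.\ coordinates from a smooth compactly supported density on \([-h,h]\), for example \(\cos^2\). The scale is \(h=c/\sqrt r\), so that \(\norm{\theta}_2\le1\). Then \(J_\pi\) in the \(\beta\)-parametrization equals \(c' r h^{-2}\cdot I_r/r\), that is, \(J_\pi\asymp (r/h^2)I_r/r\cdot r\asymp r I_r\) up to constants. More carefully, \(J_\pi = c''h^{-2}I_r\asymp r I_r\). Since \(L_0(\hat\theta,\theta)\ge\norm{U^\T\hat\theta-\beta}_2^2\), it suffices to lower bound the Bayes risk for estimating \(\beta\in\R^r\).

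\textbf{Reparametrized model.} Reparametrize the whole experiment by \(\beta\), with all other coordinates fixed at zero. A client's score in \(\beta\) is \(U^\T s_{li}\), so its raw information is \(J_l^\beta=U^\T J_lU=n_l\sigma_l^{-2}U^\T\Sigma_lU\). For irrelevant clients, \(P_0\Sigma_lP_0=0\) gives \(U^\T\Sigma_lU=0\), so \(J_l^\beta=0\). For each of the \(q\) relevant clients, \(J_l^\beta=n\sigma^{-2}I_r\). The protocol is still \(\bm\rho\)-zCDP, since privacy does not depend on the parametrization. Apply \cref{thm:matrix-fed-dp-vantrees} in dimension \(r\):
\[
\E\norm{\hat\beta-\beta}^2\ge\inf\Tr\Big[J_\pi+n\sigma^{-2}\sum_{l\text{ relevant}}B_l\Big]^{-1},
\]
subject to \(0\preceq B_l\preceq I_r\) and \(\Tr B_l\le\kappa n\). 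The irrelevant clients contribute nothing because \(A_l\preceq J_l^\beta=0\).

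\textbf{Solving the optimization.} Set \(M=\sum_l B_l\). Then \(M\preceq qI_r\) and \(\Tr M\le q(\kappa n\wedge r)\). By convexity of \(\Tr(\cdot)^{-1}\), Jensen's inequality on the eigenvalues \(\mu_j\) of \(M\) gives
\[
\sum_j\frac{1}{a+n\sigma^{-2}\mu_j}\ge\frac{r^2}{ra+n\sigma^{-2}\Tr M}\ge\frac{r}{a+\sigma^{-2}qn(1\wedge\kappa n/r)},
\]
where \(a\asymp r\) bounds the eigenvalues of \(J_\pi\). This is the same calculation as in \cref{cor:balanced-coordinate-coverage}.

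\textbf{Simplifying.} Use \(\kappa\asymp\rho\) for bounded \(\rho\). The resulting lower bound is
\[
\frac{r}{r+\sigma^{-2}qn(1\wedge\rho n/r)}\gtrsim 1\wedge\sigma^2\Big(\frac{r}{qn}\vee\frac{r^2}{q\rho n^2}\Big),
\]
by case analysis on which denominator term dominates. Since a Bayes risk lower bounds the supremum over \(B_2(0,1)\), the proposition follows.

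\textbf{Main obstacle.} The hard step is justifying the reparametrization. Concretely, \cref{thm:matrix-fed-dp-vantrees} must be applied to the subfamily \(\{U\beta\}\), and its regularity conditions, in particular the transcript regularity in \cref{ass:van-trees-regularity}, must hold for the restricted Gaussian model. The same applies to the claim that the trace inverse only needs the \(r\)-dimensional block. I expect this to be routine given that the projected-loss version is mentioned after \cref{thm:matrix-fed-dp-vantrees}.
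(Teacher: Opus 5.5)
Your proposal is correct and follows essentially the same route as the paper. Both arguments use a product cosine prior of radius $r^{-1/2}$ on the target subspace with $J_\pi\asymp rI_r$, note that clients with $P_0\Sigma_lP_0=0$ contribute no target information, and apply the balanced-coordinate convexity calculation of \cref{cor:balanced-coordinate-coverage} with dimension $r$, coverage $q$, and rank $r$, followed by $\kappa\asymp\rho$. Two minor points: your intermediate line for $J_\pi$ is garbled, though the corrected statement $J_\pi=c''h^{-2}I_r\asymp rI_r$ is right; and since $J_l$ does not depend on $\theta$ here, averaging any $\theta$-dependent feasible $B_l(\theta)$ over $\pi$ stays feasible, so treating $B_l$ as constant is harmless.
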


Only the target relevant clients contribute to this lower bound; information in
orthogonal source directions is irrelevant for \(L_0\).
A trace calculation alone cannot make this distinction, because it summarizes
source clients by total Fisher mass rather than by Fisher mass in the target
subspace.
The matrix bound removes clients whose information is supported in
\(P_0^\perp\) from the denominator of the lower bound for the target risk.
The derivation is in \cref{app:deriv-target-domain-prediction}.

\subsection{Nonparametric Regression}
\label{subsec:app-nonparametric-regression}

The scalar inequality also gives nonparametric rates after projection onto
finite-dimensional Sobolev submodels.
Consider the nonparametric regression model on \([0,1]^d\),
\begin{equation}
  \label{eq:app-nonparametric-regression-model}
  Y_i^{(l)}=f(U_i^{(l)})+\xi_i^{(l)},
  \qquad
  \xi_i^{(l)}\sim \mca{N}(0,\sigma^2),
\end{equation}
where the design density is bounded above and below and
\(f\in H^\alpha(R)\).

\begin{proposition}[Nonparametric regression]
  \label{prop:app-nonparametric-regression}
  Under the model \cref{eq:app-nonparametric-regression-model}, we have
  \begin{equation}
    \label{eq:app-nonparametric-regression-general-rate}
    \inf_{\hat f\in\mca{M}_{\bm{\rho}}}
    \sup_{f\in H^\alpha(R)}
    \E_f\norm{\hat f-f}_{L^2}^2
    \gtrsim
    \sup_{p\ge 1}
    \zk{
      \frac{p^{2\alpha/d}}{R^2}
      +
      \frac{1}{\sigma^2}
      \sum_{l=1}^m
      \left(
        \frac{p}{n_l}
        \vee
        \frac{p^2}{\rho_l n_l^2}
      \right)^{-1}
    }^{-1}.
  \end{equation}
  In the homogeneous case \(n_l=n\) and \(\rho_l=\rho\), this implies
  \begin{equation}
    \label{eq:app-nonparametric-regression-homogeneous-rate}
    \inf_{\hat f\in\mca{M}_{\bm{\rho}}}
    \sup_{f\in H^\alpha(R)}
    \E_f\norm{\hat f-f}_{L^2}^2
    \gtrsim
    R^{\frac{2d}{2\alpha+d}}
    \left(
      \frac{\sigma^2}{mn}
    \right)^{\frac{2\alpha}{2\alpha+d}}
    +
    R^{\frac{2d}{\alpha+d}}
    \left(
      \frac{\sigma^2}{\rho m n^2}
    \right)^{\frac{\alpha}{\alpha+d}} .
  \end{equation}
\end{proposition}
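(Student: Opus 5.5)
The plan is to reduce to a finite-dimensional parametric submodel and apply \cref{thm:general-fed-dp-vantrees} (or \cref{cor:bounded-rho-fed-dp-vantrees}, since the privacy budgets are bounded). Fix \(p\ge1\). Choose an \(L^2([0,1]^d)\)-orthonormal system \(\phi_1,\ldots,\phi_p\) of localized bumps: partition \([0,1]^d\) into \(p\) cubes of side \(h\asymp p^{-1/d}\), and let \(\phi_j = h^{-d/2}\psi((u-c_j)/h)\) with disjoint supports for a fixed smooth compactly supported \(\psi\). Set \(f_\theta=\sum_j \theta_j\phi_j\). Then \(\norm{f_\theta}_{H^\alpha}^2\lesssim p^{2\alpha/d}\norm{\theta}_2^2\). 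Hence \(f_\theta\in H^\alpha(R)\) whenever \(\theta\) lies in a cube \([-\tau,\tau]^p\) with \(\tau\asymp R p^{-\alpha/d}p^{-1/2}\), equivalently \(\norm{\theta}_2\lesssim R p^{-\alpha/d}\). By orthonormality, \(\norm{\hat f - f_\theta}_{L^2}^2\ge\norm{\hat\theta-\theta}_2^2\), where \(\hat\theta_j=\ang{\hat f,\phi_j}\) is a function of the transcript. So the minimax risk is at least the Bayes risk for \(\theta\) under any prior on that cube.

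Next I will compute the per-observation Fisher information. For one observation \((U,Y)\), \(I_x(\theta)=\sigma^{-2}\E[\phi(U)\phi(U)^\T]\) with \(\phi=(\phi_1,\ldots,\phi_p)\). Because the supports are disjoint and the design density is bounded above and below, this matrix is diagonal with entries \(\asymp\sigma^{-2}\). Therefore \(\norm{I_x}_{\mathrm{op}}\lesssim\sigma^{-2}\) and \(\Tr I_x\lesssim p\sigma^{-2}\), uniformly in \(\theta\). For the prior I will take a product of rescaled cosine-squared densities on \([-\tau,\tau]\), which gives \(\Tr J_\pi\asymp p/\tau^2\asymp p^{2\alpha/d+2}/R^2\). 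The regularity conditions of \cref{ass:van-trees-regularity} should follow routinely, since the prior vanishes at the boundary and the Gaussian location family is smooth. \Cref{cor:bounded-rho-fed-dp-vantrees} then yields
\[
\E\norm{\hat\theta-\theta}_2^2\gtrsim\frac{p^2}{\sigma^{-2}\sum_l(\rho_l n_l^2\wedge n_l p)+p^{2+2\alpha/d}/R^2}.
\]
Dividing numerator and denominator by \(p^2\), and using \(\rho_l n_l^2/p^2\wedge n_l/p=(p/n_l\vee p^2/(\rho_l n_l^2))^{-1}\), gives exactly the bracket in \cref{eq:app-nonparametric-regression-general-rate}. Taking the supremum over \(p\) completes the general bound.

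For the homogeneous form \cref{eq:app-nonparametric-regression-homogeneous-rate}, I will bound the supremum from below by choosing \(p\) separately for each regime and then use \(a\vee b\asymp a+b\). For the sampling term, replace the information term by \(mn/(p\sigma^2)\) and balance it against \(p^{2\alpha/d}/R^2\). This gives \(p\asymp(R^2mn/\sigma^2)^{d/(2\alpha+d)}\) and the value \(R^{2d/(2\alpha+d)}(\sigma^2/mn)^{2\alpha/(2\alpha+d)}\). For the privacy term, replace the information term by \(\rho mn^2/(p^2\sigma^2)\) and balance it against \(p^{2\alpha/d}/R^2\). This gives \(p\asymp(R^2\rho mn^2/\sigma^2)^{d/(2\alpha+2d)}\) and the value \(R^{2d/(\alpha+d)}(\sigma^2/\rho mn^2)^{\alpha/(\alpha+d)}\). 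Both substitutions are legitimate because each information term upper-bounds the minimum appearing in the denominator.

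The main obstacle is the edge regime where the optimizing \(p\) is below \(1\), i.e.\ extremely large noise or small \(R\). In that case I will take \(p=1\), which gives a bound of order \(R^2\) and is what the displayed rate reduces to, up to constants. Integer rounding of \(p\) only affects constants. The remaining care is verifying the Sobolev norm bound for the bump basis, which uses the standard scaling \(\norm{\psi(\cdot/h)}_{H^\alpha}\lesssim h^{-\alpha}\) on each disjoint cell.
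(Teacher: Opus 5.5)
Your proposal is correct and follows the paper's proof essentially step for step. It uses a $p$-dimensional $L^2$-orthonormal submodel with a product cosine prior of radius $\asymp Rp^{-1/2-\alpha/d}$ (so $\Tr J_\pi\asymp R^{-2}p^{2+2\alpha/d}$), the bounds $\norm{I_x}_{\mathrm{op}}\lesssim\sigma^{-2}$ and $\Tr I_x\lesssim p\sigma^{-2}$, \cref{cor:bounded-rho-fed-dp-vantrees}, division by $p^2$, a supremum over $p$, and separate balancing of the sampling and privacy terms; your localized-bump basis and the projection $\hat\theta_j=\ang{\hat f,\phi_j}$ only make explicit what the paper leaves implicit.

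One correction to your edge-regime remark. When a balancing resolution falls below $1$ (e.g.\ $R^2mn<\sigma^2$), the displayed homogeneous rate equals $R^2$ times a factor larger than $1$, so it does not reduce to $R^2$. Taking $p=1$ therefore yields only the display capped at $R^2$. That cap is the most that can hold, since $\hat f=0$ has risk at most $R^2$. The homogeneous display should thus be read in the regime where both balancing values of $p$ are at least one, which the paper assumes tacitly.
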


The supremum over \(p\) comes from applying the lower bound to each
\(p\)-dimensional Sobolev submodel and then taking the strongest resulting
lower bound.
Here \(p\) indexes the finite-dimensional reduction: the first term is the
Sobolev truncation contribution, and the sum is the transcript information
available at that resolution.
The first component is the ordinary nonprivate sampling term, while the
second is the privacy limited term.
The derivation is in \cref{app:deriv-nonparametric-regression}.

\subsection{Functional Mean Estimation}
\label{subsec:app-functional-mean}

We finally consider functional mean estimation from discretely observed curves.
On client \(l\), the \(n_l\) private samples are independent curves.
The \(i\)-th curve is observed through
\begin{equation}
  \label{eq:app-functional-mean-model}
  Y_{ij}^{(l)}
  =
  X_i^{(l)}(T_{ij}^{(l)})+\xi_{ij}^{(l)},
  \qquad
  j=1,\ldots,k_l,
\end{equation}
where \(\xi_{ij}^{(l)}\sim\mca{N}(0,1)\) and
\(f=\E X_i^{(l)}\in H^\alpha(R)\), \(\alpha>1/2\), is the mean function.

In the terminology of \cref{def:full-transcript-zcdp}, the sample-level
privacy unit is one whole discretely observed curve
\(\{(T_{ij}^{(l)},Y_{ij}^{(l)}):1\le j\le k_l\}\), not one individual grid
measurement.
Thus adjacent local datasets differ by replacing one curve, and the lower
bounds combine measurement sparsity with variation at the curve level.

We consider two design settings.
In independent design, the measurement locations are part of the private curve
sample and are independently sampled for each curve.
In common design, the grid is public and common across curves.
The lower bounds have the same form as the lower bounds for noninteractive
algorithms in \citet{cai2024_OptimalFederatedFunctional}.

\subsubsection{Independent design}
\label{subsubsec:app-functional-independent}

Suppose the measurement locations are private and independently sampled for
each curve:
\begin{equation}
  \label{eq:app-functional-independent-design}
  T_{ij}^{(l)}\stackrel{\mathrm{ind}}{\sim}\mu_l,
  \qquad
  0<c_0\le \frac{\dd\mu_l}{\dd t}\le c_1<\infty .
\end{equation}
To use the van Trees lower bound,
we consider finite-dimensional submodels
\(f_\theta=\sum_{r=1}^p\theta_r\phi_r\), where \(p\) is the resolution
parameter and \(\phi_r\) is taken from the Fourier basis.
Let \(I_{l,p}^{\mathrm{ind}}(\theta)\) be the Fisher information matrix in the
coefficient parameter from one discretely observed curve from client \(l\).
The information contribution from one curve is limited by both the number of
measurements \(k_l\) and Gaussian variation at the curve level:
\[
  \norm{I_{l,p}^{\mathrm{ind}}(\theta)}_{\mathrm{op}}
  \lesssim k_l\wedge p^{2\alpha+1},
  \qquad
  \Tr I_{l,p}^{\mathrm{ind}}(\theta)
  \lesssim k_lp\wedge p^{2\alpha+2}.
\]
The \(k_l\)-terms come from the fact that only \(k_l\) random measurement
locations are observed for a curve.
The \(p^{2\alpha+1}\)-terms come from the Gaussian variation at the curve
level, which limits how much information one whole curve can carry about the
\(p\) mean coefficients.
Plugging these bounds into the van Trees lower bound gives the following result.
The detailed derivation is given in \cref{app:deriv-functional-independent}.

\begin{proposition}[Functional mean estimation with independent design]
  \label{prop:app-functional-independent}
  Under the model \cref{eq:app-functional-mean-model} and independent design
  condition \cref{eq:app-functional-independent-design}, we have
  \begin{equation}
    \label{eq:app-functional-independent-general-rate}
    \begin{aligned}
      &\inf_{\hat f\in\mca{M}_{\bm{\rho}}}
      \sup_{f\in H^\alpha(R)}
      \E_f\norm{\hat f-f}_{L^2}^2 \\
      &\qquad\gtrsim
      \sup_{p\ge1}
      \left\{
        p^{2\alpha}
        +
        \sum_{l=1}^m
        \left(
          \frac{k_l n_l}{p}
          \wedge
          \frac{\rho_l k_l n_l^2}{p^2}
          \wedge
          p^{2\alpha}n_l
          \wedge
          p^{2\alpha-1}\rho_l n_l^2
        \right)
      \right\}^{-1}.
    \end{aligned}
  \end{equation}
  In the homogeneous case \(n_l=n\), \(k_l=k\), and \(\rho_l=\rho\), this gives
  \begin{equation}
    \label{eq:app-functional-independent-homogeneous-rate}
    \inf_{\hat f\in\mca{M}_{\bm{\rho}}}
    \sup_{f\in H^\alpha(R)}
    \E_f\norm{\hat f-f}_{L^2}^2
    \gtrsim
    \frac{1}{mn}
    +
    (mkn)^{-\frac{2\alpha}{2\alpha+1}}
    +
    (\rho m k n^2)^{-\frac{\alpha}{\alpha+1}}
    +
    \frac{1}{\rho m n^2}.
  \end{equation}
\end{proposition}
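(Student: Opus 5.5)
We prove \cref{prop:app-functional-independent} by reducing to finite-dimensional Fourier submodels and applying \cref{cor:bounded-rho-fed-dp-vantrees} on each one. Fix a resolution \(p\ge1\). Let \(f_\theta=\sum_{r=1}^p\theta_r\phi_r\), and confine \(\theta\) to a box \(\prod_r[-c\,p^{-\alpha-1/2},c\,p^{-\alpha-1/2}]\). For small \(c\) (depending on \(R\)) every such \(f_\theta\) lies in \(H^\alpha(R)\): we have \(\sum_r r^{2\alpha}\theta_r^2\lesssim c^2p^{2\alpha+1}p^{-2\alpha-1}\). On this box we place the product of rescaled \(\cos^2\) densities. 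Its Fisher information is \(J_\pi=\lambda_\pi I_p\) with \(\lambda_\pi\asymp p^{2\alpha+1}\), so \(\Tr J_\pi\asymp p^{2\alpha+2}\).

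Next we specify a curve model. Take \(X_i^{(l)}=f_\theta+Z_i\), where \(Z_i\) is a Gaussian process independent of \(\theta\) with covariance \(\sum_r \lambda_r\phi_r\otimes\phi_r\) and \(\lambda_r\asymp r^{-2\alpha-1}\). This is a legitimate curve distribution whose mean is \(f_\theta\). By orthonormality of the Fourier basis, \(\norm{\hat f-f_\theta}_{L^2}^2=\norm{\hat\theta-\theta}_2^2\) for the projected estimator, so it suffices to lower bound the \(\ell_2\) risk in \(\theta\). Minimax risk dominates Bayes risk, and the protocol class is unchanged by the reduction.

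The core step is the per-curve Fisher information bounds stated before the proposition. Conditional on the locations \(T_{i\cdot}\), one curve gives the Gaussian vector \(Y_{i\cdot}\sim\mca N(\Phi\theta,\Phi\Lambda\Phi^\T+\tilde K+I_{k})\). Here \(\Phi_{jr}=\phi_r(T_{ij})\), and \(\tilde K\) collects the process components beyond \(p\). Since the locations carry no information about \(\theta\), the information is
\[
I=\E_T\bigl[\Phi^\T(\Phi\Lambda\Phi^\T+\tilde K+I)^{-1}\Phi\bigr].
\]
Dropping \(\Phi\Lambda\Phi^\T+\tilde K\) gives \(I\preceq \E\,\Phi^\T\Phi=k_l\cdot(\text{Gram})\preceq c_1k_lI_p\), because \(\mu_l\) has bounded density. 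This yields the bounds \(k_l\) in operator norm and \(k_lp\) in trace. Keeping \(\Lambda\) instead, the identity \(\Phi^\T(\Phi\Lambda\Phi^\T+I)^{-1}\Phi\preceq\Lambda^{-1}\) gives \(I\preceq\Lambda^{-1}\). Hence the operator norm is \(\lesssim p^{2\alpha+1}\) and the trace is \(\lesssim\sum_{r\le p}r^{2\alpha+1}\asymp p^{2\alpha+2}\). Combining the two,
\[
n_l\Tr I\lesssim \bigl(k_lpn_l\bigr)\wedge\bigl(p^{2\alpha+2}n_l\bigr),\qquad
\rho_ln_l^2\norm{I}_{\mathrm{op}}\lesssim \bigl(\rho_lk_ln_l^2\bigr)\wedge\bigl(p^{2\alpha+1}\rho_ln_l^2\bigr).
\]
The Gaussian identity \(\Phi^\T(\Phi\Lambda\Phi^\T+I)^{-1}\Phi\preceq\Lambda^{-1}\) is the step needing most care. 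We verify it through the Woodbury form \(\Lambda^{-1}-\Lambda^{-1}(\Lambda^{-1}+\Phi^\T\Phi)^{-1}\Lambda^{-1}\). We also have to check the regularity conditions of \cref{ass:van-trees-regularity} for this Gaussian location model, which is routine.

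Finally we insert these bounds into \cref{cor:bounded-rho-fed-dp-vantrees}. This gives risk
\[
\gtrsim \frac{p^2}{p^{2\alpha+2}+\sum_l\bigl[(k_lpn_l)\wedge(p^{2\alpha+2}n_l)\wedge(\rho_lk_ln_l^2)\wedge(p^{2\alpha+1}\rho_ln_l^2)\bigr]}.
\]
Dividing the numerator and denominator by \(p^2\) gives exactly \cref{eq:app-functional-independent-general-rate}. Taking the supremum over \(p\) then yields the general bound.

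For the homogeneous rate \cref{eq:app-functional-independent-homogeneous-rate}, we choose \(p\) to balance \(p^{2\alpha}\) against each of the four summands in turn.
\begin{itemize}
\item \(p^{2\alpha}\asymp mkn/p\) gives \((mkn)^{-2\alpha/(2\alpha+1)}\).
\item \(p^{2\alpha}\asymp\rho mkn^2/p^2\) gives \((\rho mkn^2)^{-\alpha/(\alpha+1)}\).
\item \(p=1\) gives \(1/(mn)\) and \(1/(\rho mn^2)\) from the last two terms.
\end{itemize}
Since the infimum of the four terms is at most each one, each choice gives a valid lower bound, and the maximum of the four is comparable to their sum.
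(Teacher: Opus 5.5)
Your proposal is essentially the paper's proof. It uses the same \(p\)-dimensional submodel with coefficient radius \(p^{-\alpha-1/2}\) and \(\Tr J_\pi\asymp p^{2\alpha+2}\), the same Gaussian curve submodel with the two envelopes \(I\preceq Ck_lI_p\) (drop the curve covariance) and \(I\preceq\Lambda^{-1}\) (Woodbury), and the same plug-in and balancing. Two small points need fixing: the paper uses a flat, \(p\)-dependent curve variance \(\tau_p^2\asymp p^{-2\alpha-1}\) on the first \(p\) modes so that \(\E\norm{X-f_\theta}_{H^\alpha}^2\lesssim 1\), whereas your fixed spectrum \(\lambda_r\asymp r^{-2\alpha-1}\) gives the same envelopes but has logarithmically divergent \(H^\alpha\) energy; and since \(k_l\) and \(\mu_l\) vary with \(l\), the model is not the homogeneous one required by \cref{cor:bounded-rho-fed-dp-vantrees}, so you should combine the scalar van Trees inequality with the heterogeneous trace consequence of \cref{thm:heterogeneous-matrix-information-contraction} using \(J_l=n_lI_{l,p}^{\mathrm{ind}}\), which yields the identical denominator.
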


In the homogeneous independent-design case, the privacy term follows by
balancing the Sobolev contribution \(p^{2\alpha}\) with
\(\rho m k n^2/p^2\), while the low-frequency privacy bottleneck is obtained
at \(p=1\).
The full optimization over \(p\) is given in
\cref{app:deriv-functional-independent}.

\subsubsection{Common design}
\label{subsubsec:app-functional-common}

Suppose the grid is public and common across curves, with \(k_l=k\):
\begin{equation}
  \label{eq:app-functional-common-design}
  T_{ij}^{(l)}=t_j,
  \qquad j=1,\ldots,k,\quad l=1,\ldots,m .
\end{equation}
We again use finite-dimensional submodels
\(f_\theta=\sum_{r=1}^p\theta_r\phi_r\), now with \(\phi_r\) chosen as
localized bump functions aligned with the public grid.
Let \(I_{p}^{\mathrm{com}}(\theta)\) be the Fisher information matrix in the
coefficient parameter from one whole curve observed on the common grid.
In this case, we have Fisher information control
\[
  \norm{I_p^{\mathrm{com}}(\theta)}_{\mathrm{op}}\lesssim p,\quad \Tr I_p^{\mathrm{com}}(\theta)\lesssim p^2.
\]
Together with the usual discretization obstruction term \(k^{-2\alpha}\), we
have the following lower bound, whose derivation is in
\cref{app:deriv-functional-common}.

\begin{proposition}[Functional mean estimation with common design]
  \label{prop:app-functional-common}
  Under the model \cref{eq:app-functional-mean-model} and common design
  condition \cref{eq:app-functional-common-design}, we have
  \begin{equation}
    \label{eq:app-functional-common-general-rate}
    \inf_{\hat f\in\mca{M}_{\bm{\rho}}}
    \sup_{f\in H^\alpha(R)}
    \E_f\norm{\hat f-f}_{L^2}^2
    \gtrsim
    k^{-2\alpha}
    +
    \sup_{1\le p\le k}
    \left\{
      p^{2\alpha}
      +
      \sum_{l=1}^m
      \left(
        n_l
        \wedge
        \frac{\rho_l n_l^2}{p}
      \right)
    \right\}^{-1}.
  \end{equation}
  In the homogeneous case \(n_l=n\) and \(\rho_l=\rho\),
  \begin{equation}
    \label{eq:app-functional-common-homogeneous-rate}
    \inf_{\hat f\in\mca{M}_{\bm{\rho}}}
    \sup_{f\in H^\alpha(R)}
    \E_f\norm{\hat f-f}_{L^2}^2
    \gtrsim
    \frac{1}{mn}
    +
    k^{-2\alpha}
    +
    (\rho m n^2)^{-\frac{2\alpha}{2\alpha+1}},
  \end{equation}
\end{proposition}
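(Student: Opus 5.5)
The plan is to reduce to a \(p\)-dimensional submodel built from localized bumps on the public grid, apply the scalar van Trees bound of \cref{thm:general-fed-dp-vantrees} in its bounded-privacy form (\cref{cor:bounded-rho-fed-dp-vantrees}) to get the \(\sup_p\) term, and prove the \(k^{-2\alpha}\) term by a separate non-identifiability argument. The two lower bounds are then combined through \(a\vee b\gtrsim a+b\).

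\textbf{The discretization term.} The whole transcript is a measurable function of the data, so its law depends on \(f\) only through the values \((f(t_1),\ldots,f(t_k))\). I take a bump \(g\in H^\alpha(R)\) that vanishes at every grid point \(t_j\) and satisfies \(\norm{g}_{L^2}^2\asymp k^{-2\alpha}\); for example, a sum of \(k\) bumps of width \(\asymp 1/k\) placed between consecutive grid points, with amplitude \(\asymp k^{-\alpha}\). Then \(f=0\) and \(f=g\) induce identical transcript distributions, and a two-point argument gives risk \(\gtrsim k^{-2\alpha}\).

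\textbf{The \(\sup_p\) term, for fixed \(1\le p\le k\).} The steps are as follows.
\begin{enumerate}
\item Partition the grid into \(p\) blocks of \(\asymp k/p\) consecutive points. Let \(\phi_r\) be an \(L^2\)-normalized bump supported on block \(r\) and smoothed at scale \(1/p\). Then \(\norm{\sum_r\theta_r\phi_r}_{H^\alpha}^2\lesssim p^{2\alpha}\norm{\theta}_2^2\), and by orthogonality \(\norm{f_\theta-f_{\theta'}}_{L^2}=\norm{\theta-\theta'}_2\).
\item Model the curves as \(X_i=f_\theta+Z_i\), where \(Z_i\) is a fixed Gaussian process, so each observed curve is \(Y_{i\cdot}\sim\mca N(\Phi\theta,\Sigma_Z+I_k)\), with \(\Phi\in\R^{k\times p}\) the evaluation matrix \(\Phi_{jr}=\phi_r(t_j)\).
\item Compute the per-curve Fisher information \(I_p^{\mathrm{com}}=\Phi^\T(\Sigma_Z+I_k)^{-1}\Phi\preceq\Phi^\T\Phi\). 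Since \(\phi_r\) has height \(\asymp\sqrt p\) on its \(k/p\) points, this gives \(\Phi^\T\Phi\asymp k\,I_p\) with the measurement noise only, which is too large.
\item Choose \(Z_i\) to supply curve-level variation along the span of the \(\phi_r\): take \(Z_i=\sum_r\zeta_{ir}\phi_r\) with \(\zeta_{ir}\sim\mca N(0,c/p)\). This stays inside the smoothness class up to a constant. By Woodbury, \(\Phi^\T(\Phi\Phi^\T c/p+I)^{-1}\Phi\preceq (p/c)I_p\), which gives \(\norm{I_p^{\mathrm{com}}}_{\mathrm{op}}\lesssim p\) and \(\Tr I_p^{\mathrm{com}}\lesssim p^2\), as stated.
\item Place the product prior \(\theta_r\sim h(\cdot/\delta)/\delta\), with \(h\) a smooth compactly supported density and \(\delta\asymp p^{-\alpha-1/2}\), so that \(f_\theta\in H^\alpha(R)\). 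Its Fisher information is \(\Tr J_\pi\asymp p\,\delta^{-2}\asymp p^{2\alpha+2}\).
\item \Cref{cor:bounded-rho-fed-dp-vantrees} then gives
\[
\frac{p^2}{p^{2\alpha+2}+\sum_l\big(\rho_l n_l^2 p\wedge n_l p^2\big)}
=\Big\{p^{2\alpha}+\sum_l\big(n_l\wedge \rho_l n_l^2/p\big)\Big\}^{-1}.
\]
\item Take the supremum over \(1\le p\le k\), then add the \(k^{-2\alpha}\) term from the first part.
\end{enumerate}

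\textbf{Homogeneous case.} The bracket becomes \(p^{2\alpha}+m(n\wedge\rho n^2/p)\).
\begin{itemize}
\item At \(p=1\) the bound is \(\gtrsim 1/(mn)\), using \(\rho\le C\).
\item Balancing \(p^{2\alpha}\) against \(\rho mn^2/p\) gives \(p\asymp(\rho mn^2)^{1/(2\alpha+1)}\) and the rate \((\rho mn^2)^{-2\alpha/(2\alpha+1)}\), provided this \(p\le k\). If it exceeds \(k\), the term is dominated by \(k^{-2\alpha}\) anyway.
\end{itemize}

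\textbf{Main obstacle.} I expect the hardest step to be the Fisher information construction in the third and fourth steps. The bounds \(\norm{I}_{\mathrm{op}}\lesssim p\) and \(\Tr I\lesssim p^2\) hold only because of curve-level randomness aligned with the bump basis, and that randomness must simultaneously remain admissible in the model (mean in \(H^\alpha(R)\), sample paths reasonable) and not depend on \(\theta\). I would first verify the Woodbury computation with isotropic coefficient noise of variance \(1/p\), and check that the regularity conditions of \cref{ass:van-trees-regularity} hold for this Gaussian family.
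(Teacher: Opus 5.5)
Your proposal is correct and follows essentially the same route as the paper. The shared steps are disjoint-support bumps on $p$ grid blocks and curve-level Gaussian coefficient noise of variance $c/p$, which gives $I_p^{\mathrm{com}}\preceq CpI_p$; you get this bound via the Woodbury/push-through identity, while the paper uses orthogonality of the grid vectors and blockwise Sherman--Morrison. The rest also matches: a product prior with $\Tr J_\pi\asymp p^{2\alpha+2}$ plugged into the bounded-privacy corollary, and a separate grid-invisible bump for the $k^{-2\alpha}$ term. One small correction is needed: the noise $Z_i=\sum_r\zeta_{ir}\phi_r$ does not keep sample paths in a bounded $H^\alpha$ ball, since $\E\norm{Z_i}_{H^\alpha}^2\asymp c\,p^{2\alpha}$, but it has bounded $L^2$ energy, which is what the paper invokes and which suffices because the model constrains only the mean $f=\E X_i$.
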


For common design, the privacy exponent comes from balancing
\(p^{2\alpha}\) with \(\rho m n^2/p\), subject to \(p\le k\).
When this resolution exceeds the public grid, the discretization term
\(k^{-2\alpha}\) is active.
The full optimization over \(p\) is given in \cref{app:deriv-functional-common}.
 \section{Conclusion}
\label{sec:conclusion}

This paper developed a van Trees lower bound framework for federated learning
under zCDP at the sample level with complete public transcripts.
The key feature of the framework is that the transcript may be fully
interactive: clients may communicate over many adaptive rounds, the server's
later queries may depend on earlier public messages, and local samples may be
reused across rounds.
The privacy constraint is imposed only on the complete ordered transcript, and
each client is charged by its total zCDP budget.

The main technical ingredient is a Fisher information contraction inequality
for the complete transcript under privacy.
In homogeneous models, this gives a scalar trace bound and a federated van
Trees inequality that reduce lower bound calculations to the Fisher information
of one observation and the clientwise privacy budgets.
For heterogeneous models, the matrix extension keeps the directional Fisher
geometry of the local experiments.
It decomposes the transcript information into clientwise positive semidefinite
contributions, each constrained by the client's raw Fisher information and a
whitened privacy information budget.
This matrix form is essential when different clients observe different
subspaces or when the loss depends only on target directions.

The resulting inequalities separate the model specific Fisher calculation from
the privacy and interaction part of the lower bound argument.
In the applications considered here, this separation gives minimax lower bounds
for the full class of interactive protocols with public transcripts in mean
estimation, linear regression, nonparametric regression, and functional mean
estimation.
Together with existing upper bounds for the corresponding problems, these lower
bounds identify the minimax rates.
Consequently, the simpler one round or otherwise restricted protocols attaining
these rates are rate optimal even after the algorithm class is enlarged to
allow arbitrary interaction over multiple rounds and repeated sample reuse.
The same proof architecture also extends to approximate DP by replacing the
zCDP information contraction with a truncation-based contraction; the appendix
states a representative homogeneous scalar form and the same replacement applies
to the matrix framework.

Several directions remain open.
One is to develop similarly general lower bounds for other privacy and trust
models, such as secure aggregation, shuffle privacy, or protocols with trusted
private server state.
Another is to combine complete transcript privacy contraction with additional
algorithmic constraints, such as communication limits or computational
restrictions.
Finally, the matrix formulation suggests further applications in heterogeneous
and transfer settings, where the relevant difficulty is determined by the
directions covered by the clients rather than by total Fisher information
alone.

\clearpage
\phantomsection

\clearpage

\appendix
\section{Proof}
\label{sec:proof}

The following sufficient conditions are standard for Bayesian Cram\'er--Rao or
van Trees inequalities; see
\citet{gill1995_ApplicationsVan,gassiat2024_VanTrees} and
Lemma C.1 of \citet{cai2026_MinimaxAdaptive}.
We state them for the heterogeneous product model.
For each \(\theta\), the raw data law is
\[
  P_\theta^X
  =
  \bigotimes_{l=1}^m\bigotimes_{i=1}^{n_l}P_{\theta,li},
  \qquad
  X_i^{(l)}\sim P_{\theta,li}.
\]
Let \(p_{\theta,li}\) be the density of \(P_{\theta,li}\) and define the local
score and information matrix
\[
  s_{li}(X_i^{(l)};\theta)
  =
  \nabla_\theta\log p_{\theta,li}(X_i^{(l)}),
  \qquad
  I_{li}(\theta)
  =
  \E_\theta[
    s_{li}(X_i^{(l)};\theta)s_{li}(X_i^{(l)};\theta)^\T
  ].
\]
For the client and full data scores, write
\[
  S_l(\theta)=\sum_{i=1}^{n_l}s_{li}(X_i^{(l)};\theta),
  \qquad
  S(\theta)=\sum_{l=1}^m S_l(\theta).
\]
Regularity of the raw data gives
\begin{equation}
  \label{eq:raw-score-regularity}
  \begin{aligned}
    \E_\theta s_{li}(X_i^{(l)};\theta)&=0,
    \qquad l=1,\ldots,m,\quad i=1,\ldots,n_l,\\
    I_{li}(\theta)
    &=
    \operatorname{Cov}_\theta(
      s_{li}(X_i^{(l)};\theta)
    ),\\
    J_l(\theta)
    &:=
    \E_\theta[S_l(\theta)S_l(\theta)^\T]
    =
    \sum_{i=1}^{n_l}I_{li}(\theta).
  \end{aligned}
\end{equation}
Consequently, \(\E_\theta S_l(\theta)=0\) for each \(l\).
For a possibly randomized statistic \(T\) generated from the raw data, let
\(p_{\theta,T}\) be the density of its induced law, let \(I_T(\theta)\) be its
Fisher information matrix, and let \(S_X(\theta)\) be the score of the raw data
entering \(T\).
The corresponding score projection identities are
\begin{equation}
  \label{eq:transcript-score-projection}
  \begin{aligned}
    \nabla_\theta\log p_{\theta,T}(T)
    &=
    \E_\theta[S_X(\theta)\mid T],\\
    I_T(\theta)
    &=
    \E_\theta\left[
      \E_\theta[S_X(\theta)\mid T]
      \E_\theta[S_X(\theta)\mid T]^\T
    \right].
  \end{aligned}
\end{equation}
For the complete public transcript \(G\), write \(I_G(\theta)\) for the Fisher
information matrix of its induced law.

\begin{assumption}[Regularity for heterogeneous van Trees]
  \label{ass:van-trees-regularity}
  For the heterogeneous product model above, we use the following standard
  sufficient conditions for the van Trees inequality.
  They are not intended to be minimal.
  \begin{enumerate}[label=(\roman*)]
    \item For each \(l\) and \(i\), the density \(p_{\theta,li}\) is defined
    with respect to a \(\theta\)-independent dominating measure and is
    continuously differentiable in \(\theta\).
    \item For each \(l\) and \(i\), the local likelihood satisfies the
    conditions for differentiating under the integral sign that yield the local
    identities in \cref{eq:raw-score-regularity}.
    \item The client Fisher information matrices \(J_l(\theta)\) are well
    defined and
    \[
      \int_\Theta \norm{J_l(\theta)}_{\mathrm{op}}\dd\pi(\theta)<\infty,
      \qquad l=1,\ldots,m.
    \]
    \item The parameter space \(\Theta\) is a compact subset of \(\R^p\) with
    piecewise smooth boundary.
    \item The prior distribution \(\pi\) has a continuously differentiable
    density on \(\R^p\) and vanishes on the boundary of \(\Theta\).
    \item The prior Fisher information matrix
    \[
      J_\pi
      =
      \int_\Theta
      \nabla_\theta\log\pi(\theta)
      \nabla_\theta\log\pi(\theta)^\T
      \pi(\theta)\dd\theta
    \]
    is well defined and finite.
    \item The data, private history, and transcript spaces are standard Borel,
    so regular conditional distributions for randomized transcripts exist.
    The induced transcript laws are dominated and differentiable in \(\theta\)
    and satisfy
    \cref{eq:transcript-score-projection}.
  \end{enumerate}
\end{assumption}

\subsection{Single client information contraction}
\label{subsec:proof-single-client-information-contraction}

The scalar trace argument starts from the following zCDP information
contraction for one client, due to \citet{cai2026_MinimaxAdaptive}.

\begin{lemma}[Single client zCDP information contraction]
  \label{lem:single-client-zcdp-information}
  Let \(X_1,\ldots,X_n\) be i.i.d.\ samples from \(P_\theta\), with score
  \(s(X;\theta)\) and one-sample Fisher information \(I_x(\theta)\).
  Let \(T\) be a statistic generated from \(X=(X_1,\ldots,X_n)\) by a
  \(\theta\)-independent \(\rho\)-zCDP mechanism at the sample level.
  Write
  \(S_X(\theta)=\sum_{i=1}^n s(X_i;\theta)\). Then
  \[
    \E_\theta\norm{\E_\theta[S_X(\theta)\mid T]}_2^2
    =
    \Tr I_T(\theta)
    \le
    (e^{2\rho}-1)n^2\norm{I_x(\theta)}_{\mathrm{op}}.
  \]
\end{lemma}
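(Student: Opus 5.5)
The plan is a tracing-style argument: write the conditional score as a sum of per-sample contributions, and compare each against a version of the transcript in which sample \(i\) has been replaced by an independent copy. Privacy bounds the discrepancy between the two through the \(\chi^2\)-divergence. The equality \(\E_\theta\norm{\E_\theta[S_X(\theta)\mid T]}_2^2=\Tr I_T(\theta)\) is immediate from the score projection identity \cref{eq:transcript-score-projection}, so only the inequality needs work.

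Set \(v(T)=\E_\theta[S_X(\theta)\mid T]\in\R^p\). By the tower property,
\[
\E\norm{v(T)}_2^2=\E\ang{S_X(\theta),v(T)}=\sum_{i=1}^n \E\ang{s(X_i;\theta),v(T)}.
\]
Fix \(i\) and let \(X_i'\) be an independent copy of \(X_i\). Let \(X^{(i)}\) denote \(X\) with \(X_i\) replaced by \(X_i'\), and let \(T^{(i)}\) be the mechanism applied to \(X^{(i)}\) with fresh internal randomness. Three facts drive the argument:
\begin{enumerate}[label=(\alph*)]
\item \(T^{(i)}\) has the same law as \(T\).
\item \(T^{(i)}\) is independent of \(X_i\), so \(\E\ang{s(X_i;\theta),v(T^{(i)})}=0\) because the score has mean zero.
\item Conditional on \((X,X_i')\), the laws of \(T\) and \(T^{(i)}\) are outputs on adjacent datasets.
\end{enumerate}

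Condition on \((X,X_i')\) and apply the \(\chi^2\) change-of-measure inequality with \(f=\ang{s(X_i;\theta),v(\cdot)}\), \(P=\mathcal L(T\mid X)\) and \(Q=\mathcal L(T^{(i)}\mid X^{(i)})\):
\[
\bigl|\E_P f-\E_Q f\bigr|\le\sqrt{\chi^2(P\|Q)\,\E_Q f^2}.
\]
The zCDP condition at \(\alpha=2\) gives
\[
\chi^2(P\|Q)=e^{D_2(P\|Q)}-1\le e^{2\rho}-1=\kappa.
\]
Take expectations, use fact (b) to remove the \(Q\)-term, and apply Cauchy--Schwarz (Jensen on the square root). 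This gives
\[
\E\ang{s(X_i;\theta),v(T)}\le\sqrt{\kappa\,\E\ang{s(X_i;\theta),v(T^{(i)})}^2}.
\]
Since \(X_i\) is independent of \(T^{(i)}\), the second moment factorizes:
\[
\E\ang{s(X_i;\theta),v(T^{(i)})}^2=\E\bigl[v(T^{(i)})^\T I_x(\theta)\,v(T^{(i)})\bigr]\le\norm{I_x(\theta)}_{\mathrm{op}}\,\E\norm{v(T)}_2^2,
\]
where the last step uses fact (a).

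Summing over \(i\) yields
\[
\E\norm{v}^2\le n\sqrt{\kappa\norm{I_x}_{\mathrm{op}}\,\E\norm{v}^2}.
\]
Dividing by \(\sqrt{\E\norm{v}^2}\) (trivial if it is zero) and squaring gives the claimed bound \((e^{2\rho}-1)n^2\norm{I_x}_{\mathrm{op}}\).

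The main obstacle is technical rather than conceptual. It is making the coupling and the conditional \(\chi^2\) step rigorous: one needs regular conditional laws of \(T\) given the data, finiteness of \(\E\norm{v}^2\) (which follows from \(\E\norm{v}^2\le n\Tr I_x\)), and the fact that the divergence bound holds uniformly for every fixed adjacent pair. These are supplied by \cref{def:zcdp} and the standard Borel and regularity conditions of \cref{ass:van-trees-regularity}.
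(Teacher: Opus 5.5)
Your proof is correct and follows essentially the argument the paper relies on. The paper cites this scalar lemma from prior work and reproves its whitened matrix analogue (\cref{lem:single-client-matrix-zcdp-information}) with the same replace-one-sample coupling, the same conditional $\chi^2$ mean-difference inequality at R\'enyi order $2$, Jensen, and the same independence factorization.

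The only difference is the test function and the final step. You test against the unwhitened $v(T)$ and bound each summand using $\norm{I_x(\theta)}_{\mathrm{op}}$. The paper instead uses $W=J^{-1}U$ and Cauchy--Schwarz over $i$ to obtain the sharper $\Tr(J^\dagger A)\le \kappa n$. Your bound follows from that one, since $\Tr A\le\norm{J}_{\mathrm{op}}\Tr(J^\dagger A)$ with $J=nI_x(\theta)$.
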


The matrix argument uses an affine invariant strengthening.
Its proof adapts the zCDP information contraction argument of
Lemmas C.2 and C.3 of \citet{cai2026_MinimaxAdaptive} after whitening the
score.

\begin{lemma}[Single client matrix zCDP information contraction]
  \label{lem:single-client-matrix-zcdp-information}
  Let \(X_1,\ldots,X_n\) be independent observations from models
  \(P_{\theta,i}\) that are not necessarily identical.
  Let \(s_i(X_i;\theta)\) and \(I_i(\theta)\) be the score and Fisher
  information matrix of \(X_i\).
  Let
  \[
    S(\theta)=\sum_{i=1}^n s_i(X_i;\theta),
    \qquad
    J(\theta)=\E_\theta[S(\theta)S(\theta)^\T]
    =
    \sum_{i=1}^n I_i(\theta).
  \]
  Let \(T=M(X)\) be generated by a \(\theta\)-independent \(\rho\)-zCDP mechanism at the sample level, and define
  \[
    U(T;\theta)=\E_\theta[S(\theta)\mid T],
    \qquad
    A(\theta)=\E_\theta[U(T;\theta)U(T;\theta)^\T].
  \]
  Then
  \[
    0\preceq A(\theta)\preceq J(\theta),
    \qquad
    \Tr\zk{J(\theta)^\dagger A(\theta)}
    \le
    (e^{2\rho}-1)n.
  \]
\end{lemma}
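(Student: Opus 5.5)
The plan is to treat the two conclusions separately. The Loewner bound is a conditional Jensen argument. The whitened trace bound is a whitened version of the leave-one-out, chi-square argument behind \cref{lem:single-client-zcdp-information}.

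\emph{Step 1 (Loewner bound).} Since \(U=\E_\theta[S\mid T]\), the law of total covariance gives
\[
  J=\E[SS^\T]=\E[UU^\T]+\E\bigl[(S-U)(S-U)^\T\bigr]\succeq A\succeq 0 .
\]
Moreover \(S\in\operatorname{range}(J)\) almost surely, because \(v^\T J v=0\) forces \(v^\T S=0\) a.s. Hence \(U\in\operatorname{range}(J)\) a.s. as well.

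\emph{Step 2 (whitening).} Put \(W=(J^\dagger)^{1/2}\) and \(V(T)=WU=\E_\theta[WS\mid T]\). Then
\[
  \Tr(J^\dagger A)=\E\norm{V}_2^2 .
\]
Using the tower property, I would write
\[
  \E\norm{V}_2^2=\E\langle V(T),WS\rangle=\sum_{i=1}^n \E\langle V(T),Ws_i(X_i)\rangle .
\]
For each \(i\), let \(X_i'\) be an independent copy of \(X_i\), and let \(T_i'\) be the output of the same mechanism on the dataset with \(X_i\) replaced by \(X_i'\). Then \(T_i'\) is independent of \(X_i\) and \(\E s_i(X_i)=0\), so \(\E\langle V(T_i'),Ws_i(X_i)\rangle=0\). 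Each summand is therefore the difference
\[
  \E\langle V(T),Ws_i\rangle-\E\langle V(T_i'),Ws_i\rangle .
\]

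\emph{Step 3 (privacy step, the main obstacle).} Condition on \((X,X_i')\). The two datasets are then adjacent, and the test function \(f(t)=\langle V(t),Ws_i(X_i)\rangle\) is fixed. The bound
\[
  \abs{\E f(T)-\E f(T_i')}\le\sqrt{\chi^2\bigl(T\|T_i'\bigr)\,\E[f(T_i')^2]}
\]
together with \(\chi^2\le e^{D_2}-1\le e^{2\rho}-1=\kappa\) controls each conditional difference. Taking expectations and applying Jensen gives
\[
  \E\langle V(T),Ws_i\rangle\le\sqrt{\kappa\,\E\langle V(T_i'),Ws_i\rangle^2}.
\]
Since \(T_i'\) is independent of \(X_i\) and has the same law as \(T\),
\[
  \E\langle V(T_i'),Ws_i\rangle^2=\E\bigl[V(T)^\T W I_i W V(T)\bigr].
\]
This step is where care is needed: the mechanism must be measurable, \(\E\norm{V}^2\) must be finite (it is, by Step 1), and the chi-square inequality must be applied conditionally with the regular conditional laws from \cref{ass:van-trees-regularity}. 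I would first prove it for bounded truncations of \(V\) and then pass to the limit.

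\emph{Step 4 (closing).} By Cauchy--Schwarz over \(i\) and \(WJW=P_J\preceq I\),
\[
  \E\norm{V}^2\le\sqrt{\kappa}\sum_{i=1}^n\sqrt{\E[V^\T WI_iWV]}\le\sqrt{\kappa n\,\E[V^\T WJW V]}\le\sqrt{\kappa n\,\E\norm{V}^2}.
\]
Dividing by \(\sqrt{\E\norm{V}^2}\) when it is positive gives \(\Tr(J^\dagger A)=\E\norm{V}^2\le\kappa n\). If it is zero, there is nothing to prove.
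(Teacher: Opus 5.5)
Your proposal is correct and follows the paper's proof essentially step for step: the law of total covariance for \(0\preceq A\preceq J\), the leave-one-out replacement \(X_i\to X_i'\) with \(\E\langle \cdot, s_i(X_i)\rangle\) vanishing on the replaced output, the conditional chi-square mean-difference bound with \(\chi^2\le e^{2\rho}-1\), and the Cauchy--Schwarz closing \(a\le\sqrt{\kappa n a}\). The only difference is cosmetic. You whiten with \((J^\dagger)^{1/2}\) and use \(WJW=P_J\preceq I\), which handles singular \(J\) in one pass. The paper instead works with \(J^{-1}U\) for \(J\succ0\) and then reduces the singular case by restricting to \(\operatorname{range}(J)\).
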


\begin{proof}
  Fix \(\theta\), let all expectations in this proof be under \(P_\theta\), and
  write \(S,J,U,A\) for the corresponding quantities.
  We first assume \(J\succ0\).
  The law of total covariance gives
  \[
    J
    =
    \E[\operatorname{Cov}(S\mid T)]
    +
    \operatorname{Cov}(\E[S\mid T]),
  \]
  and since \(\E S=0\), the second term is \(A\). Hence \(0\preceq A\preceq J\).

  Set \(W(T)=J^{-1}U(T)\) and
  \[
    a=\Tr(J^{-1}A)
    =
    \E[U^\T J^{-1}U]
    =
    \E\ang{W(T),U(T)}.
  \]
  Since \(U(T)=\E[S\mid T]\),
  \[
    a
    =
    \E\ang{W(T),S}
    =
    \sum_{i=1}^n \E\ang{W(T),s_i(X_i)}.
  \]

  Fix \(i\). Let \(X_i'\) be an independent copy of \(X_i\), define
  \[
    X^{(i)}=(X_1,\ldots,X_{i-1},X_i',X_{i+1},\ldots,X_n),
    \qquad
    T^{(i)}=M(X^{(i)}),
  \]
  and write \(W^{(i)}=W(T^{(i)})\).
  Since \(X^{(i)}\) is independent of \(X_i\) and \(\E s_i(X_i)=0\),
  \[
    \E\ang{W^{(i)},s_i(X_i)}=0.
  \]
  Therefore
  \[
    \E\ang{W(T),s_i(X_i)}
    =
    \E\ang{W(T)-W^{(i)},s_i(X_i)}.
  \]

  The datasets \(X\) and \(X^{(i)}\) are adjacent, so zCDP gives, conditional on \((X,X^{(i)})\),
  \[
    D_2(M(X)\|M(X^{(i)}))\le 2\rho.
  \]
  Hence, using the relation between order-2 R\'enyi divergence and \(\chi^2\) divergence \citep{erven2014_RenyiDivergence},
  \[
    \chi^2(T \|T^{(i)}\mid X,X^{(i)})
    =
    \chi^2(M(X)\|M(X^{(i)}))\le e^{2\rho}-1\eqqcolon\kappa.
  \]
  We use the chi-square mean difference inequality in Lemma F.1 of
  \citet{cai2026_MinimaxAdaptive},
  \[
    \abs{\E_P f-\E_Q f}^2
    \le
    \chi^2(P\|Q)\E_Q[f^2],
  \]
  for \(P\ll Q\).
  Applying it conditionally on \((X,X^{(i)})\), with
  \(P=\mca L(T\mid X,X^{(i)})\), \(Q=\mca L(T^{(i)}\mid X,X^{(i)})\), and
  \(f(t)=\ang{W(t),s_i(X_i)}\), gives
  \[
    \left(
      \E[f(T)\mid X,X^{(i)}]
      -
      \E[f(T^{(i)})\mid X,X^{(i)}]
    \right)^2
    \le
    \kappa
    \E[f(T^{(i)})^2\mid X,X^{(i)}].
  \]
  Averaging and applying Jensen's inequality yields
  \[
    \left|\E\ang{W(T),s_i(X_i)}\right|^2
    \le
    \kappa\,
    \E\ang{W(T^{(i)}),s_i(X_i)}^2.
  \]
  Since \(T^{(i)}\) is independent of \(X_i\) and has the same marginal distribution as \(T\),
  \[
    \E\ang{W(T^{(i)}),s_i(X_i)}^2
    =
    \Tr\left(\E[W(T)W(T)^\T]I_i\right)
    =
    \Tr(J^{-1}AJ^{-1}I_i).
  \]
  Denote the last quantity by \(b_i\). Then
  by the triangle inequality and Cauchy--Schwarz,
  \[
    a
    \le
    \sqrt{\kappa}\sum_{i=1}^n\sqrt{b_i}
    \le
    \sqrt{\kappa}\sqrt{n\sum_{i=1}^n b_i}.
  \]
  But
  \[
    \sum_{i=1}^n b_i
    =
    \Tr\left(J^{-1}AJ^{-1}\sum_{i=1}^n I_i\right)
    =
    \Tr(J^{-1}A)
    =
    a.
  \]
  Thus \(a\le \sqrt{\kappa n a}\). If \(a>0\), dividing by \(\sqrt a\) gives
  \(a\le \kappa n\); if \(a=0\), the same conclusion is trivial.

  If \(J\) is singular, let \(R=\operatorname{range}(J)\).
  For \(v\in R^\perp=\operatorname{ker}(J)\),
  \[
    0=v^\T Jv=\E(v^\T S)^2,
  \]
  so \(v^\T S=0\) almost surely.
  Hence \(v^\T U=\E[v^\T S\mid T]=0\) almost surely, and
  \(v^\T Av=\E(v^\T U)^2=0\).
  Since \(A\succeq0\), this implies \(Av=0\) for all \(v\in R^\perp\), or
  equivalently \(\operatorname{range}(A)\subseteq R\).
  If \(R=\{0\}\), then \(A=0\) and the claim is trivial.
  Otherwise, choose an orthonormal matrix \(Q\) whose columns span \(R\) and
  apply the preceding positive definite argument to
  \(\widetilde S=Q^\T S\), \(\widetilde U=Q^\T U\),
  \(\widetilde J=Q^\T JQ\), and \(\widetilde A=Q^\T AQ\).
  Then \(\widetilde J\succ0\), and
  \[
    \Tr(\widetilde J^{-1}\widetilde A)\le \kappa n.
  \]
  Because \(\operatorname{range}(A)\subseteq R\), the last trace equals
  \(\Tr(J^\dagger A)\).
\end{proof}

\subsection{Conditional independence}
\label{subsec:proof-posterior-factorization}

This subsection supplies the factorization step that allows the transcript Fisher information to be charged client by client.
Using the complete public transcript structure,
once a transcript value \(g\) is fixed, all public adaptive choices and
post-processing records are fixed, and each unreleased private history enters
only through the likelihood factor of its own client.
This gives posterior factorization of the local datasets given \(G=g\), and
then the matrix score decomposition follows by projecting the raw client scores
onto the sigma-field generated by \(G\).

\begin{lemma}[Posterior factorization]
  \label{lem:posterior-factorization}
  Under the public transcript protocol and independent client sampling model,
  for \(P_{\theta,G}\)-almost every transcript value \(g\),
  \[
    \mca L_\theta(X^{(1)},\ldots,X^{(m)}\mid G=g)
    =
    \bigotimes_{l=1}^m
    \mca L_\theta(X^{(l)}\mid G=g).
  \]
\end{lemma}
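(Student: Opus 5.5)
We will write down the joint law of the raw data and the complete transcript explicitly and show that, once the transcript value is fixed, the joint density factorizes as a product of per-client functions of \(X^{(l)}\). Conditional independence then follows immediately.

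\emph{Construction of the joint law.} Fix \(\theta\). Under the protocol kernels, the joint law of \(\bigl(X,H_{\le T}^{(1)},\ldots,H_{\le T}^{(m)},G\bigr)\) is the product of the following factors:
\begin{itemize}
\item the independent data laws \(\prod_l P_\theta^{X^{(l)}}(\dd x^{(l)})\);
\item the initial record kernel for \(G_0\) and the initial local histories \(H_0^{(l)}\), which are independent across clients given the data, or independent of the data altogether;
\item for each round \(t\), the product over \(l\) of the local kernels \(\mca M_l\bigl(\dd g_t^{(l)},\dd h_t^{(l)}\mid x^{(l)};g_{<t},h_{t-1}^{(l)}\bigr)\), using conditional independence of the clients' private randomizations given the data and public history;
\item for each round \(t\), the public kernel \(\mca A\bigl(\dd a_t\mid g_{<t},g_t^{(1)},\ldots,g_t^{(m)}\bigr)\).
\end{itemize}
By \cref{ass:van-trees-regularity}(vii) these kernels live on standard Borel spaces, so we may integrate out the private histories. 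Integrating \(h^{(l)}_{0:T}\) only touches the kernels of client \(l\). This yields, for each \(l\), a kernel
\[
K_l\bigl(\dd g^{(l)}_{1:T}\mid x^{(l)};g\bigr),
\]
which is a sequentially composed kernel depending on \(x^{(l)}\) and on the public coordinates of \(g\) only. The key observation is that the integration order is legitimate because each \(h^{(l)}_t\) appears only in client \(l\)'s factors at rounds \(t\) and \(t+1\).

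\emph{The factorized density.} After this integration, the joint law of \((X,G)\) has the form
\[
\prod_l P_\theta^{X^{(l)}}(\dd x^{(l)})\;\Phi(\dd g)\;\prod_l k_l\bigl(x^{(l)},g\bigr).
\]
Here \(\Phi\) collects the \(\theta\)-free and data-free public kernels together with a reference measure on the message coordinates, and \(k_l\) is the density of client \(l\)'s composite kernel with respect to that reference measure. To produce \(k_l\) we need a dominating measure for the local kernels, say \(\nu\). If one is not available a priori, we take \(\nu\) to be the average of the kernel over the data law, which is \(\sigma\)-finite. This gives a density via disintegration on standard Borel spaces.

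\emph{Conditional independence.} By Bayes' formula, the conditional law of \(X\) given \(G=g\) has density proportional to
\[
\prod_l p_\theta^{(l)}(x^{(l)})\,k_l(x^{(l)},g),
\]
and the normalizing constant is finite and positive for \(P_{\theta,G}\)-almost every \(g\). This product form is precisely the claimed tensor factorization: each factor, once normalized, equals \(\mca L_\theta(X^{(l)}\mid G=g)\), since marginalizing out the other clients reproduces it.

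The main obstacle is the measure-theoretic step of producing the per-client densities \(k_l(x^{(l)},g)\) with respect to a common reference measure. This requires care because the public records \(A_t\) depend on all clients' messages. Those records do not need a density in \(x\), however: they enter only through the \(x\)-free factor \(\Phi\), and this is exactly where the absence of hidden shared state or secure aggregation is used. The remaining steps are routine.
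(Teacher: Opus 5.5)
Your argument is the paper's first, dominated-case proof. The transcript likelihood given all data is a data-free public factor times clientwise integrals over private-history paths; the public factor is absorbed into one client, and Bayes' formula gives the product posterior. When every local round kernel has a density with respect to a data-free reference measure, this is correct.

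The gap is the step you flag as the main obstacle, and your proposed fix does not work. The lemma is stated for arbitrary Markov kernels on standard Borel spaces; it does not even assume privacy. Averaging a kernel over the data law, $\bar\nu(B)=\int K(B\mid x)\,P_\theta(\dd x)$, does not dominate $K(\cdot\mid x)$ for a.e.\ $x$. For instance, a client may publish $G_1^{(l)}=X^{(l)}$, or set $H_1^{(l)}=X^{(l)}$ (private histories are unconstrained even under zCDP). If $P_\theta$ is atomless, the corresponding kernel is a point mass $\delta_x$, and $\delta_x\not\ll P_\theta$. In fact no $\sigma$-finite measure dominates $\delta_x$ for $P_\theta$-a.e.\ $x$, since such a measure has only countably many atoms.

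Disintegration on standard Borel spaces gives the conditional law of $X$ given $G$, not a Radon--Nikodym derivative of $K(\cdot\mid x)$ against a fixed measure, so it cannot supply $k_l$. Relatedly, your ``composite kernel'' $K_l(\dd g^{(l)}_{1:T}\mid x^{(l)};g)$ is not a kernel on $g^{(l)}_{1:T}$. Its round-$t$ factor depends on $g_{<t}$, which contains the other clients' messages and the records $A_s$. So the representation $\Phi(\dd g)\prod_l k_l(x^{(l)},g)$ only makes sense once per-round densities exist. As written, your proof covers only the dominated case.

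To get the lemma in the stated generality, replace the density computation by the paper's density-free induction on rounds. Show that the blocks $(X^{(l)},H_t^{(l)})$, $l=1,\ldots,m$, are conditionally independent given $G_{\le t}$.

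\begin{enumerate}
\item Given $G_{<t}$, the blocks $(X^{(l)},H_{t-1}^{(l)})$ are independent by the induction hypothesis, and each client applies its own kernel to its own block. Hence the enlarged blocks $(X^{(l)},H_t^{(l)},G_t^{(l)})$ are conditionally independent.
\item Disintegrating this product law with respect to $(G_t^{(1)},\ldots,G_t^{(m)})$, where $G_t^{(l)}$ is a coordinate of block $l$, preserves the product form.
\item $A_t$ is drawn from a kernel of already public variables. It is therefore conditionally independent of the private blocks given $(G_{<t},G_t^{(1)},\ldots,G_t^{(m)})$, so conditioning on it does not couple them.
\end{enumerate}

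Taking $t=T$ and marginalizing out the histories gives the lemma without any domination assumption.
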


\begin{proof}
  Let \(\theta\) be fixed.
  To expose the core idea, we first prove the result in a dominated setting
  where the local randomizations, local private history updates, initial
  randomizations, and public post-processing kernels admit densities.
  The general argument with probability kernels is given afterward.
  Let \(\nu_0(g_0)\) be the density of the initial public record \(G_0\), and
  write client \(l\)'s private history path as
  \(h_{0:T}^{(l)}=(h_0^{(l)},\ldots,h_T^{(l)})\).
  Let
  \[
    \lambda_l(h_0^{(l)}\mid x^{(l)},g_0)
  \]
  be the conditional density of the initial private history \(H_0^{(l)}\) given
  local data \(x^{(l)}\) and the initial public record \(g_0\), and let
  \[
    q_{t,l}(g_t^{(l)},h_t^{(l)}
      \mid x^{(l)},g_{<t},h_{t-1}^{(l)}),
  \]
  be the conditional density of the pair \((G_t^{(l)},H_t^{(l)})\) given local
  data \(x^{(l)}\), public history \(g_{<t}\), and previous private history
  \(h_{t-1}^{(l)}\).
  Finally, let
  \[
    \alpha_t(a_t\mid g_{<t},g_t^{(1)},\ldots,g_t^{(m)})
  \]
  be the density of the public post-processing record \(A_t\) at the end of the
  round given
  the already public information in that round.
  Deterministic messages, deterministic private history updates, and
  deterministic public post-processing are interpreted as point masses in this
  notation.

  For a realized complete transcript
  \[
    g=
    \bigl(
      g_0,
      (g_1^{(1)},\ldots,g_1^{(m)},a_1),
      \ldots,
      (g_T^{(1)},\ldots,g_T^{(m)},a_T)
    \bigr),
  \]
  the conditional density of \(G=g\) given all datasets is
  \[
    \begin{aligned}
      K(g\mid x^{(1)},\ldots,x^{(m)})
      &=
      P_{\mathrm{pub}}(g)
      \prod_{l=1}^m
      \int
      \lambda_l(h_0^{(l)}\mid x^{(l)},g_0) \\
      &\qquad\qquad{}\times
      \prod_{t=1}^T
      q_{t,l}(g_t^{(l)},h_t^{(l)}
        \mid x^{(l)},g_{<t},h_{t-1}^{(l)})
      \dd h_{0:T}^{(l)} ,
    \end{aligned}
  \]
  where the public factor is
  \[
    P_{\mathrm{pub}}(g)
    =
    \nu_0(g_0)
    \prod_{t=1}^T
    \alpha_t(a_t\mid g_{<t},g_t^{(1)},\ldots,g_t^{(m)}).
  \]
  Since \(g\) is fixed, all adaptive public histories and all \(A_t\)-factors in
  this display are constants.
  The private histories remain inside client \(l\)'s own integral and are not
  shared across clients.
  Define
  \[
    L_l(g;x^{(l)})
    =
    \int
    \lambda_l(h_0^{(l)}\mid x^{(l)},g_0)
    \prod_{t=1}^T
    q_{t,l}(g_t^{(l)},h_t^{(l)}
      \mid x^{(l)},g_{<t},h_{t-1}^{(l)})
    \dd h_{0:T}^{(l)}.
  \]
  The public factor \(P_{\mathrm{pub}}(g)\) is data independent after the transcript value is
  fixed, so it may be absorbed into one local factor, for instance by setting
  \(K_1(g\mid x^{(1)})=P_{\mathrm{pub}}(g)L_1(g;x^{(1)})\) and
  \(K_l(g\mid x^{(l)})=L_l(g;x^{(l)})\) for \(l\ge2\).
  Hence
  \[
    K(g\mid x^{(1)},\ldots,x^{(m)})
    =
    \prod_{l=1}^m K_l(g\mid x^{(l)}).
  \]

  By independence across clients,
  \[
    p_\theta(x^{(1)},\ldots,x^{(m)})
    =
    \prod_{l=1}^m p_{\theta,l}(x^{(l)}),
  \]
  where \(p_{\theta,l}\) is the local joint density of client \(l\)'s data.
  Bayes' formula gives
  \[
    p_\theta(x^{(1)},\ldots,x^{(m)}\mid g)
    =
    \prod_{l=1}^m
    \frac{
      p_{\theta,l}(x^{(l)})K_l(g\mid x^{(l)})
    }{
      Z_l(g)
    },
  \]
  with \(Z_l(g)=\int p_{\theta,l}(u)K_l(g\mid u)\dd u\).
  This proves posterior independence of the local datasets given \(G\).

  We now give the corresponding proof using probability kernels, without
  assuming densities.
  Assume the data, private history, and transcript spaces are standard Borel,
  so regular conditional distributions exist.
  Fix \(\theta\).
  The protocol is specified by Markov kernels: the initial public kernel
  \(\nu_0(\dd g_0)\), the initial local private history kernels
  \(\Lambda_l(\dd h_0^{(l)}\mid x^{(l)},g_0)\), the local round kernels
  \[
    Q_{t,l}(\dd g_t^{(l)},\dd h_t^{(l)}
      \mid x^{(l)},g_{<t},h_{t-1}^{(l)}),
  \]
  and the public post-processing kernels
  \[
    \mathsf A_t(\dd a_t\mid g_{<t},g_t^{(1)},\ldots,g_t^{(m)}).
  \]
  Deterministic messages and deterministic post-processing are simply
  degenerate kernels.

  We use the following elementary disintegration fact.
  If, conditional on a public variable \(C\), the pairs
  \((Y_l,M_l)\), \(l=1,\ldots,m\), are independent with product conditional
  law \(\otimes_l R_l(\dd y_l,\dd m_l\mid C)\), then, for almost every
  realized \(m=(m_1,\ldots,m_m)\),
  \[
    \mca L(Y_1,\ldots,Y_m\mid C,M=m)
    =
    \bigotimes_{l=1}^m
    \mca L(Y_l\mid C,M_l=m_l).
  \]
  This is just disintegration of a product kernel with respect to its
  \(M\)-marginal.

  We prove by induction on the public rounds that the local private states are
  conditionally independent given the public transcript available at that time:
  \[
    \mca L_\theta\bigl(
      (X^{(1)},H_t^{(1)}),\ldots,(X^{(m)},H_t^{(m)})
      \mid G_{\le t}
    \bigr)
    =
    \bigotimes_{l=1}^m
    \mca L_\theta(X^{(l)},H_t^{(l)}\mid G_{\le t})
  \]
  for \(P_{\theta,G_{\le t}}\)-almost every \(G_{\le t}\).
  At \(t=0\), the local datasets are independent and, conditional on
  \((X^{(l)},G_0)\), the initial private histories are generated by separate
  local kernels, so the claim holds conditional on \(G_0\).
  Suppose it holds before round \(t\), conditional on \(G_{<t}\).
  Given \(G_{<t}\), each client applies its own kernel \(Q_{t,l}\) to only
  \((X^{(l)},H_{t-1}^{(l)})\) and the public history.
  Therefore the enlarged local blocks
  \[
    \bigl(X^{(l)},H_t^{(l)},G_t^{(l)}\bigr),
    \qquad l=1,\ldots,m,
  \]
  are conditionally independent given \(G_{<t}\).
  Disintegrating this product kernel with respect to the observed public
  messages \((G_t^{(1)},\ldots,G_t^{(m)})\) shows that
  \((X^{(l)},H_t^{(l)})\), \(l=1,\ldots,m\), remain conditionally independent
  given \(G_{<t}\) and those messages.
  The additional public record \(A_t\) is then drawn from a kernel depending
  only on already public quantities, so conditioning on \(A_t\) does not couple
  the local private blocks.
  Hence the induction step gives conditional independence given \(G_{\le t}\).

  Taking \(t=T\) and then marginalizing out the private histories gives
  \[
    \mca L_\theta(X^{(1)},\ldots,X^{(m)}\mid G=g)
    =
    \bigotimes_{l=1}^m
    \mca L_\theta(X^{(l)}\mid G=g)
  \]
  for \(P_{\theta,G}\)-almost every \(g\), which is the desired factorization.
\end{proof}

\begin{corollary}[Matrix score decomposition]
  \label{cor:matrix-score-decomposition}
  Under the conditions of \cref{lem:posterior-factorization}, if
  \[
    U_l(G;\theta)=\E_\theta[S_l(\theta)\mid G],
    \qquad
    A_l(\theta)=\E_\theta[U_l(G;\theta)U_l(G;\theta)^\T],
  \]
  then
  \[
    I_G(\theta)=\sum_{l=1}^m A_l(\theta).
  \]
\end{corollary}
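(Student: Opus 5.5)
The plan is to start from the score projection identity \cref{eq:transcript-score-projection} applied to \(T=G\). The raw-data score is \(S(\theta)=\sum_{l}S_l(\theta)\), so
\[
  \nabla_\theta\log p_{\theta,G}(G)=\E_\theta[S(\theta)\mid G]=\sum_{l=1}^m U_l(G;\theta),
\]
and \(I_G(\theta)=\sum_{l,k}\E_\theta[U_lU_k^\T]\). It therefore suffices to show that the cross terms \(\E_\theta[U_l(G;\theta)U_k(G;\theta)^\T]\) vanish for \(l\ne k\). The diagonal terms are exactly \(A_l(\theta)\), and summing gives the claim.

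For the cross terms I will use \cref{lem:posterior-factorization}. Given \(G=g\), the local datasets \(X^{(l)}\) and \(X^{(k)}\) are conditionally independent. Hence
\[
  \E_\theta[S_l S_k^\T\mid G]=U_lU_k^\T\quad\text{a.s.}
\]
Taking expectations gives
\[
  \E_\theta[U_lU_k^\T]=\E_\theta[S_lS_k^\T].
\]
By unconditional independence of clients and \(\E_\theta S_l=0\) from \cref{eq:raw-score-regularity}, this equals \(\E_\theta S_l\,\E_\theta S_k^\T=0\). So the off-diagonal blocks vanish.

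The only delicate point is integrability: conditional expectations and the product identity need \(S_l\in L^2\). This holds because \(\E_\theta\norm{S_l}^2=\Tr J_l(\theta)<\infty\) under \cref{ass:van-trees-regularity}. The product \(S_lS_k^\T\) is then integrable by Cauchy--Schwarz, and the factorization of conditional expectations under conditional independence applies to integrable functions of the two independent blocks, approximated by bounded truncations if needed. I expect this step to be the main (mild) obstacle; the rest is bookkeeping.
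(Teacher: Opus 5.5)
Your proposal is correct and follows the paper's proof essentially line for line: expand \(I_G(\theta)\) via the score projection identity, then kill the cross terms using posterior factorization to get \(\E_\theta[U_lU_k^\T]=\E_\theta[S_lS_k^\T]=0\) from unconditional independence and mean-zero scores. Your remark on square integrability of \(S_l\) is a harmless detail that the paper leaves implicit.
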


\begin{proof}
  The score projection identity gives
  \[
    \nabla_\theta\log p_{\theta,G}(G)
    =
    \E_\theta[S(\theta)\mid G]
    =
    \sum_{l=1}^m U_l(G;\theta).
  \]
  Hence
  \[
    I_G(\theta)
    =
    \sum_l \E_\theta[U_lU_l^\T]
    +
    \sum_{l\ne k}\E_\theta[U_lU_k^\T].
  \]
  For \(l\ne k\), \cref{lem:posterior-factorization} gives
  \[
    \E_\theta[S_l(\theta)S_k(\theta)^\T\mid G]
    =
    \E_\theta[S_l(\theta)\mid G]\E_\theta[S_k(\theta)\mid G]^\T
    =
    U_lU_k^\T.
  \]
  Taking expectations,
  \[
    \E_\theta[U_lU_k^\T]
    =
    \E_\theta[S_l(\theta)S_k(\theta)^\T]
    =
    0,
  \]
  because the local datasets are unconditionally independent and the local scores have mean zero.
  Therefore \(I_G(\theta)=\sum_l A_l(\theta)\).
\end{proof}

\subsection{Proof of \texorpdfstring{\cref{thm:general-fed-dp-vantrees}}{Theorem 3.1}}
\label{subsec:proof-general-fed-dp-vantrees}

\begin{lemma}[Transcript information contraction]
  \label{lem:transcript-information-contraction}
  Let \(G\) be the complete public transcript of a \(\bm\rho\)-zCDP federated protocol under the common iid model in \cref{subsec:trace-fed-vantrees}.
  Then
  \[
    \Tr I_G(\theta)
    \le
    \sum_{l=1}^m
    \left[
      (e^{2\rho_l}-1)n_l^2\norm{I_x(\theta)}_{\mathrm{op}}
      \wedge
      n_l\Tr I_x(\theta)
    \right].
  \]
\end{lemma}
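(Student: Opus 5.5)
The plan is to combine the matrix score decomposition of \cref{cor:matrix-score-decomposition} with a clientwise application of the single client contraction. Fix \(\theta\) and set \(U_l=\E_\theta[S_l(\theta)\mid G]\) and \(A_l=\E_\theta[U_lU_l^\T]\). By \cref{lem:posterior-factorization} and \cref{cor:matrix-score-decomposition}, \(I_G(\theta)=\sum_l A_l\), so \(\Tr I_G(\theta)=\sum_l \E_\theta\norm{U_l}_2^2\). It therefore suffices to bound each \(\E_\theta\norm{U_l}_2^2\) separately by the two quantities in the minimum.

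The raw information bound follows from the law of total covariance: \(A_l\preceq J_l=n_lI_x(\theta)\), hence \(\Tr A_l\le n_l\Tr I_x(\theta)\).

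The privacy bound requires reducing to a single client mechanism, where \(X^{(-l)}\) is fixed. I would condition on the other clients' data \(X^{(-l)}=x^{(-l)}\). For each fixed \(x^{(-l)}\), the map \(X^{(l)}\mapsto G\) is a \(\theta\)-independent, \(\rho_l\)-zCDP mechanism by \cref{def:full-transcript-zcdp}.

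The difficulty is that \(U_l\) conditions only on \(G\), not on \((G,X^{(-l)})\). I would handle this through projection monotonicity. Set \(\widetilde U_l=\E_\theta[S_l\mid G,X^{(-l)}]\). Since \(\sigma(G)\subseteq\sigma(G,X^{(-l)})\), conditional Jensen gives \(\E\norm{U_l}^2\le\E\norm{\widetilde U_l}^2\).

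Because \(X^{(l)}\) is independent of \(X^{(-l)}\), the conditional law of \(X^{(l)}\) given \(X^{(-l)}=x^{(-l)}\) is still i.i.d.\ \(P_\theta\). Hence \(\widetilde U_l\) evaluated at \(x^{(-l)}\) equals \(\E_\theta[S_l\mid T]\) for the statistic \(T=G(\cdot,x^{(-l)})\) generated from \(X^{(l)}\) alone.

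Applying \cref{lem:single-client-zcdp-information} for each fixed \(x^{(-l)}\) gives \(\E[\norm{\widetilde U_l}^2\mid X^{(-l)}=x^{(-l)}]\le(e^{2\rho_l}-1)n_l^2\norm{I_x(\theta)}_{\mathrm{op}}\). Averaging over \(x^{(-l)}\) then yields the privacy bound for \(\E\norm{U_l}^2\).

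Summing the smaller of the two bounds over \(l\) gives the claim. The only delicate step is this conditioning argument. It needs the regular conditional transcript kernel given \(X^{(-l)}\) to be a valid mechanism; the standard Borel part of \cref{ass:van-trees-regularity} supplies this. It also needs the conditional Fisher identity of \cref{eq:transcript-score-projection} to hold for that conditional experiment, and I expect this to hold for almost every \(x^{(-l)}\).

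Alternatively, the matrix version \cref{lem:single-client-matrix-zcdp-information} could be applied with the same conditioning, and the trace bound then deduced from \(\Tr A_l\le\norm{J_l}_{\mathrm{op}}\Tr(J_l^\dagger A_l)\).
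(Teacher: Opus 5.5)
Your proposal is correct and follows the paper's proof essentially step for step: the decomposition $\Tr I_G(\theta)=\sum_l \E_\theta\norm{\E_\theta[S_l(\theta)\mid G]}_2^2$ from \cref{cor:matrix-score-decomposition}, and the raw bound via contraction of conditional expectation (your total-covariance argument is the same thing). The privacy bound also matches: you pass to the finer projection $\E_\theta[S_l(\theta)\mid G,X^{(-l)}]$ by Jensen, apply \cref{lem:single-client-zcdp-information} conditionally on $X^{(-l)}=x^{(-l)}$, and average; your explicit identification of that conditional experiment fills in a detail the paper leaves implicit, and only the bound on $\E_\theta\norm{\E_\theta[S_l\mid T]}_2^2$ is needed there, not the Fisher identity for the conditional experiment.
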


\begin{proof}
  By \cref{cor:matrix-score-decomposition},
  \[
    \Tr I_G(\theta)
    =
    \sum_{l=1}^m
    \E_\theta\norm{\E_\theta[S_l(\theta)\mid G]}_2^2.
  \]
  The ordinary Fisher bound follows from \(L^2\) contraction of conditional expectation:
  \[
    \E_\theta\norm{\E_\theta[S_l(\theta)\mid G]}_2^2
    \le
    \E_\theta\norm{S_l(\theta)}_2^2
    =
    n_l\Tr I_x(\theta).
  \]

  For the privacy bound, set
  \[
    \widetilde U_l(G,X^{(-l)})
    =
    \E_\theta[S_l(\theta)\mid G,X^{(-l)}].
  \]
  Since conditioning on \((G,X^{(-l)})\) gives a finer sigma-field than conditioning on \(G\),
  Jensen's inequality gives
  \[
    \E_\theta\norm{\E_\theta[S_l(\theta)\mid G]}_2^2
    \le
    \E_\theta\norm{\widetilde U_l(G,X^{(-l)})}_2^2.
  \]
  Conditional on \(X^{(-l)}=x^{(-l)}\), the complete transcript \(G\) is a
  \(\rho_l\)-zCDP statistic of the \(n_l\) i.i.d.\ local samples \(X^{(l)}\).
  Therefore \cref{lem:single-client-zcdp-information}, applied conditionally and then averaged over \(X^{(-l)}\), gives
  \[
    \E_\theta\norm{\E_\theta[S_l(\theta)\mid G]}_2^2
    \le
    (e^{2\rho_l}-1)n_l^2\norm{I_x(\theta)}_{\mathrm{op}}.
  \]
  Combining the two clientwise bounds and summing over \(l\) proves the lemma.
\end{proof}

\begin{proof}[Proof of \cref{thm:general-fed-dp-vantrees}]
  The classical multivariate van Trees inequality
  \citep{gill1995_ApplicationsVan,gassiat2024_VanTrees}, in the form recalled
  in Lemma C.1 of \citet{cai2026_MinimaxAdaptive}, applied to the statistic \(G\)
  gives
  \[
    \E_\pi\E_\theta\norm{\hat\theta-\theta}_2^2
    \ge
    \frac{p^2}{
      \int_\Theta\Tr I_G(\theta)\dd\pi(\theta)
      +
      \Tr J_\pi
    }.
  \]
  Substituting the information contraction bound from
  \cref{lem:transcript-information-contraction} proves the displayed bound.
\end{proof}

\subsection{Proof of \texorpdfstring{\cref{thm:heterogeneous-matrix-information-contraction}}{Matrix transcript contraction} and matrix consequences}
\label{subsec:proof-heterogeneous-matrix-information-contraction}

\begin{proof}[Proof of \cref{thm:heterogeneous-matrix-information-contraction}]
  Let
  \[
    U_l(G;\theta)=\E_\theta[S_l(\theta)\mid G],
    \qquad
    A_l(\theta)=\E_\theta[U_l(G;\theta)U_l(G;\theta)^\T].
  \]
  By \cref{cor:matrix-score-decomposition},
  \[
    I_G(\theta)=\sum_{l=1}^m A_l(\theta).
  \]
  The ordinary matrix contraction follows from the law of total covariance:
  \[
    A_l(\theta)
    =
    \operatorname{Cov}_\theta(\E_\theta[S_l(\theta)\mid G])
    \preceq
    \operatorname{Cov}_\theta(S_l(\theta))
    =
    J_l(\theta).
  \]

  It remains to prove the whitened trace bound.
  Define the finer posterior score
  \[
    \widetilde U_l(G,X^{(-l)};\theta)
    =
    \E_\theta[S_l(\theta)\mid G,X^{(-l)}],
  \]
  and
  \[
    \widetilde A_l(\theta)
    =
    \E_\theta[\widetilde U_l(G,X^{(-l)};\theta)
    \widetilde U_l(G,X^{(-l)};\theta)^\T].
  \]
  Since \(U_l(G;\theta)=\E_\theta[\widetilde U_l(G,X^{(-l)};\theta)\mid G]\), Jensen's inequality in Loewner order gives
  \[
    A_l(\theta)\preceq \widetilde A_l(\theta).
  \]
  Conditional on \(X^{(-l)}=x^{(-l)}\), the complete transcript \(G\) is a \(\rho_l\)-zCDP mechanism at the sample level for the local dataset \(X^{(l)}\).
  The local score covariance is \(J_l(\theta)\), which does not depend on \(x^{(-l)}\).
  Applying \cref{lem:single-client-matrix-zcdp-information} conditionally gives
  \[
    \Tr\left(
      J_l(\theta)^\dagger
      \E_\theta[\widetilde U_l\widetilde U_l^\T\mid X^{(-l)}=x^{(-l)}]
    \right)
    \le
    (e^{2\rho_l}-1)n_l.
  \]
  Averaging over \(X^{(-l)}\) and using \(A_l\preceq\widetilde A_l\) gives
  \[
    \Tr\zk{J_l(\theta)^\dagger A_l(\theta)}
    \le
    \Tr\zk{J_l(\theta)^\dagger \widetilde A_l(\theta)}
    \le
    (e^{2\rho_l}-1)n_l.
  \]

  Finally, since \(0\preceq A_l\preceq J_l\), the range of \(A_l\) is contained in the range of \(J_l\).
  Setting
  \[
    B_l(\theta)=(J_l(\theta)^\dagger)^{1/2}A_l(\theta)(J_l(\theta)^\dagger)^{1/2}
  \]
  yields
  \[
    A_l(\theta)=J_l(\theta)^{1/2}B_l(\theta)J_l(\theta)^{1/2},
    \qquad
    0\preceq B_l(\theta)\preceq P_{J_l(\theta)},
  \]
  and \(\Tr B_l(\theta)=\Tr[J_l(\theta)^\dagger A_l(\theta)]\).
\end{proof}

\begin{proof}[Proof of \cref{thm:matrix-fed-dp-vantrees}]
  The matrix form of the van Trees inequality
  \citep{gill1995_ApplicationsVan,gassiat2024_VanTrees}
  gives
  \[
    \E_\pi\E_\theta
    [(\hat\theta-\theta)(\hat\theta-\theta)^\T]
    \succeq
    \left(
      J_\pi+\int_\Theta I_G(\theta)\dd\pi(\theta)
    \right)^{-1},
  \]
  with the usual regularization if the matrix is singular.
  Now fix an admissible protocol.
  By \cref{thm:heterogeneous-matrix-information-contraction}, this protocol
  induces positive semidefinite matrices \(A_l(\theta)\) and hence feasible
  matrices \(B_l(\theta)\) satisfying
  \[
    I_G(\theta)
    =
    \sum_{l=1}^m
    J_l(\theta)^{1/2}B_l(\theta)J_l(\theta)^{1/2}.
  \]
  Substituting this particular feasible family into the matrix van Trees bound
  gives a lower bound for the fixed protocol.
  The displayed theorem then follows by weakening this protocol specific bound:
  the family induced by the protocol is one admissible measurable choice in the
  feasible class, so its trace inverse value is at least the infimum over all
  feasible measurable choices \(B_l(\theta)\).
\end{proof}

\begin{proof}[Proof of \cref{cor:balanced-coordinate-coverage}]
  Fix any feasible family \(B_l\) in
  \cref{thm:matrix-fed-dp-vantrees}, and set
  \[
    K
    =
    J_\pi+\sum_{l=1}^m
    n\sigma^{-2}P_lB_lP_l .
  \]
  Write
  \[
    b_{lj}=e_j^\T B_le_j,
    \qquad
    a_j=\sum_{l:\,j\in \mca S_l} b_{lj},
    \qquad j=1,\ldots,p,
  \]
  where \(b_{lj}=0\) if \(j\notin \mca S_l\).
  For each client, \(\sum_{j\in \mca S_l}b_{lj}\le r\wedge \kappa n\).
  Therefore
  \[
    \sum_{j=1}^p a_j
    =
    \sum_{l=1}^m\sum_{j\in \mca S_l} b_{lj}
    \le
    m(r\wedge \kappa n)
    =
    pq\left(1\wedge\frac{\kappa n}{r}\right).
  \]
  Also \(K_{jj}\le \lambda_\pi+\sigma^{-2}na_j\).
  By the Schur complement inequality,
  \((K^{-1})_{jj}\ge 1/K_{jj}\), with the same conclusion obtained by
  \(\epsilon I\)-regularization if needed.
  Hence
  \[
    \Tr K^{-1}
    \ge
    \sum_{j=1}^p
    \frac{1}{\lambda_\pi+\sigma^{-2}n a_j}.
  \]
  By convexity of \(x\mapsto(\lambda_\pi+\sigma^{-2}nx)^{-1}\),
  \[
    \sum_{j=1}^p
    \frac{1}{\lambda_\pi+\sigma^{-2}n a_j}
    \ge
    \frac{p}{
      \lambda_\pi
      +
      \sigma^{-2}qn
      \left(1\wedge\frac{\kappa n}{r}\right)
    }.
  \]
  Taking the infimum over feasible \(b_{lj}\) proves the first display.
  The final display follows when \(\lambda_\pi\) is absorbed into the
  transcript information term.
\end{proof}
 \clearpage
\section{Extension to approximate DP}
\label{app:extension-approximate-dp}

\begin{definition}[Full-transcript approximate DP]
  \label{def:full-transcript-approx-dp}
  An interactive federated protocol with complete public transcript \(G\) is
  \((\bm\epsilon,\bm\delta)\)-DP, with
  \(\bm\epsilon=(\epsilon_1,\ldots,\epsilon_m)\) and
  \(\bm\delta=(\delta_1,\ldots,\delta_m)\), if for every client
  \(l\in[m]\), every fixed value of \(X^{(-l)}\), every adjacent pair
  \(X^{(l)},\widetilde X^{(l)}\), and every measurable transcript event \(B\),
  \[
    \Pr\dk{
      G(X^{(l)},X^{(-l)})\in B
    }
    \le
    e^{\epsilon_l}
    \Pr\dk{
      G(\widetilde X^{(l)},X^{(-l)})\in B
    }
    +
    \delta_l .
  \]
  The adjacency relation is the same sample-level adjacency as in
  \cref{def:full-transcript-zcdp}.
\end{definition}

The lower-bound proof for \((\bm\epsilon,\bm\delta)\)-DP public transcripts has
the same structure as the zCDP proof.
The posterior factorization, clientwise score decomposition, and van Trees
steps do not use the specific privacy divergence.
Only the single-client information contraction changes: the zCDP
\(\chi^2\)-comparison in
\cref{lem:single-client-matrix-zcdp-information} is replaced by a truncated
comparison for approximate DP.
Conditioning on \(X^{(-l)}\) then gives the corresponding federated contraction
exactly as in the proof of
\cref{thm:heterogeneous-matrix-information-contraction}.

For the rest of this section, use the notation of
\cref{lem:single-client-matrix-zcdp-information}.
Thus \(X_1,\ldots,X_n\) are independent observations,
\[
  S(\theta)=\sum_{i=1}^n s_i(X_i;\theta),
  \qquad
  J(\theta)=\E_\theta[S(\theta)S(\theta)^\T],
\]
and \(T=M(X)\) is a \(\theta\)-independent statistic.
Here \(T\) is assumed to be generated by a sample-level
\((\epsilon,\delta)\)-DP mechanism.
Fix \(\theta\), let all expectations be under \(P_\theta\), and write
\[
  U(T;\theta)=\E_\theta[S(\theta)\mid T],
  \qquad
  A(\theta)=\E_\theta[U(T;\theta)U(T;\theta)^\T],
\]
\[
  a=\Tr\zk{J(\theta)^\dagger A(\theta)} ,
  \qquad
  W(t;\theta)=J(\theta)^\dagger U(t;\theta).
\]
For each \(i\), let \(X_i'\), \(X^{(i)}\), and \(T^{(i)}=M(X^{(i)})\) be the
single-sample replacement variables used in
\cref{lem:single-client-matrix-zcdp-information}, and define
\[
  Z_i(t;\theta)=\ang{W(t;\theta),s_i(X_i;\theta)} .
\]
The \(\delta\)-part of approximate DP is controlled through the truncated tail
term, as in score attack lower bounds
\citep{cai2021_CostPrivacy,cai2023_ScoreAttack}.

\begin{lemma}[Approximate DP comparison with truncation]
  \label{lem:approx-dp-truncated-comparison}
  Let \(P\) and \(Q\) satisfy
  \[
    P(B)\le e^\epsilon Q(B)+\delta,
    \qquad
    Q(B)\le e^\epsilon P(B)+\delta
  \]
  for every measurable \(B\).
  If \(0\le\epsilon\le\epsilon_0\), then for every integrable real function
  \(f\) and every \(\tau>0\),
  \[
    \abs{\E_P f-\E_Q f}
    \le
    C_{\epsilon_0}\epsilon\,\E_Q\abs{f}
    +
    2\delta\tau
    +
    \E_P\zk{\abs{f}\mathbf 1\{\abs{f}>\tau\}}
    +
    \E_Q\zk{\abs{f}\mathbf 1\{\abs{f}>\tau\}} .
  \]
\end{lemma}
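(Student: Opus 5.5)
We will prove \cref{lem:approx-dp-truncated-comparison} by splitting \(f\) into a bounded part and a tail:
\[
  f=f_\tau+r_\tau,\qquad f_\tau=f\,\mathbf 1\{\abs f\le\tau\},\qquad r_\tau=f\,\mathbf 1\{\abs f>\tau\}.
\]
The triangle inequality then gives
\[
  \abs{\E_P f-\E_Q f}\le\abs{\E_P f_\tau-\E_Q f_\tau}+\E_P\abs{r_\tau}+\E_Q\abs{r_\tau}.
\]
The last two terms are exactly the truncation remainders in the statement. It therefore suffices to show
\[
  \abs{\E_P f_\tau-\E_Q f_\tau}\le C_{\epsilon_0}\epsilon\,\E_Q\abs{f_\tau}+2\delta\tau.
\]
This is enough because \(\E_Q\abs{f_\tau}\le\E_Q\abs f\).

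For the bounded part we further split \(f_\tau=f_\tau^+-f_\tau^-\) into positive and negative parts. Each of \(g=f_\tau^\pm\) takes values in \([0,\tau]\), and we use the layer-cake representation \(\E_P g=\int_0^\tau P(g>u)\dd u\). The hypothesis applied to the events \(B_u=\{g>u\}\) gives
\[
  \E_P g\le e^\epsilon\E_Q g+\delta\tau,
  \qquad
  \E_Q g\le e^\epsilon\E_P g+\delta\tau.
\]
The second inequality rearranges to \(\E_P g\ge e^{-\epsilon}\E_Q g-e^{-\epsilon}\delta\tau\). Together these yield
\[
  \abs{\E_P g-\E_Q g}\le(e^\epsilon-1)\E_Q g+\delta\tau,
\]
using \(1-e^{-\epsilon}\le e^\epsilon-1\) and \(e^{-\epsilon}\le1\).

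Adding the bounds for \(f_\tau^+\) and \(f_\tau^-\) gives
\[
  \abs{\E_P f_\tau-\E_Q f_\tau}\le(e^\epsilon-1)\E_Q\abs{f_\tau}+2\delta\tau.
\]
Since \(e^\epsilon-1\le C_{\epsilon_0}\epsilon\) with \(C_{\epsilon_0}=(e^{\epsilon_0}-1)/\epsilon_0\) for \(0\le\epsilon\le\epsilon_0\), by convexity, this is the required bound on the bounded part.

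The only real obstacle is the factor \(2\delta\tau\) rather than \(\delta\tau\). The \(\delta\)-slack has to be paid separately for the positive and negative parts, since bounding \(f_\tau\) directly via \(f_\tau+\tau\in[0,2\tau]\) would create a term \(\epsilon\tau\) that is not controlled by \(\E_Q\abs f\). The positive/negative split avoids this, and the resulting constant matches the statement. The measurability of \(B_u\) and the integrability of \(f\) are routine.
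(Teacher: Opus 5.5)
Your proof is correct and follows the paper's argument. You truncate, split the bounded part into positive and negative pieces, and run the layer-cake comparison on each piece, paying \(\delta\tau\) per piece, then add the two tail remainders. The only differences are cosmetic. You truncate by zeroing, \(f\mathbf 1\{\abs f\le\tau\}\), so the remainder is exactly \(f\mathbf 1\{\abs f>\tau\}\); the paper clips to \([-\tau,\tau]\) and uses \(\abs{f-f_\tau}\le\abs f\mathbf 1\{\abs f>\tau\}\). You also make explicit the constant \(C_{\epsilon_0}=(e^{\epsilon_0}-1)/\epsilon_0\) that the paper leaves implicit.
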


\begin{proof}
  Let \(f_\tau=(-\tau)\vee(f\wedge\tau)\).
  For any \(0\le h\le\tau\),
  \[
    \E_P h
    =
    \int_0^\tau P(h>t)\dd t
    \le
    e^\epsilon\E_Q h+\delta\tau ,
  \]
  while the reverse comparison gives
  \[
    \E_P h
    \ge
    e^{-\epsilon}\E_Q h-e^{-\epsilon}\delta\tau .
  \]
  Applying these two inequalities to the positive and negative parts of
  \(f_\tau\) yields
  \[
    \abs{\E_P f_\tau-\E_Q f_\tau}
    \le
    C_{\epsilon_0}\epsilon\,\E_Q\abs{f_\tau}
    +
    2\delta\tau .
  \]
  The tail bound follows from
  \(\abs{f-f_\tau}\le\abs{f}\mathbf 1\{\abs{f}>\tau\}\).
\end{proof}

\begin{lemma}[Single-client approximate DP contraction]
  \label{lem:single-client-approx-dp-information}
  Suppose \(0\le\epsilon\le\epsilon_0\).
  If there exist \(\tau_i>0\) and \(r_i\ge0\) such that
  \[
    \E\zk{
      \abs{Z_i(T;\theta)}
      \mathbf 1\{\abs{Z_i(T;\theta)}>\tau_i\}
    }
    +
    \E\zk{
      \abs{Z_i(T^{(i)};\theta)}
      \mathbf 1\{\abs{Z_i(T^{(i)};\theta)}>\tau_i\}
    }
    \le r_i
  \]
  for \(i=1,\ldots,n\), then
  \[
    \Tr\zk{J(\theta)^\dagger A(\theta)}
    \le
    C_{\epsilon_0}
    \left[
      n(e^\epsilon-1)^2
      +
      \sum_{i=1}^n(\delta\tau_i+r_i)
    \right].
  \]
\end{lemma}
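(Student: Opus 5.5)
We follow the structure of the proof of \cref{lem:single-client-matrix-zcdp-information}. The only change is that the \(\chi^2\) mean-difference inequality is replaced by \cref{lem:approx-dp-truncated-comparison}. Fix \(\theta\) and drop it from the notation.

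\emph{Step 1: reduction to singleton replacements.} Exactly as in the zCDP proof, \(0\preceq A\preceq J\) and \(\operatorname{range}(A)\subseteq\operatorname{range}(J)\). Hence
\[
  a=\Tr(J^\dagger A)=\E\ang{W(T),U(T)}=\E\ang{W(T),S}=\sum_{i=1}^n\E Z_i(T).
\]
Since \(T^{(i)}\) is independent of \(X_i\) and \(\E s_i(X_i)=0\), we have \(\E Z_i(T^{(i)})=0\), where \(Z_i\) is always evaluated at the original \(X_i\). Therefore
\[
  a=\sum_{i=1}^n\bigl(\E Z_i(T)-\E Z_i(T^{(i)})\bigr).
\]

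\emph{Step 2: conditional truncated comparison.} Condition on \((X,X^{(i)})\). The datasets are adjacent, so \(P=\mca L(T\mid X,X^{(i)})=M(X)\) and \(Q=\mca L(T^{(i)}\mid X,X^{(i)})=M(X^{(i)})\) satisfy the two-sided \((\epsilon,\delta)\) inequalities. With \(X_i\) fixed, the function \(f(t)=Z_i(t)=\ang{W(t),s_i(X_i)}\) is a fixed real function of \(t\). Applying \cref{lem:approx-dp-truncated-comparison} with \(\tau=\tau_i\), averaging over \((X,X^{(i)})\), and using the triangle inequality gives
\[
  \bigl|\E Z_i(T)-\E Z_i(T^{(i)})\bigr|
  \le C_{\epsilon_0}\epsilon\,\E\abs{Z_i(T^{(i)})}+2\delta\tau_i+r_i .
\]
Here the two averaged tail terms are exactly the quantity bounded by \(r_i\) in the hypothesis. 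Next, Cauchy--Schwarz together with the independence of \(T^{(i)}\) and \(X_i\), and \(T^{(i)}\stackrel{d}{=}T\), give
\[
  \E\abs{Z_i(T^{(i)})}\le\sqrt{b_i},
  \qquad
  b_i=\Tr(J^\dagger AJ^\dagger I_i).
\]
Summing over \(i\) and using \(\sum_i I_i=J\) and \(J^\dagger AJ^\dagger J=J^\dagger A\) (valid because \(\operatorname{range}(A)\subseteq\operatorname{range}(J)\)), we obtain \(\sum_i b_i=a\).

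\emph{Step 3: solve the self-bounding inequality.} By Cauchy--Schwarz,
\[
  a\le C_{\epsilon_0}\epsilon\sqrt{na}+y,
  \qquad
  y=\sum_{i=1}^n(2\delta\tau_i+r_i).
\]
Any inequality \(a\le x\sqrt a+y\) implies \(a\le x^2+2y\), by AM--GM: \(x\sqrt a\le x^2/2+a/2\). Hence \(a\le C_{\epsilon_0}^2\epsilon^2 n+2y\). Finally, \(\epsilon\le e^\epsilon-1\) gives the stated bound after enlarging \(C_{\epsilon_0}\).

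The main obstacle is Step 2. The comparison must be applied conditionally on the coupled pair \((X,X^{(i)})\), with \(f\) depending on \(X_i\), and the result must then be averaged so that the leading term is measured under the replacement law \(Q\). This matters because only under \(Q\), where \(T^{(i)}\) is independent of \(X_i\), does the second moment factor into \(b_i\). The tail terms must also be averaged so that they match the hypothesis exactly. The remaining pieces, namely the pseudoinverse bookkeeping and the handling of a singular \(J\), are routine and carry over verbatim from the zCDP proof.
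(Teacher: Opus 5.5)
Your proposal is correct and follows the paper's proof essentially step for step. The shared steps are the reduction $a=\sum_i \E Z_i(T;\theta)$ with $\E Z_i(T^{(i)};\theta)=0$, the conditional application of \cref{lem:approx-dp-truncated-comparison} with $Q$ the replacement law, the bound $\E\abs{Z_i(T^{(i)};\theta)}\le\sqrt{b_i}$ with $\sum_i b_i=a$, and solving $a\le C_{\epsilon_0}\epsilon\sqrt{na}+\sum_i(2\delta\tau_i+r_i)$; the only cosmetic differences are that you handle singular $J(\theta)$ directly through the pseudoinverse instead of restricting to $\operatorname{range}(J(\theta))$, and you make the quadratic-solving step explicit via AM--GM together with $\epsilon\le e^\epsilon-1$.
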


\begin{proof}
  It suffices to consider \(J(\theta)\succ0\); the singular case is handled by
  restricting to \(\operatorname{range}(J(\theta))\), as in
  \cref{lem:single-client-matrix-zcdp-information}.
  The proof of \cref{lem:single-client-matrix-zcdp-information} gives
  \(0\preceq A\preceq J\) and
  \[
    a
    =
    \sum_{i=1}^n \E Z_i(T;\theta).
  \]
  Conditional on \((X,X^{(i)})\), the laws of \(T=M(X)\) and
  \(T^{(i)}=M(X^{(i)})\) satisfy the two-sided
  \((\epsilon,\delta)\)-DP comparison.
  Since \(T^{(i)}\) is independent of \(X_i\) and \(\E s_i(X_i)=0\),
  \(\E Z_i(T^{(i)};\theta)=0\).
  Applying \cref{lem:approx-dp-truncated-comparison} conditionally gives
  \[
    \abs{\E Z_i(T;\theta)}
    \le
    C_{\epsilon_0}\epsilon\,\E\abs{Z_i(T^{(i)};\theta)}
    +
    2\delta\tau_i+r_i .
  \]
  With
  \[
    b_i=\Tr\left(J^{-1}AJ^{-1}I_i(\theta)\right),
  \]
  the independence of \(T^{(i)}\) and \(X_i\) gives
  \(\E\abs{Z_i(T^{(i)};\theta)}\le\sqrt{b_i}\), and the proof of
  \cref{lem:single-client-matrix-zcdp-information} gives
  \(\sum_i b_i=a\).
  Hence
  \[
    a
    \le
    C_{\epsilon_0}\epsilon\sqrt{na}
    +
    \sum_{i=1}^n(2\delta\tau_i+r_i).
  \]
  Solving the quadratic inequality proves the result, using
  \(\epsilon\asymp e^\epsilon-1\) for \(0\le\epsilon\le\epsilon_0\).
\end{proof}

To control the truncation tail, we introduce a convenient sub-Gaussian condition on the whitened score:
\begin{equation}
  \label{eq:homogeneous-subgaussian-score}
  \norm{u^\T I_x(\theta)^{-1/2}s(X;\theta)}_{\psi_2}\le K,
  \qquad
  \forall u \text{ s.t. } \norm{u}_2=1 .
\end{equation}
For the Gaussian mean model \(X\sim\mca{N}(\theta,\sigma^2 I_p)\),
\(I_x(\theta)=\sigma^{-2}I_p\) and
\(u^\T I_x(\theta)^{-1/2}s(X;\theta)=\sigma^{-1}u^\T(X-\theta)\), which is
standard normal for every unit \(u\).
Thus \cref{eq:homogeneous-subgaussian-score} holds uniformly with a universal
constant \(K\).

\begin{corollary}[Sub-Gaussian score sufficient condition]
  \label{cor:subgaussian-score-approx-dp-contraction}
  Let \(r=\operatorname{rank}(J(\theta))\) and \(0<\delta\le e^{-1}\).
  Suppose that, on \(\operatorname{range}(J(\theta))\),
  \(J(\theta)^{\dagger/2}S(\theta)\) is \(K\)-sub-Gaussian and
  \(J(\theta)^{\dagger/2}s_i(X_i;\theta)\) is \(K_i\)-sub-Gaussian for each
  \(i\).
  Then
  \[
    \Tr\zk{J(\theta)^\dagger A(\theta)}
    \le
    C_{\epsilon_0,K}
    \left[
      n(e^\epsilon-1)^2
      +
      \delta\,r\log(1/\delta)\sum_{i=1}^n K_i
    \right].
  \]
  In the i.i.d.\ homogeneous case \(J(\theta)=nI_x(\theta)\), if
  \cref{eq:homogeneous-subgaussian-score} holds, this reduces to
  \[
    \Tr\zk{J(\theta)^\dagger A(\theta)}
    \le
    C_{\epsilon_0,K}
    \left[
      n(e^\epsilon-1)^2
      +
      \delta\,p\sqrt n\log(1/\delta)
    \right].
  \]
  Thus the contribution of \( \delta \) vanishes relative to the \( \epsilon \) term whenever
  \[
    \delta\,p\log(1/\delta)\lesssim \epsilon^2\sqrt n
  \]
  at the finite-dimensional resolution \(p\).
\end{corollary}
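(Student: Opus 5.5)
The plan is to apply \cref{lem:single-client-approx-dp-information} with a single common truncation level \(\tau_i=\tau\), chosen so that both \(\delta\tau\) and the tail remainder \(r_i\) are of order \(\delta\log(1/\delta)\) times a score-dependent factor. Throughout I work in whitened coordinates on \(R=\operatorname{range}(J(\theta))\), using an orthonormal basis as in the singular case of \cref{lem:single-client-matrix-zcdp-information}. There \(W(t)=J^{\dagger}U(t)=J^{\dagger/2}V(t)\) with \(V(t)=J^{\dagger/2}U(t)=\E[J^{\dagger/2}S\mid T=t]\), and \(Z_i(t)=\ang{V(t),J^{\dagger/2}s_i(X_i)}\).

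\textbf{Step 1: tail of \(Z_i(T^{(i)})\).} Here \(T^{(i)}\) is independent of \(X_i\). Conditionally on \(T^{(i)}\), the variable \(Z_i\) is sub-Gaussian with parameter \(K_i\norm{V(T^{(i)})}_2\). Moreover, \(\norm{V}_2\le \E[\norm{J^{\dagger/2}S}_2\mid T]\). Since \(J^{\dagger/2}S\) is \(K\)-sub-Gaussian in dimension \(r\), \(\norm{J^{\dagger/2}S}_2\) has a \(\psi_2\)-norm of order \(K\sqrt r\), and conditional Jensen transfers this bound to \(\norm{V}_2\). Hence \(Z_i(T^{(i)})\) is a product of two independent sub-Gaussian-type factors and satisfies a \(\psi_1\)-type bound of order \(K_iK\sqrt r\).

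\textbf{Step 2: tail of \(Z_i(T)\).} This case lacks independence and is the main obstacle. I would use Cauchy--Schwarz,
\[
|Z_i(T)|\le \norm{V(T)}_2\,\norm{J^{\dagger/2}s_i(X_i)}_2,
\]
where each factor is sub-Gaussian in norm (of orders \(K\sqrt r\) and \(K_i\sqrt r\)) regardless of dependence. By Hölder, the product then has \(\psi_1\)-norm of order \(KK_i r\). This is where the factor \(r\) in the statement comes from.

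\textbf{Step 3: choosing \(\tau\).} For a \(\psi_1\) variable with scale \(c_i\asymp K K_i r\), the truncated tail satisfies
\[
\E|Z|\mathbf 1\{|Z|>\tau\}\lesssim (\tau+c_i)e^{-\tau/c_i}.
\]
Taking \(\tau_i=c_i\log(1/\delta)\) gives \(r_i\lesssim c_i\delta\log(1/\delta)\) and \(\delta\tau_i=c_i\delta\log(1/\delta)\). Here \(\delta\le e^{-1}\) ensures \(\log(1/\delta)\ge1\). Summing over \(i\) and absorbing the powers of \(K\) into \(C_{\epsilon_0,K}\), \cref{lem:single-client-approx-dp-information} yields the first display.

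\textbf{Homogeneous case.} I take \(J=nI_x\), so \(J^{\dagger/2}s_i=n^{-1/2}I_x^{-1/2}s(X_i)\) and \cref{eq:homogeneous-subgaussian-score} gives \(K_i=K/\sqrt n\). The whitened sum \(J^{\dagger/2}S\) is an average of \(n\) independent vectors with these whitened summands, so it is \(CK\)-sub-Gaussian by the standard sum bound. Since \(r\le p\), we have \(\sum_iK_i\lesssim K\sqrt n\), which gives \(\delta p\sqrt n\log(1/\delta)\).

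\textbf{Final comparison.} Comparing this with \(n\epsilon^2\asymp n(e^\epsilon-1)^2\) gives the stated condition \(\delta p\log(1/\delta)\lesssim\epsilon^2\sqrt n\).

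If the Step 2 bound turns out too lossy, a fallback is to bound \(Z_i(T)\) by the same \(\psi_1\) argument as in Step 1. The justification would be that \(V(T)\) is a conditional expectation of a sub-Gaussian vector, so the crude bound above is dimensionally consistent with the claimed rate.
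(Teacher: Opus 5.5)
Your proposal is correct and is essentially the paper's proof, written out in more detail. The paper likewise uses Jensen to pass the sub-Gaussian bound to the whitened posterior score, asserts sub-exponential scale $\lesssim_K rK_i$ for both $Z_i(T)$ and $Z_i(T^{(i)})$ (your Cauchy--Schwarz and product-of-$\psi_2$ step is what justifies this for the dependent variable $Z_i(T)$), takes $\tau_i\asymp rK_i\log(1/\delta)$, and uses $K_i=K/\sqrt n$ and $r=p$ in the homogeneous case, where your observation that $J^{\dagger/2}S$ is $CK$-sub-Gaussian by the independent-sum bound fills a step the paper leaves implicit. Your closing fallback is unnecessary and would not work as stated: conditionally on $T$ the score $s_i(X_i)$ is no longer $K_i$-sub-Gaussian, so the independence argument of Step~1 does not carry over to $Z_i(T)$.
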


\begin{proof}
  Jensen's inequality transfers the sub-Gaussian bound from the whitened full
  score to the whitened posterior score. Hence
  \[
    \ang{J^{\dagger/2}U(T;\theta),J^{\dagger/2}s_i(X_i;\theta)}
  \]
  and the corresponding variable with \(T^{(i)}\) in place of \(T\) have
  sub-exponential tails with scale at most \(C_K rK_i\).
  Taking \(\tau_i=C_K rK_i\log(1/\delta)\) gives
  \[
    r_i
    \le
    C_K\delta\,rK_i\log(1/\delta).
  \]
  The first inequality follows from
  \cref{lem:single-client-approx-dp-information}.
  In the homogeneous case,
  \(J^{-1/2}s_i=n^{-1/2}I_x^{-1/2}s_i\), so \(K_i\le K/\sqrt n\), and
  \(r=p\).
\end{proof}

\begin{theorem}[Scalar approximate DP federated van Trees inequality]
  \label{thm:homogeneous-approx-dp-vantrees}
  Under model \cref{eq:homogeneous-independent-client-model} and
  \cref{ass:van-trees-regularity}, suppose \(I_x(\theta)\succ0\)
  and \cref{eq:homogeneous-subgaussian-score} holds
  for all
  \(\theta\in\supp(\pi)\).
  Let \(G\) be the complete public transcript of an
  \((\bm\epsilon,\bm\delta)\)-DP federated protocol in the sense of
  \cref{def:full-transcript-approx-dp}, with
  \(0\le\epsilon_l\le\epsilon_0\) and \(0<\delta_l\le e^{-1}\).
  Define
  \[
    \Gamma_l(\theta)
    =
    \left[
      \xk{ \epsilon_l^2 n_l^2
        +
        \delta_l p n_l^{3/2}\log(1/\delta_l)}
      \norm{I_x(\theta)}_{\mathrm{op}}
    \right]
    \wedge
    n_l\Tr I_x(\theta).
  \]
  Then every estimator \(\hat\theta=\hat\theta(G)\) satisfies
  \[
    \E_\pi\E_\theta\norm{\hat\theta-\theta}_2^2
    \gtrsim
    \frac{p^2}{
      \int_\Theta \sum_{l=1}^m \Gamma_l(\theta)\dd\pi(\theta)
      +
      \Tr J_\pi
    }.
  \]
\end{theorem}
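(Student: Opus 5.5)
The plan follows the zCDP proof of \cref{thm:general-fed-dp-vantrees} step for step. Only the single-client contraction changes. First, apply the classical multivariate van Trees inequality (Lemma C.1 of \citet{cai2026_MinimaxAdaptive}) to the statistic \(G\). This gives
\[
  \E_\pi\E_\theta\norm{\hat\theta-\theta}_2^2
  \ge
  \frac{p^2}{\int_\Theta\Tr I_G(\theta)\dd\pi(\theta)+\Tr J_\pi},
\]
so it suffices to show \(\Tr I_G(\theta)\lesssim\sum_l\Gamma_l(\theta)\) pointwise on \(\supp(\pi)\). Next, \cref{lem:posterior-factorization} and \cref{cor:matrix-score-decomposition} do not use the privacy notion. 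They give \(I_G=\sum_l A_l\) with \(A_l=\E[U_lU_l^\T]\) and \(U_l=\E_\theta[S_l\mid G]\). The nonprivate branch \(\Tr A_l\le n_l\Tr I_x(\theta)\) follows from \(L^2\) contraction of conditional expectation, exactly as in \cref{lem:transcript-information-contraction}.

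For the privacy branch, I pass to the finer posterior score \(\widetilde U_l=\E_\theta[S_l\mid G,X^{(-l)}]\). Jensen's inequality in Loewner order gives \(A_l\preceq\widetilde A_l\). Conditional on \(X^{(-l)}=x^{(-l)}\), the transcript \(G\) is a sample-level \((\epsilon_l,\delta_l)\)-DP statistic of the \(n_l\) i.i.d.\ samples \(X^{(l)}\), by \cref{def:full-transcript-approx-dp}. Adjacency is symmetric, so the two-sided comparison needed by \cref{lem:approx-dp-truncated-comparison} holds.

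Since \(I_x(\theta)\succ0\), we have \(J_l=n_lI_x\) of full rank \(p\). The homogeneous form of \cref{cor:subgaussian-score-approx-dp-contraction} then applies under \cref{eq:homogeneous-subgaussian-score}. Applied conditionally and averaged over \(X^{(-l)}\), it gives
\[
  \Tr(J_l^{-1}A_l)\le\Tr(J_l^{-1}\widetilde A_l)\le C\bigl[n_l\epsilon_l^2+\delta_lp\sqrt{n_l}\log(1/\delta_l)\bigr],
\]
using \(e^{\epsilon}-1\asymp\epsilon\) for \(\epsilon\le\epsilon_0\). We then convert back via \(\Tr A_l=\Tr(J_l\cdot J_l^{-1/2}A_lJ_l^{-1/2})\le\norm{J_l}_{\mathrm{op}}\Tr(J_l^{-1}A_l)\) with \(\norm{J_l}_{\mathrm{op}}=n_l\norm{I_x}_{\mathrm{op}}\). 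This yields \(\Tr A_l\lesssim(\epsilon_l^2n_l^2+\delta_lpn_l^{3/2}\log(1/\delta_l))\norm{I_x}_{\mathrm{op}}\). Taking the minimum with the nonprivate branch gives \(\Tr A_l\lesssim\Gamma_l(\theta)\), and summing over \(l\) and substituting into van Trees finishes the proof.

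The main obstacle is checking that the sub-Gaussian tail control in \cref{cor:subgaussian-score-approx-dp-contraction} survives the conditioning on \(X^{(-l)}\). The variable \(Z_i\) must be built from the conditional posterior score \(\widetilde U_l\), which depends on \(x^{(-l)}\). The sub-Gaussian bound on \(J_l^{-1/2}\widetilde U_l\) comes from conditional Jensen applied to \(J_l^{-1/2}S_l\). Its law does not depend on \(x^{(-l)}\), because \(X^{(l)}\) is independent of \(X^{(-l)}\). The tail constants, and hence \(\tau_i\) and \(r_i\), are therefore uniform in \(x^{(-l)}\), and averaging is harmless. I would also confirm that \(T^{(i)}\) has the same conditional marginal as \(T\), so the tail hypothesis covers both terms. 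The constants depend only on \(\epsilon_0\) and \(K\), which justifies the \(\gtrsim\).
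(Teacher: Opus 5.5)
Your proposal is correct and follows the same route as the paper's proof. The steps match: van Trees applied to $G$, the privacy-free decomposition $I_G=\sum_l A_l$, the raw $L^2$ bound, and the homogeneous sub-Gaussian approximate-DP contraction applied conditionally on $X^{(-l)}$. You also make explicit what the paper leaves implicit, namely the Loewner Jensen step $A_l\preceq\widetilde A_l$, the conversion $\Tr A_l\le\norm{J_l}_{\mathrm{op}}\Tr(J_l^{-1}A_l)$, and the uniformity of the tail constants in $x^{(-l)}$, and each of these is correct.
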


\begin{proof}
  The proof is the same as that of \cref{thm:general-fed-dp-vantrees}, with the
  zCDP transcript contraction in
  \cref{lem:transcript-information-contraction} replaced by the following
  approximate-DP contraction.
  Conditioning on \(X^{(-l)}\) and applying
  \cref{cor:subgaussian-score-approx-dp-contraction} gives, for the \(l\)th
  client score contribution \(A_l(\theta)\) used in that proof,
  \[
    \Tr A_l(\theta)
    \lesssim
    \left\{
      n_l^2\epsilon_l^2
      +
      \delta_l p n_l^{3/2}\log(1/\delta_l)
    \right\}
    \norm{I_x(\theta)}_{\mathrm{op}}.
  \]
  The raw-information bound \(\Tr A_l(\theta)\le n_l\Tr I_x(\theta)\) is
  unchanged; summing over clients and applying the same multivariate van Trees
  inequality proves the theorem.
\end{proof}

\begin{remark}[Weaker tail conditions]
  The sub-Gaussian condition in \cref{eq:homogeneous-subgaussian-score} is not
  essential.
  It can be replaced by any score tail bound that makes the truncation remainder
  \(\sum_i(\delta\tau_i+r_i)\) in
  \cref{lem:single-client-approx-dp-information} of smaller order than the
  privacy term required by the target lower bound.
\end{remark}
 \clearpage
\section{Application Lower Bound Derivations}
\label{app:application-derivations}

This appendix gives the lower bound calculations behind
\cref{sec:applications}.
We repeatedly use the following minimax reduction.
If a prior \(\pi\) is supported in the parameter space, then
\[
  \inf_{\hat\theta\in\mca{M}_{\bm\rho}}
  \sup_{\theta\in\Theta}
  \E_\theta L(\hat\theta,\theta)
  \ge
  \inf_{\hat\theta\in\mca{M}_{\bm\rho}}
  \E_\pi\E_\theta L(\hat\theta,\theta).
\]
For rectangular finite-dimensional submodels we use the cosine prior with
density
\[
  p_0(t)=a^{-1}\cos^2\left(\frac{\pi t}{2a}\right),
  \qquad t\in[-a,a],
\]
and \(p_0(t)=0\) otherwise.
It has one-dimensional Fisher information \(\pi^2/a^2\).
Thus a \(p\)-fold product version has prior Fisher matrix
\((\pi^2/a^2)I_p\) and trace \(p\pi^2/a^2\).

\subsection{Mean estimation}
\label{app:deriv-mean-estimation}

\subsubsection{Gaussian mean estimation}
\label{app:deriv-gaussian-mean}

For the Gaussian mean model in
\cref{eq:app-gaussian-mean-model}, the score of one observation has
Fisher information
\[
  I_x(\theta)=\sigma^{-2}I_d,
  \qquad
  \norm{I_x(\theta)}_{\mathrm{op}}=\sigma^{-2},
  \qquad
  \Tr I_x(\theta)=d\sigma^{-2}.
\]
Applying \cref{cor:bounded-rho-fed-dp-vantrees} with the \(d\)-fold cosine
prior supported on \([-1,1]^d\) gives
\[
  \E_\pi\E_\theta\norm{\hat\theta-\theta}_2^2
  \gtrsim
  \frac{d^2}{
    \sigma^{-2}
    \sum_{l=1}^m
    \left(\rho_l n_l^2\wedge d n_l\right)
    +
    d
  }.
\]
Since
\[
  \rho_l n_l^2\wedge d n_l
  =
  d^2
  \left(
    \frac{d}{n_l}
    \vee
    \frac{d^2}{\rho_l n_l^2}
  \right)^{-1},
\]
the preceding display is equivalent, up to constants, to
\[
  \inf_{\hat\theta \in \mca{M}_{\bm{\rho}}}
  \sup_{\theta\in [-1,1]^d}
  \E_\theta\norm{\hat\theta-\theta}_2^2
  \gtrsim
  d
  \wedge
  \sigma^2
  \zk{
    \sum_{l=1}^m
    \left(
      \frac{d}{n_l}
      \vee
      \frac{d^2}{\rho_l n_l^2}
    \right)^{-1}
  }^{-1}.
\]
The minimax bound follows because the prior is supported in \([-1,1]^d\).

\subsubsection{Missing features and coordinate coverage}
\label{app:deriv-balanced-coordinate-mean}

For the missing feature model, the Fisher information of one sample for client
\(l\) is
\[
  I_l(\theta)=\sigma^{-2}P_l,
  \qquad
  J_l(\theta)=n\sigma^{-2}P_l .
\]
Thus the setting is exactly the projection experiment discussed in
\cref{subsec:general-vantrees-discussion}, with \(p=d\) and client projection
rank \(s\).
Under balanced coverage,
\[
  \sum_{l=1}^m P_l=qI_d,
  \qquad
  ms=dq .
\]
\Cref{cor:balanced-coordinate-coverage} with the product cosine prior on
\([-1,1]^d\), followed by the minimax reduction, gives
\[
  \inf_{\hat\theta\in \mca{M}_{\bm{\rho}}}
  \sup_{\theta\in [-1,1]^d}
  \E_\theta\norm{\hat\theta-\theta}_2^2
  \gtrsim
  \frac{d}{
    1+
    \sigma^{-2}qn
    \left(1\wedge\frac{\kappa n}{s}\right)
  },
  \qquad
  \kappa=e^{2\rho}-1 .
\]
For bounded \(\rho\), \(\kappa\asymp\rho\).
This is equivalent, up to constants, to
\[
  \inf_{\hat\theta\in \mca{M}_{\bm{\rho}}}
  \sup_{\theta\in [-1,1]^d}
  \E_\theta\norm{\hat\theta-\theta}_2^2
  \gtrsim
  d
  \wedge
  \sigma^2
  \left(
    \frac{d}{qn}
    \vee
    \frac{ds}{q\rho n^2}
  \right).
\]
This calculation also shows the range obstruction: if a coordinate has zero
coverage, then the transcript information matrix has no component in that
coordinate and the risk there is controlled only by the prior.

\subsection{Linear regression}
\label{app:deriv-linear-regression-section}

\subsubsection{Homogeneous random design}
\label{app:deriv-linear-regression}

For client \(l\) in random design linear regression,
\[
  Y=(Z^{(l)})^\T\theta+\xi,
  \qquad
  \xi\sim\mca{N}(0,\sigma^2),
\]
the joint observation \(X=(Z^{(l)},Y)\) has score
\(\sigma^{-2}Z^{(l)}(Y-(Z^{(l)})^\T\theta)\), and hence
\[
  I_{x,l}(\theta)=\sigma^{-2}\E Z^{(l)}(Z^{(l)})^\T .
\]
If \(\E Z^{(l)}(Z^{(l)})^\T\preceq c_1 I_d\) uniformly in \(l\), then
\[
  \norm{I_{x,l}(\theta)}_{\mathrm{op}}\le c_1\sigma^{-2},
  \qquad
  \Tr I_{x,l}(\theta)\le c_1d\sigma^{-2}.
\]
The trace consequence of the heterogeneous matrix contraction therefore gives
the same denominator as in \cref{app:deriv-gaussian-mean}, up to constants
depending on \(c_1\), when applied with the \(d\)-dimensional cosine prior on
\([-1,1]^d\).
Therefore
\[
  \inf_{\hat\theta\in\mca{M}_{\bm{\rho}}}
  \sup_{\theta\in [-1,1]^d}
  \E_\theta\norm{\hat\theta-\theta}_2^2
  \gtrsim
  d
  \wedge
  \sigma^2
  \zk{
    \sum_{l=1}^m
    \left(
      \frac{d}{n_l}
      \vee
      \frac{d^2}{\rho_l n_l^2}
    \right)^{-1}
  }^{-1}.
\]
The lower eigenvalue condition in
\cref{eq:app-homogeneous-linear-regression-design} is not needed for the
lower bound itself; it records the usual well conditioned regime in which this
lower bound has the same order as the corresponding nonprivate and private
upper rates.

\subsubsection{Anisotropic and heterogeneous designs}
\label{app:deriv-heterogeneous-linear-regression}

For client \(l\),
\[
  I_l(\theta)=\sigma_l^{-2}\Sigma_l,
  \qquad
  J_l(\theta)=n_l\sigma_l^{-2}\Sigma_l .
\]
Substituting this into \cref{thm:matrix-fed-dp-vantrees} yields
\[
  J_l^{1/2}B_lJ_l^{1/2}
  =
  n_l\sigma_l^{-2}\Sigma_l^{1/2}B_l\Sigma_l^{1/2},
\]
where
\[
  0\preceq B_l\preceq P_{\Sigma_l},
  \qquad
  \Tr B_l\le (e^{2\rho_l}-1)n_l.
\]
Therefore
\[
  \E_\pi\E_\theta\norm{\hat\theta-\theta}_2^2
  \ge
  \inf_{\{B_l\}}
  \Tr
  \left[
    J_\pi+
    \sum_{l=1}^m
    n_l\sigma_l^{-2}\Sigma_l^{1/2}B_l\Sigma_l^{1/2}
  \right]^{-1}.
\]
For \(\Theta=[-1,1]^d\), the product cosine prior has
\(J_\pi=c I_d\) for a universal constant \(c>0\), giving the display in
\cref{eq:app-heterogeneous-linear-regression-bound} by the minimax reduction.
Each summand is supported on \(\operatorname{range}(\Sigma_l)\).
Consequently the public transcript cannot reduce posterior uncertainty in
directions orthogonal to
\(\operatorname{span}\{\operatorname{range}(\Sigma_l):1\le l\le m\}\).
This is the directional obstruction that a scalar trace denominator discards.

\subsubsection{Target domain prediction}
\label{app:deriv-target-domain-prediction}

Take \(P_0\) to be the projection onto an \(r\)-dimensional target subspace and
consider
\[
  L_0(\hat\theta,\theta)
  =
  (\hat\theta-\theta)^\T P_0(\hat\theta-\theta).
\]
Only the \(q\) clients with \(\Sigma_l=P_0\) contribute to
\(L_0\); all other clients satisfy \(P_0\Sigma_lP_0=0\) and hence
contribute no target information.
Within the target subspace, each relevant client has raw Fisher information
\(n\sigma^{-2}I_r\) and a whitened budget \(\kappa n\), where
\(\kappa=e^{2\rho}-1\).
For the minimax risk over \(B_2(0,1)\), choose a product cosine prior on a
coordinate cube of radius \(r^{-1/2}\) inside the target subspace.
Its Fisher matrix in the target coordinates is bounded by a constant multiple
of \(rI_r\).
The prior is supported in \(B_2(0,1)\).
Applying the balanced coordinate calculation with dimension \(r\), coverage
\(q\), and rank \(r\) gives
\[
  \inf_{\hat\theta\in \mca{M}_{\bm{\rho}}}
  \sup_{\theta\in B_2(0,1)}
  \E_\theta L_0(\hat\theta,\theta)
  \gtrsim
  1
  \wedge
  \sigma^2
  \left(
    \frac{r}{qn}
    \vee
    \frac{r^2}{q\kappa n^2}
  \right).
\]
For bounded \(\rho\), \(\kappa\asymp\rho\), which gives the rate displayed in
\cref{eq:app-target-domain-prediction-rate}.

\subsection{Nonparametric regression}
\label{app:deriv-nonparametric-regression}

Let \(\phi_1,\ldots,\phi_p\) be an orthonormal system in \(L^2([0,1]^d)\)
adapted to the Sobolev norm, and consider the finite-dimensional submodel
\[
  f_\theta(u)=\sum_{j=1}^p\theta_j\phi_j(u).
\]
Choose a product cosine prior on the coefficients with radius
\[
  r_p\asymp R p^{-1/2-\alpha/d}.
\]
Then the prior is supported in \(H^\alpha(R)\), and
\[
  \Tr J_{\pi,p}
  \asymp
  p r_p^{-2}
  \asymp
  R^{-2}p^{2+2\alpha/d}.
\]
The design density is bounded above and below, so the Fisher information of
one observation in the submodel satisfies
\[
  \norm{I_x(\theta)}_{\mathrm{op}}\lesssim \sigma^{-2},
  \qquad
  \Tr I_x(\theta)\lesssim \sigma^{-2}p .
\]
\Cref{cor:bounded-rho-fed-dp-vantrees} gives, for each fixed resolution \(p\),
\[
  \E_\pi\E_f\norm{\hat f-f}_{L^2}^2
  \gtrsim
  \frac{p^2}{
    R^{-2}p^{2+2\alpha/d}
    +
    \sigma^{-2}
    \sum_{l=1}^m
    \left(
      \rho_l n_l^2
      \wedge
      p n_l
    \right)
  }.
\]
Dividing numerator and denominator by \(p^2\) gives
\[
  \E_\pi\E_f\norm{\hat f-f}_{L^2}^2
  \gtrsim
  \zk{
    \frac{p^{2\alpha/d}}{R^2}
    +
    \frac{1}{\sigma^2}
    \sum_{l=1}^m
    \left(
      \frac{p}{n_l}
      \vee
      \frac{p^2}{\rho_l n_l^2}
    \right)^{-1}
  }^{-1}.
\]
Since the prior is supported in \(H^\alpha(R)\), the minimax reduction applies
for each \(p\).
Taking the supremum over \(p\) proves the heterogeneous display.

For homogeneous clients, the nonprivate component is obtained by balancing
\[
  R^{-2}p^{2\alpha/d}
  \asymp
  \frac{mn}{\sigma^2p},
\]
which yields
\[
  R^{\frac{2d}{2\alpha+d}}
  \left(
    \frac{\sigma^2}{mn}
  \right)^{\frac{2\alpha}{2\alpha+d}}.
\]
The privacy component is obtained by balancing
\[
  R^{-2}p^{2\alpha/d}
  \asymp
  \frac{\rho mn^2}{\sigma^2p^2},
\]
which yields
\[
  R^{\frac{2d}{\alpha+d}}
  \left(
    \frac{\sigma^2}{\rho mn^2}
  \right)^{\frac{\alpha}{\alpha+d}}.
\]
Since a lower bound by the larger of finitely many components is equivalent,
up to a universal constant, to a lower bound by their sum, the homogeneous
rate displayed in \cref{eq:app-nonparametric-regression-homogeneous-rate}
follows.

\subsection{Functional mean estimation}
\label{app:deriv-functional-mean}

The privacy unit is one entire discretely observed curve.
The finite-dimensional reductions below use localized wavelet submodels
\[
  f_\theta=\sum_{r=1}^p\theta_r\phi_r
\]
with coefficient radius \(p^{-\alpha-1/2}\).
For these submodels,
\[
  \Tr J_\pi\asymp p^{2\alpha+2},
\]
so the Sobolev truncation contribution after the \(p^2\) van Trees numerator is
\(p^{2\alpha}\).

\subsubsection{Independent design}
\label{app:deriv-functional-independent}

For the independent design lower bound, take the measurement locations
\(T_{ij}^{(l)}\) to be private observations generated independently from a
density on \([0,1]\) bounded above and below.
Use the finite-dimensional Gaussian curve submodel
\[
  X_i^{(l)}(t)
  =
  f_\theta(t)
  +
  \sum_{r=1}^p \eta_{ir}^{(l)}\phi_r(t),
  \qquad
  \eta_i^{(l)}\sim \mca{N}(0,\tau_p^2 I_p),
  \qquad
  \tau_p^2\asymp p^{-2\alpha-1}.
\]
This variation at the curve level is admissible because
\[
  \E\norm{X_i^{(l)}-f_\theta}_{H^\alpha}^2
  \lesssim
  p^{2\alpha}\cdot p\cdot \tau_p^2
  \lesssim 1.
\]
For a fixed client \(l\), write
\[
  \varphi(t)=(\phi_1(t),\ldots,\phi_p(t))^\T,
  \qquad
  \Phi_T=
  \begin{pmatrix}
    \varphi(T_{i1}^{(l)})^\T\\
    \vdots\\
    \varphi(T_{ik_l}^{(l)})^\T
  \end{pmatrix}.
\]
Conditionally on the location vector
\(T=(T_{i1}^{(l)},\ldots,T_{ik_l}^{(l)})\), the observed vector
\(Y_i^{(l)}=(Y_{i1}^{(l)},\ldots,Y_{ik_l}^{(l)})^\T\) satisfies
\[
  Y_i^{(l)}
  =
  \Phi_T\theta+\Phi_T\eta_i^{(l)}+\xi_i^{(l)},
  \qquad
  \xi_i^{(l)}\sim\mca{N}(0,I_{k_l}),
\]
and therefore
\[
  Y_i^{(l)}\mid T,\theta
  \sim
  \mca{N}(\Phi_T\theta,C_T),
  \qquad
  C_T=I_{k_l}+\tau_p^2\Phi_T\Phi_T^\T .
\]
The law of \(T\) and the covariance \(C_T\) do not depend on \(\theta\).
Hence the Fisher information matrix of one independently designed curve is
\begin{equation}
  \label{eq:app-functional-independent-one-curve-information}
  I_{l,p}^{\mathrm{ind}}(\theta)
  =
  \E_T\left[\Phi_T^\T C_T^{-1}\Phi_T\right].
\end{equation}
This is the final Fisher information for one curve used below.
It has two useful upper envelopes.
First, since \(C_T\succeq I_{k_l}\),
\[
  I_{l,p}^{\mathrm{ind}}(\theta)
  \preceq
  \E_T\Phi_T^\T\Phi_T
  =
  k_l\E_T[\varphi(T)\varphi(T)^\T]
  \preceq C k_l I_p,
\]
where the last inequality uses the \(L^2\)-normalization of the localized
wavelets and the bounded design density.
Second, the Gaussian variation at the curve level gives
\[
  \Phi_T^\T C_T^{-1}\Phi_T
  =
  \tau_p^{-2}
  \left[
    I_p-(I_p+\tau_p^2\Phi_T^\T\Phi_T)^{-1}
  \right].
\]
Consequently
\[
  I_{l,p}^{\mathrm{ind}}(\theta)
  \preceq
  \tau_p^{-2}I_p
  \preceq
  C p^{2\alpha+1}I_p.
\]
Combining the two envelopes gives the explicit bounds
\begin{equation}
  \label{eq:app-functional-independent-one-curve-information-bound}
  \norm{I_{l,p}^{\mathrm{ind}}(\theta)}_{\mathrm{op}}
  \lesssim
  k_l\wedge p^{2\alpha+1},
  \qquad
  \Tr I_{l,p}^{\mathrm{ind}}(\theta)
  \lesssim
  k_lp\wedge p^{2\alpha+2}.
\end{equation}
Applying the trace contraction to both upper bounds and retaining the smaller
allowed transcript information gives, after dividing by the \(p^2\) numerator,
the client contribution
\[
  \frac{k_l n_l}{p}
  \wedge
  \frac{\rho_l k_l n_l^2}{p^2}
  \wedge
  p^{2\alpha}n_l
  \wedge
  p^{2\alpha-1}\rho_l n_l^2 .
\]
Combining this contribution with the truncation term \(p^{2\alpha}\) proves
\[
  \begin{aligned}
    &\inf_{\hat f\in\mca{M}_{\bm{\rho}}}
    \sup_{f\in H^\alpha(R)}
    \E_f\norm{\hat f-f}_{L^2}^2 \\
    &\qquad\gtrsim
    \sup_{p\ge1}
    \left\{
      p^{2\alpha}
      +
      \sum_{l=1}^m
      \left(
        \frac{k_l n_l}{p}
        \wedge
        \frac{\rho_l k_l n_l^2}{p^2}
        \wedge
        p^{2\alpha}n_l
        \wedge
        p^{2\alpha-1}\rho_l n_l^2
      \right)
    \right\}^{-1}.
  \end{aligned}
\]

In the homogeneous case, the four displayed rate components are obtained as
follows.
The sample size component at the curve level is obtained by taking \(p=1\),
giving \(1/(mn)\).
Balancing \(p^{2\alpha}\) with \(mkn/p\) gives
\[
  p^{2\alpha+1}\asymp mkn,
  \qquad
  \text{risk}\asymp (mkn)^{-2\alpha/(2\alpha+1)}.
\]
Balancing \(p^{2\alpha}\) with \(\rho m k n^2/p^2\) gives
\[
  p^{2\alpha+2}\asymp \rho m k n^2,
  \qquad
  \text{risk}\asymp (\rho m k n^2)^{-\alpha/(\alpha+1)}.
\]
Finally, the privacy bottleneck at low frequencies
\(\rho m n^2p^{2\alpha-1}\) at \(p=1\) gives \(1/(\rho m n^2)\).
Together these yield the homogeneous lower bound displayed in
\cref{eq:app-functional-independent-homogeneous-rate}.

\subsubsection{Common design}
\label{app:deriv-functional-common}

We give the common design construction explicitly.
For clarity take a regular public grid \(t_j=j/k\), \(j=1,\ldots,k\); the same
argument only uses quasi uniform grid spacing.
Fix \(1\le p\le k\) and partition the grid into \(p\) consecutive blocks
\[
  B_r=\{j:\ t_j\in I_r\},
  \qquad
  I_r=\left[\frac{r-1}{p},\frac{r}{p}\right],
  \qquad r=1,\ldots,p .
\]
Each block contains order \(k/p\) grid points.
Choose a fixed compactly supported smooth wavelet \(\psi\) with
\(\norm{\psi}_{L^2}=1\), and define
\[
  \phi_r(t)=p^{1/2}\psi(pt-r+1),
  \qquad r=1,\ldots,p,
\]
after a harmless translation of the supports inside \(I_r\).
The functions \(\phi_r\) have disjoint supports, are orthonormal in \(L^2\),
and satisfy
\[
  \norm{\phi_r}_{H^\alpha}\lesssim p^\alpha .
\]
Hence the submodel
\[
  f_\theta(t)=\sum_{r=1}^p \theta_r\phi_r(t),
  \qquad
  \abs{\theta_r}\lesssim p^{-\alpha-1/2},
\]
is contained in the Sobolev ball.
As in the independent design calculation, the product prior on these
coefficients has
\[
  \Tr J_\pi\asymp p^{2\alpha+2}.
\]

Let \(v_r=(\phi_r(t_1),\ldots,\phi_r(t_k))^\T\in\R^k\), and let
\(\Phi\) be the \(p\times k\) matrix with rows \(v_r^\T\).
By disjoint support and the grid Riemann sum estimate,
\[
  v_r^\T v_s=0\quad(r\ne s),
  \qquad
  \norm{v_r}_2^2\asymp k .
\]
Use the Gaussian submodel for the mean curve
\[
  X_i(t)
  =
  f_\theta(t)
  +
  \sum_{r=1}^p \eta_{ir}\phi_r(t),
  \qquad
  \eta_{ir}\stackrel{\mathrm{i.i.d.}}{\sim}\mca{N}(0,\tau_p^2),
  \qquad
  \tau_p^2=c/p ,
\]
with \(c>0\) fixed.
The random curve variation has bounded \(L^2\) energy since
\(\E\norm{X_i-f_\theta}_{L^2}^2=p\tau_p^2=c\).
On the common grid, the observed vector
\(Y_i=(Y_{i1},\ldots,Y_{ik})^\T\) satisfies
\[
  Y_i
  =
  \Phi^\T\theta+\Phi^\T\eta_i+\xi_i,
  \qquad
  \xi_i\sim\mca{N}(0,I_k),
\]
and hence
\[
  Y_i\mid\theta
  \sim
  \mca{N}(\Phi^\T\theta,C_p),
  \qquad
  C_p=I_k+\tau_p^2\Phi^\T\Phi
  =
  I_k+\tau_p^2\sum_{r=1}^p v_rv_r^\T .
\]
The covariance does not depend on \(\theta\), so the final Fisher
information matrix for one curve in the coefficient parameter is
\begin{equation}
  \label{eq:app-functional-common-one-curve-information}
  I_p^{\mathrm{com}}(\theta)
  =
  \Phi C_p^{-1}\Phi^\T .
\end{equation}
The vectors \(v_r\) are mutually orthogonal.
Therefore the Sherman--Morrison formula applied on each support block gives
\[
  v_r^\T C_p^{-1}v_r
  =
  \frac{\norm{v_r}_2^2}{
    1+\tau_p^2\norm{v_r}_2^2
  }
  \lesssim
  \frac{k}{1+k/p}
  \lesssim p,
  \qquad r=1,\ldots,p,
\]
and \(v_r^\T C_p^{-1}v_s=0\) for \(r\ne s\).
Consequently the information matrix in
\cref{eq:app-functional-common-one-curve-information} is explicitly
\begin{equation}
  \label{eq:app-functional-common-one-curve-information-diagonal}
  I_p^{\mathrm{com}}(\theta)
  =
  \operatorname{diag}
  \left(
    \frac{\norm{v_1}_2^2}{1+\tau_p^2\norm{v_1}_2^2},
    \ldots,
    \frac{\norm{v_p}_2^2}{1+\tau_p^2\norm{v_p}_2^2}
  \right)
  \preceq CpI_p .
\end{equation}
Consequently,
\begin{equation}
  \label{eq:app-functional-common-one-curve-information-bound}
  \norm{I_p^{\mathrm{com}}(\theta)}_{\mathrm{op}}\lesssim p,
  \qquad
  \Tr I_p^{\mathrm{com}}(\theta)\lesssim p^2 .
\end{equation}
After dividing the transcript information terms by \(p^2\), client \(l\)'s
contribution is
\[
  n_l
  \wedge
  \frac{\rho_l n_l^2}{p}.
\]
The public grid also leaves the usual discretization obstruction
\(k^{-2\alpha}\): one may put an \(H^\alpha\)-admissible bump between adjacent
grid points, or use wavelet levels above the grid resolution, so that two mean
functions agree on the public grid while their squared \(L^2\)-distance is of
order \(k^{-2\alpha}\).
Therefore
\[
  \inf_{\hat f\in\mca{M}_{\bm{\rho}}}
  \sup_{f\in H^\alpha(R)}
  \E_f\norm{\hat f-f}_{L^2}^2
  \gtrsim
  k^{-2\alpha}
  +
  \sup_{1\le p\le k}
  \left\{
    p^{2\alpha}
    +
    \sum_{l=1}^m
    \left(
      n_l
      \wedge
      \frac{\rho_l n_l^2}{p}
    \right)
  \right\}^{-1}.
\]

For homogeneous clients, taking \(p=1\) gives the \(1/(mn)\) component when the
raw sample size term is active.
The privacy component is obtained by balancing
\[
  p^{2\alpha}
  \asymp
  \frac{\rho m n^2}{p},
  \qquad
  p^{2\alpha+1}\asymp \rho m n^2,
\]
which yields
\[
  (\rho m n^2)^{-2\alpha/(2\alpha+1)}.
\]
If the optimizing \(p\) exceeds the grid resolution \(k\), the discretization
term \(k^{-2\alpha}\) is active instead.
This proves the rate displayed in
\cref{eq:app-functional-common-homogeneous-rate}.

\end{document}